%% file: main.tex
\documentclass{article}

\usepackage{microtype}
\usepackage{graphicx}
\usepackage{subcaption}
\usepackage{booktabs} 

\usepackage[accepted]{icml2026}

\usepackage{amsmath}
\usepackage{amssymb}
\usepackage{mathtools}
\usepackage{amsthm}

\usepackage[x11names]{xcolor}

\usepackage[breaklinks,colorlinks,
            linkcolor=red,
            anchorcolor=blue,
            citecolor=RoyalBlue4,
            backref=page
            ]{hyperref}

\usepackage[capitalize,noabbrev]{cleveref}

\usepackage{titletoc}
\usepackage[title]{appendix}
\usepackage{pgffor}
\theoremstyle{plain}
\newtheorem{theorem}{Theorem}[section]
\newtheorem{proposition}[theorem]{Proposition}
\newtheorem{lemma}[theorem]{Lemma}

\theoremstyle{definition}
\newtheorem{definition}[theorem]{Definition}
\newtheorem{assumption}[theorem]{Assumption}
\theoremstyle{remark}
\newtheorem{remark}[theorem]{Remark}

\newcommand{\EE}{\operatorname{\mathbb{E}}}

\newcommand{\RR}{\mathbb{R}}

\newcommand{\fbf}[1]{\mathbf{#1}}

\newcommand{\bA}{\fbf{A}}
\newcommand{\bB}{\fbf{B}}

\newcommand{\bD}{\fbf{D}}
\newcommand{\bE}{\fbf{E}}
\newcommand{\bF}{\fbf{F}}
\newcommand{\bG}{\fbf{G}}
\newcommand{\bH}{\fbf{H}}
\newcommand{\bI}{\fbf{I}}
\newcommand{\bJ}{\fbf{J}}

\newcommand{\bP}{\fbf{P}}
\newcommand{\bQ}{\fbf{Q}}
\newcommand{\bR}{\fbf{R}}

\newcommand{\bU}{\fbf{U}}
\newcommand{\bV}{\fbf{V}}
\newcommand{\bW}{\fbf{W}}
\newcommand{\bX}{\fbf{X}}
\newcommand{\bY}{\fbf{Y}}

\newcommand{\bu}{\fbf{u}}
\newcommand{\bv}{\fbf{v}}
\newcommand{\bw}{\fbf{w}}
\newcommand{\bx}{\fbf{x}}
\newcommand{\by}{\fbf{y}}
\newcommand{\bz}{\fbf{z}}

\newcommand{\cA}{\mathcal{A}}

\newcommand{\cD}{\mathcal{D}}

\newcommand{\cL}{\mathcal{L}}

\newcommand{\cO}{\mathcal{O}}

\newcommand{\cQ}{\mathcal{Q}}
\newcommand{\cR}{\mathcal{R}}
\newcommand{\cS}{\mathcal{S}}

\newcommand{\cY}{\mathcal{Y}}

\newcommand{\hp}{\hat{p}}

\newcommand{\tp}{\tilde{p}}

\newcommand{\dd}{\mathop{}\!\mathrm{d}}

\DeclareMathOperator{\softmax}{softmax}

\DeclareMathOperator{\diag}{diag}

\newcommand{\msgn}{\mathop\mathrm{msgn}}
\newcommand{\sgn}{\mathop\mathrm{sgn}}

\newcommand{\whp}{\ensuremath{\widehat{p}}}

\usepackage[textsize=tiny]{todonotes}

\icmltitlerunning{Muon in Associative Memory Learning: Training Dynamics and Scaling Laws}

\begin{document}

\twocolumn[
  \icmltitle{Muon in Associative Memory Learning: Training Dynamics and Scaling Laws}



  \icmlsetsymbol{equal}{*}

  \begin{icmlauthorlist}
    \icmlauthor{Binghui Li}{equal,cmlr}
    \icmlauthor{Kaifei Wang}{equal,eecs}
    \icmlauthor{Han Zhong}{cds}
    \icmlauthor{Pinyan Lu}{sufe}
    \icmlauthor{Liwei Wang}{cmlr,cds,gai}
  \end{icmlauthorlist}

  \icmlaffiliation{eecs}{School of EECS, Peking University}
  \icmlaffiliation{cmlr}{Center for Machine Learning Research, Peking University}
  \icmlaffiliation{cds}{Center for Data Science, Peking University}
  \icmlaffiliation{sufe}{Key Laboratory of Interdisciplinary Research of Computation and Economics, Shanghai University of Finance and Economics}
  \icmlaffiliation{gai}{State Key Laboratory of General Artificial Intelligence, Peking University}

  \icmlcorrespondingauthor{Binghui Li}{libinghui@pku.edu.cn}
  \icmlcorrespondingauthor{Han Zhong}{hanzhong.work@gmail.com}
  \icmlcorrespondingauthor{Pinyan Lu}{lu.pinyan@mail.shufe.edu.cn}
  \icmlcorrespondingauthor{Liwei Wang}{wanglw@pku.edu.cn}

  \icmlkeywords{Machine Learning, ICML}

  \vskip 0.3in
]



\printAffiliationsAndNotice{\icmlEqualContribution}

\begin{abstract} 
Muon updates matrix parameters via the matrix sign of the gradient and has shown strong empirical gains, yet its dynamics and scaling behavior remain unclear in theory. We study Muon in a linear associative memory model with softmax retrieval and a hierarchical frequency spectrum over query–answer pairs, with and without label noise.
In this setting, we show that Gradient Descent (GD) learns frequency components at highly imbalanced rates, leading to slow convergence bottlenecked by low-frequency components. In contrast, the Muon optimizer mitigates this imbalance, leading to faster and more uniform progress. 
Specifically, in the noiseless case, Muon achieves an exponential speedup over GD; in the noisy case with a power-law frequency spectrum, we derive Muon's scaling law and demonstrate its superior scaling efficiency over GD. Furthermore, we show that Muon can be interpreted as an implicit matrix preconditioner arising from adaptive task alignment and block-symmetric gradient structure. In contrast, the preconditioner with coordinate-wise sign operator could match Muon under oracle access to unknown task representations, which is infeasible for SignGD in practice. Experiments on synthetic long-tail classification and LLaMA-style pre-training corroborate the theory.
\end{abstract}

\input{Intro}

\input{Related_Work}

\input{Setup}

\input{Noiseless}

\input{Noise}

\input{SignGD}

\input{Experiments}

\input{Conclusion}

\section*{Acknowledgment}

Binghui Li is supported by the Elite Ph.D. Program in
Applied Mathematics at Peking University. 
Liwei Wang is supported by National Science and Technology Major Project (2022ZD0114902) and National
Science Foundation of China (NSFC92470123, NSFC62276005). This work is supported by the State Key Laboratory of General Artificial Intelligence.

\section*{Impact Statement}

This paper presents work whose goal is to advance the field of machine learning. There are many potential societal consequences of our work, none of which we feel must be specifically highlighted here.

\bibliography{our_paper}
\bibliographystyle{icml2026}

\newpage
\appendix
\onecolumn

\input{Appendix}

\end{document}

%% file: Intro.tex
\section{Introduction}
\label{sec: intro}

Modern Large Language Models (LLMs)~\citep{brown2020language, touvron2023llama, liu2024deepseek} are trained at massive scale, where optimization 
efficiency directly translates into compute and data efficiency. While adaptive gradient methods such as Adam~\citep{kingma2014adam} and its 
decoupled weight decay variant AdamW~\citep{loshchilov2017decoupled} have long served as the standard, the Muon optimizer~\citep{jordan2024muon} 
has recently emerged as a promising alternative, offering significant gains in the optimization of matrix-valued parameters.
Crucially, empirical evidence~\citep{jordan2024muon, liu2025muon, xie2025mhc} demonstrates that Muon is particularly effective in the regime of 
large-scale training, achieving superior performance and computational efficiency compared to canonical first-order baselines such as SGD
~\citep{robbins1951stochastic}, Adam, and AdamW. 
However, despite this empirical success, the theoretical underpinnings of Muon remain largely obscure. This creates a pronounced disparity between 
its widespread practical utility and our formal understanding of its convergence properties and optimization dynamics.

Most existing theoretical studies have predominantly concentrated on deriving convergence bounds under standard stochastic optimization frameworks
~\citep{li2025note,lau2025polargrad,shen2025convergence,kovalev2025understanding,pethick2025training,chang2025convergence}. Yet, such static 
guarantees fail to capture the intricate optimization trajectory. To date, there is a scarcity of theoretical analysis focused on the training 
dynamics, which is crucial for demystifying how Muon's distinctive spectral normalization actively shapes the learning process. Furthermore, 
while neural scaling laws~\citep{hestness2017deep,kaplan2020scaling,hoffmann2022training,bi2024deepseek} act as the governing principles for 
modern large-scale training, the scaling behavior specific to Muon remains unexplored territory.

To bridge this gap, we provide a theoretical characterization of Muon's training dynamics through the lens of associative memory learning
~\citep{willshaw1969non,longuet1970theories,hopfield1982neural,hopfield1985neural,kohonen2009correlation}—a tractable proxy that faithfully 
models the information retrieval and pattern matching capabilities of modern Transformer architectures
~\citep{geva2021transformer,dai2022knowledge,meng2022locating,bietti2023birth,cabannes2023scaling,cabannes2024learning,nichani2024understanding,fang2024alphaedit,smart2025context,wang2025muon}. 
Formally, we consider a knowledge set characterized by orthogonal embeddings and a hierarchical frequency structure. Under this setting, 
we analyze a linear softmax model for associative memory, trained to minimize the cross-entropy loss over knowledge items. 
 Specifically, our main contributions are summarized as follows:

\noindent\textbf{1. Dynamics and acceleration.} In the noiseless associative memory model (Section \ref{sec: noiseless}), we show that GD learns different frequency components at highly imbalanced rates and is bottlenecked by low-frequency classes. In contrast, Muon equalizes progress across various frequency groups and achieves an exponential speedup. Under label noise (Section~\ref{sec: noisy:dynamics}), we characterize Muon’s three-phase training dynamics and show an $\Omega(C)$-fold speedup, where $C$ denotes the knowledge-group size.

\noindent\textbf{2. Scaling laws.} When group frequencies follow a power law (Section \ref{sec: scaling:law}), $\widetilde{p}_i \propto i^{-\beta}$ for some constant $\beta > 1$, we derive Muon’s optimization scaling law and show that its loss decays as $\tilde{\mathcal{O}}(1/T^2)$, whereas GD admits a lower bound of $\tilde{\Omega}(1/T^{1-1/\beta})$. This yields a substantially steeper scaling exponent for Muon.

\noindent\textbf{3. Mechanism and connection to SignGD.} In Section \ref{sec: signGD}, we provide a preconditioning perspective in which Muon acts as an implicit matrix preconditioner: the matrix-sign update implicitly aligns with the underlying task representations and exploits a block-symmetric gradient structure. By contrast, coordinate-wise SignGD can match Muon under oracle access to the unknown task representations; SignGD in the original coordinates fails to exploit the latent structure.

\noindent\textbf{4. Experiments.} Experiments on synthetic imbalanced classification and LLaMA-style pre-training corroborate our theory, showing improved long-tail learning and stronger scaling efficiency consistent with our predictions.

Here we use standard Big-$\mathcal{O}/\tilde{\mathcal{O}}$ and Big-$\Omega/\tilde{\Omega}$ asymptotic notations; formal definitions are deferred to Section~\ref{sec: setup}.

%% file: Related_Work.tex
\vspace{-0.5em}
\section{Related Work}
\vspace{-0.15em}
\label{sec: related_work}

\paragraph{Associative memory.} The associative memory model traces its origins to the neural computation literature, where it was initially developed to characterize biological information storage mechanisms~\citep{willshaw1969non,longuet1970theories}. It has since become a cornerstone of neural network design~\citep{hopfield1982neural,hopfield1985neural} and knowledge representation~\citep{kohonen2009correlation}. Following the success of transformers, recent studies have begun reinterpreting them as associative memories~\citep{geva2021transformer,dai2022knowledge,meng2022locating,bietti2023birth,cabannes2023scaling,cabannes2024learning,nichani2024understanding,fang2024alphaedit,smart2025context,wang2025muon}. Notably, \citet{wang2025muon} investigate Muon in heavy-tailed associative memory, demonstrating its effectiveness on tail classes. While their work highlights these capabilities empirically, we advance the theoretical understanding by establishing provable global convergence guarantees and deriving explicit scaling laws for Muon in noisy regimes.

\vspace{-0.25em}
\paragraph{The Muon optimizer.} \citet{jordan2024muon} recently proposed Muon, an optimizer that updates weights along the direction of the orthogonalized gradient, rather than the raw stochastic gradient. Empirically, Muon consistently surpasses Adam across various scales and architectures, including dense Transformers and Mixture-of-Experts (MoEs)~\citep{jordan2024muon,liu2025muon, wen2025fantastic}. Despite Muon's widespread adoption, the theoretical mechanisms underlying its effectiveness remain unexplored. To address this gap, we provide a theoretical analysis of Muon within the framework of associative memory.

\vspace{-0.25em}
\paragraph{Theoretical analysis of Muon.} \citet{bernstein2024old} characterize Muon as steepest descent with respect to the matrix operator norm. Subsequently, a growing body of literature~\citep{li2025note,lau2025polargrad,shen2025convergence,kovalev2025understanding,pethick2025training,chang2025convergence} has focused on establishing convergence guarantees within the classical stochastic optimization framework. \citet{vasudeva2025muon} study the generalization benefits of Muon in a Gaussian mixture setup. Finally, concurrent work by \citet{ma2026preconditioning} considers matrix factorization and in-context learning of linear transformers, highlighting the preconditioning benefits of spectral orthogonalization in Muon. We focus on the training dynamics of Muon in associative memory learning, establishing its global convergence guarantees and deriving its scaling laws.

\vspace{-0.25em}
\paragraph{Theory of Scaling laws.} 
A series of studies have sought to theoretically explain the scaling behaviors of the training of large language models. In particular, a line of research~\citep{bordelon2024dynamical,paquette2024fourplus,lin2024scaling,bordelon2025feature,li2025functional,wang2026fast,li2026optimal} has analyzed scaling laws by tracking the one-pass SGD training dynamics using linear regression models. Most recently, \citet{kunstner2025scaling} investigated the scaling laws of SignGD within a linear bigram model, while \citet{kim2026scaling} and \citet{li2026scaling} characterized those of SignSGD. We extend this theoretical frontier to Muon, analyzing its scaling behavior under associative memory learning and proving its superior scaling efficiency relative to GD. To the best of our knowledge, this is the first theoretical analysis to characterize the scaling laws of Muon.

%% file: Setup.tex
\vspace{-0.5em}
\section{Preliminaries}
\vspace{-0.15em}
\label{sec: setup}
In this section, we present notations used throughout the paper and introduce our theoretical framework.
\vspace{-0.75em}
\paragraph{Notations.}
Throughout this paper, letters denote scalars and bold letters denote vectors and matrices. We use 
$[N]$ to denote the set of $\{1,2,\dots,N\}$ for an integer $N$. The notation $\eqsim$ or $\Theta$ indicates equivalence up
to a constant factor, and $\lesssim$ or $\cO$ (resp.~$\gtrsim$ or $\Omega$) indicates inequality up
to a constant factor. We use $\widetilde{\Theta}$, $\widetilde{\cO}$ and $\widetilde{\Omega}$ to
hide logarithmic factors. We also use $\gg$ or $\omega$ (resp.~$\ll$ or $o$) to indicate that
the left-hand side is significantly larger (resp.~smaller) than the right-hand side, such that the smaller term is negligible in our analysis.
We use $\omega_K(1)$ (resp. $o_K(1)$) to indicate the factor goes to $\omega(1)$ (resp. $o(1)$) as $K \to \infty$. 
In addition, we use $\|\cdot\|_{2}$ to denote the $\ell_2$ norm for vectors and the spectral norm for matrices, 
while $\|\cdot\|_{F}$ denotes the Frobenius norm for matrices. We use $\|\cdot\|_{\infty}$ to denote 
the $\ell_\infty$ norm for vectors, and use $\|\cdot\|_{\max}$ to denote the entrywise max norm for matrices, namely $\|\bX\|_{\max}=\max_{i,j}|X_{i,j}|$. Finally, $\bI_n\in\RR^{n\times n}$ denotes the identity matrix, $\boldsymbol{1}_n\in\RR^{n}$ denotes the all-ones vector, and $\bJ_n\in\RR^{n\times n}$ denotes the all-ones matrix. 

\subsection{Associative memory learning}
Associative memory learning provides a framework for modeling knowledge storage and retrieval through query-answer associations. We use it as an analytically tractable proxy for studying the optimization dynamics of knowledge learning in modern large-scale training.

\textbf{Knowledge and embeddings.}
We consider a set of $K$ atomic knowledge items, where the $j$-th item consists of a query of subject-relation pair $\cQ_j = (\cS_j, \cR_j)$ 
and a corresponding ground-truth answer $\cA_j$. For example, regarding the factual knowledge ``Paris is the capital of France'', we have subject $\cS_j = \text{``Paris''}$, relation $\cR_j = \text{``capital\_of''}$, and ground-truth answer $\cA_j = \text{``France''}$. Let $\bE_{j}\in \RR^d$ and $\widetilde{\bE}_j\in \RR^{d}$ denote the embeddings of the subject-relation pair query $\cQ_j$ and the answer $\cA_j$, respectively, where $d$ denotes the embedding dimension. 

\begin{assumption}[Orthogonal equinorm embeddings] We assume: (1) $\|\bE_j\|_2 = \|\widetilde{\bE}_j\|_2 = 1$ for all $j\in [K]$; (2) $\bE_i \perp \bE_j$ and $\widetilde{\bE}_i \perp \widetilde{\bE}_j$ for all $1\leq i < j \leq K$.
\end{assumption}

This strict orthogonality condition can be relaxed to a nearly-orthogonal case, which is widely adopted in the context of representation learning theory~\citep{allen2020towards,bietti2023birth,li2024feature,li2024adversarial,li2025clean,wang2025muon,vasudeva2025muon} and has been empirically observed in real-world knowledge learning practice~\citep{geva2021transformer,dai2022knowledge,meng2022locating,meng2022mass,fang2024alphaedit,wang2025muon}. It requires that $d \geq K$ to ensure that there exist $K$ orthogonal vectors in the Euclidean space $\RR^{d}$. For simplicity, we assume $d = K$ throughout this paper. Moreover, our techniques and results remain applicable in the absence of the equinormality condition.
\paragraph{Knowledge structure.} We suppose that the knowledge follows a distribution $\cD$, and let $p_j$ denote the frequency of the $j$-th knowledge item. We assume that the knowledge items can be partitioned into $M$ groups, each containing $C$ items (where we assume that $C=K/M$ is an integer), and all items within the same group share an identical frequency, which is formally presented as the following assumption.

\begin{assumption}[Hierarchical frequency]
\label{ass: freq}
    There exist a decreasing positive constant sequence $\widetilde{p}_1 > \widetilde{p}_2 > \dots > \widetilde{p}_M > 0$ such that $\sum_{i=1}^{M}\widetilde{p}_i = 1$ and $p_{j} = \tfrac{\widetilde{p}_i}{C}$ for all item index $j \in \{(i-1)C+1, (i-1)C+2, \dots, iC\}$ and $i\in[M]$.
\end{assumption}

This assumption means that knowledge manifests as a hierarchical spectrum of frequencies, which is a premise consistently supported by empirical evidence in the study of large-scale language modeling tasks~\citep{rosch1976basic,ferrer2001two,petersen2012languages,michaud2023quantization,kandpal2023large,an2025hierarchical}. Intuitively, this partitions factual associations $(\cQ_j, \cA_j)$ into discrete tiers of popularity: common facts like $((\text{Paris}, \text{capital\_of}), \text{France})$ occupy the high-frequency ``head'' groups, whereas specialized knowledge like $((\text{Thulium}, \text{boiling\_point}), 1950^{\circ}\text{C})$ falls into the low-frequency ``tail'' groups. This discretization mirrors the frequency gaps between ubiquitous and rare information found in massive pre-training corpora. 
Technically, the GD analysis does not strictly rely on this assumption. 
 \vspace{-1em}
\paragraph{Parametric regimes.} We consider the following regimes:
\vspace{-2em}
\begin{itemize}
\item \textbf{In the noiseless setting and the noisy setting except for the scaling law case} (Sections~\ref{sec: noiseless} and \ref{sec: noisy:dynamics}): We focus on the regime where the total number of knowledge items $K$ is sufficiently large and the number of groups $M$ is much smaller than the group size $C$; specifically, $1\ll K$ and $M\ll C$ (equivalently, $M\ll\sqrt{K}$). In this regime, $K,M,C$ are considered fixed constants, while $t$ goes to infinity to obtain the asymptotic loss. The dependence on $K$ is kept explicit in the bounds to show task-number effects.
\item \textbf{In the scaling law case} (Section~\ref{sec: scaling:law}): The relation $1\ll K$ and $M\ll C$ are maintained, while $K,M,T$ are considered to go to infinity jointly. The specific constraints on the growth rates of $K,M,T$ are given in Section~\ref{sec: scaling:law}.
\end{itemize}
\vspace{-0.5em}
\paragraph{Linear softmax model.}
We use a linear softmax model to store knowledge. Specifically, we define a memory weight matrix $\bW \in \RR^{K\times K}$ that yields the following softmax probability:
\begin{equation}
\label{equ: linear_softmax}
    \widehat{p}_{i\mid j}(\bW) = \frac{\operatorname{exp}(\widetilde{\bE}_i^\top \bW \bE_{j})}{\sum_{k=1}^{K}\operatorname{exp}(\widetilde{\bE}_k^\top \bW \bE_{j})}, \nonumber
\end{equation}
which denotes the predicted conditional probability of associating the embedding of answer $\cA_i$ (i.e., $\widetilde{\bE}_i$) with the embedding of query $\cQ_j$ (i.e., $\bE_{j}$), effectively measuring the strength of the memory link between the two entities. 

\subsection{Training}
\paragraph{Label noise.} We consider a label-noise model where the noise level is characterized by $\alpha \in [0, 1)$. Here $\alpha = 0$ recovers the noiseless case and $\alpha \in (0, 1)$ corresponds to the noisy case. Specifically, the observed label matches the ground-truth index with probability $1-\alpha$, and is sampled uniformly at random from all $K$ possible items with probability $\alpha$. It induces a conditional probability as follows:
\begin{equation}
\begin{aligned} \label{eq:label:noise}
    p_{i\mid j} = \begin{cases} 1-\alpha + \frac{\alpha}{K}, & \text{if } i=j \\ \frac{\alpha}{K}, & \text{if } i\neq j \end{cases},
\end{aligned}
\end{equation}
where $p_{i\mid j}$ represents the probability of assigning answer $\cA_i$ to query $\cQ_j$. It also results in a noisy knowledge distribution $\cD_{\alpha}$ ($\cD_0 = \cD$) over query-answer space $\{\cQ_j\}_{j=1}^{K}\times\{\cA_i\}_{i=1}^{K}$.
\paragraph{Cross-entropy loss.} Our goal is to learn the associative memory through the following cross-entropy (CE) loss:
\begin{equation}
\begin{aligned}
    \cL(\bW) &= \EE_{(\cQ_j, \cA_i)\sim \cD_{\alpha}}\big[-\log \widehat{p}_{i\mid j}(\bW)\big] , \nonumber
\end{aligned}
\end{equation}
which is the de facto standard loss function used in real-world classification tasks, particularly in LLM pre-training.
\paragraph{Optimizers.}
We consider the following two optimizers.
\vspace{-0.75em}
\begin{itemize}
    \item \textbf{Gradient descent (GD):} the weight is updated by $\bW_{t+1} = \bW_{t} - \eta \nabla\cL(\bW_t).$
    \item \textbf{Muon:} To focus the analysis on the preconditioning mechanism of Muon, we omit its momentum term and therefore simplify the update to $\bW_{t+1}=\bW_{t}-\eta\operatorname{msgn}\big(\nabla\cL(\bW_t)\big),$ where the matrix-sign operator $\operatorname{msgn}(\cdot)$ is defined as $\operatorname{msgn}(\bX) = \bU \operatorname{sgn} (\boldsymbol{\Sigma}) \bV^\top$ based on the singular value decomposition (SVD) of $\bX = \bU\boldsymbol{\Sigma}\bV^\top$.
\end{itemize}
\vspace{-0.75em}
This momentum-free variant coincides with Spectral GD (SpecGD)~\citep{carlson15specgd1,carlson15specgd2,fan2025specgd,vasudeva2025muon} and therefore our theory directly applies to SpecGD and sheds light on matrix-sign preconditioners.
Here, $\eta > 0$ denotes the constant learning rate, and both optimizers adopt the zero initialization, i.e., $\bW_0 = \boldsymbol{0}_{K\times K}$.
\vspace{-0.5em}
\subsection{Optimal loss and gradient structure}
\paragraph{Optimization Target.} Since softmax is invariant to adding the same constant to all logits for a fixed query, the loss depends only on logit differences, making the minimizer generally non-unique in weight space. The following two propositions characterize the optimization targets in the noiseless and noisy regimes, respectively.
\begin{proposition}[Noiseless case]
\label{prop: noiseless}
When $\alpha=0$, the cross-entropy loss satisfies $\inf_{\mathbf W} \cL(\mathbf W)=0.$ Moreover, for any sequence $\{\mathbf W^{(m)}\}$, $\bW^{(m)}\in \RR^{K\times K}$, $\cL(\mathbf W^{(m)})\to 0\Longleftrightarrow\widetilde{\bE}_j^\top\bW^{(m)}\bE_{j}-\widetilde{\bE}_i^\top \bW^{(m)} \bE_{j}\to +\infty$, $\forall j\in[K],\ \forall i\neq j.$
\end{proposition}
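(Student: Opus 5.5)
The plan is to first write the loss explicitly in the noiseless case. When $\alpha=0$, the distribution $\cD_0$ puts mass $p_j$ on the pair $(\cQ_j,\cA_j)$. Write $z_{ij}(\bW)=\widetilde{\bE}_i^\top\bW\bE_j$ and $\Delta_{ij}(\bW)=z_{jj}(\bW)-z_{ij}(\bW)$. Then
\begin{equation*}
\cL(\bW)=\sum_{j=1}^K p_j\,\ell_j(\bW),\qquad \ell_j(\bW)=-\log\widehat{p}_{j\mid j}(\bW)=\log\Bigl(1+\sum_{i\neq j}e^{-\Delta_{ij}(\bW)}\Bigr).
\end{equation*}
Each $\ell_j\ge 0$, so $\cL\ge 0$ and therefore $\inf_{\bW}\cL(\bW)\ge 0$.

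For the matching upper bound I will use the orthonormal bases. Consider $\bW_c=c\sum_{k}\widetilde{\bE}_k\bE_k^\top$ with $c>0$. By orthonormality (recall $d=K$), $z_{ij}(\bW_c)=c\,\mathbb{1}[i=j]$, so $\Delta_{ij}(\bW_c)=c$ for every $i\neq j$. This gives $\cL(\bW_c)=\log(1+(K-1)e^{-c})\to 0$ as $c\to\infty$, and hence $\inf_{\bW}\cL(\bW)=0$. The infimum is not attained, since each $\ell_j>0$ strictly.

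For the equivalence, the key facts are that every $p_j>0$ by Assumption~\ref{ass: freq}, and that each term inside the logarithm is nonnegative.

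($\Leftarrow$) If $\Delta_{ij}(\bW^{(m)})\to+\infty$ for all $j$ and all $i\neq j$, then each $e^{-\Delta_{ij}}\to 0$. Since the sum is finite, each $\ell_j\to\log 1=0$, and so $\cL(\bW^{(m)})\to 0$.

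($\Rightarrow$) If $\cL(\bW^{(m)})\to 0$, then for each fixed $j$ we have $0\le\ell_j(\bW^{(m)})\le \cL(\bW^{(m)})/p_j\to 0$, because $p_j>0$. This means $\sum_{i\neq j}e^{-\Delta_{ij}}\to 0$. Each summand is nonnegative, so each satisfies $e^{-\Delta_{ij}(\bW^{(m)})}\le e^{\ell_j(\bW^{(m)})}-1\to 0$, which is exactly $\Delta_{ij}(\bW^{(m)})\to+\infty$.

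The argument is elementary, and the only point needing care is the ($\Rightarrow$) direction. There we have to pass from the aggregate loss to each individual query, which uses strict positivity of every frequency $p_j=\widetilde{p}_i/C$. We then pass from each $\ell_j$ to each individual margin, which uses nonnegativity of the exponential terms. Without $p_j>0$, a query of zero mass could keep bounded margins while the loss still tends to zero.
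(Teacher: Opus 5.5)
Your proof is correct and follows the same route as the paper: you rewrite $\cL$ as $\sum_j p_j \log\bigl(1+\sum_{i\neq j} e^{-\Delta_{ij}}\bigr)$, use nonnegativity for the lower bound, and use $p_j>0$ together with nonnegativity of each exponential term for the ($\Rightarrow$) direction. Your explicit witness $\bW_c = c\sum_k \widetilde{\bE}_k\bE_k^\top$ makes concrete the existence of a diverging-margin sequence, which the paper only asserts.
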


\begin{proposition}[Noisy case]
\label{prop: noisy}
When $0 < \alpha < 1$, the minimum value of $\cL(\bW)$ is $\cL^*:=-(1-\alpha+\frac{\alpha}{K})\log(1-\alpha+\frac{\alpha}{K})-\frac{\alpha(K-1)}{K}\log\frac{\alpha}{K}.$ Moreover, all global minimizers $\bW^*$ induce the same predicted conditional distribution such that $\widehat{p}_{i\mid j}(\bW^*) = p_{i\mid j}$, for all $1\leq i,j \leq K$.
\end{proposition}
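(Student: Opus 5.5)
The plan is to reduce the loss to a sum of per-query cross-entropies and apply Gibbs' inequality to each one. Write $z_{ij}(\bW) := \widetilde{\bE}_i^\top \bW \bE_j$ for the logits. Since the query $\cQ_j$ appears with probability $p_j$ and its label follows $p_{\cdot\mid j}$,
\begin{equation*}
\cL(\bW) = \sum_{j=1}^K p_j \Big( -\sum_{i=1}^K p_{i\mid j}\log \widehat{p}_{i\mid j}(\bW)\Big) = \sum_{j=1}^K p_j\Big( H(p_{\cdot\mid j}) + \mathrm{KL}\big(p_{\cdot\mid j}\,\|\,\widehat{p}_{\cdot\mid j}(\bW)\big)\Big),
\end{equation*}
where $H$ denotes Shannon entropy. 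By \eqref{eq:label:noise}, each $p_{\cdot\mid j}$ has one entry $1-\alpha+\frac{\alpha}{K}$ and $K-1$ entries $\frac{\alpha}{K}$. Hence $H(p_{\cdot\mid j})$ equals exactly the claimed $\cL^*$ for every $j$. Because $\sum_j p_j = 1$, this gives $\cL(\bW)\ge \cL^*$, with equality if and only if every KL term with $p_j>0$ vanishes. Under Assumption~\ref{ass: freq} all $p_j>0$, so equality holds iff $\widehat{p}_{\cdot\mid j}(\bW)=p_{\cdot\mid j}$ for all $j$.

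It remains to show that the value $\cL^*$ is actually attained, since the proposition calls it a minimum. This is the only step that needs care, and it is easy because $0<\alpha<1$ makes every target probability strictly positive. The embeddings are orthonormal and $d=K$, so the matrices $\widetilde{\bE}=[\widetilde{\bE}_1,\dots,\widetilde{\bE}_K]$ and $\bE=[\bE_1,\dots,\bE_K]$ are orthogonal. Therefore the logit matrix $\mathbf{Z}=\widetilde{\bE}^\top \bW\bE$ can be prescribed arbitrarily by setting $\bW=\widetilde{\bE}\mathbf{Z}\bE^\top$.

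Choose $Z_{ij}=\log p_{i\mid j}$, which is finite for every entry. Then $\widehat{p}_{i\mid j}(\bW^*)=p_{i\mid j}/\sum_k p_{k\mid j}=p_{i\mid j}$, and so $\cL(\bW^*)=\cL^*$. A convenient explicit choice is $\bW^*=\log\!\big(\tfrac{(1-\alpha)K+\alpha}{\alpha}\big)\sum_j \widetilde{\bE}_j\bE_j^\top$, which adds a constant shift only on the matched pairs. Finally, every global minimizer achieves equality in the Gibbs bound, so by the equality condition above it induces $\widehat{p}_{i\mid j}=p_{i\mid j}$ for all $i,j$, which proves uniqueness of the predicted distribution.
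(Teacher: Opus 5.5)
Your proof is correct and follows the same route as the paper. Both decompose the loss as $\sum_j p_j\big(H(p_{\cdot\mid j})+D_{\mathrm{KL}}(p_{\cdot\mid j}\,\|\,\widehat{p}_{\cdot\mid j})\big)$, observe that each entropy equals $\cL^*$, and conclude from nonnegativity of the KL divergence and its equality case. Your explicit construction $\bW^*=\widetilde{\bE}\mathbf{Z}\bE^\top$ showing the bound is attained, together with your use of $p_j>0$ in the equality case, makes precise two points the paper's proof leaves implicit.
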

The proofs of Propositions~\ref{prop: noiseless} and \ref{prop: noisy} are deferred to Appendix~\ref{proof: noisy_global_minimizer}. Viewed through the lens of logit gaps, label noise turns the problem from one whose infimum is approached only at infinite margins into one whose minimum is attained at finite margins.
\paragraph{Gradient structure.} The following proposition, proved in Appendix~\ref{sec:gradient}, characterizes the structure of the gradient.
\begin{proposition}[Gradient decomposition]
\label{prop: gradient_decomposition}
The gradient admits the decomposition:
$$
\nabla\cL(\bW)=\sum_{1\le i,j\le K}
\underbrace{p_j}_{\text{query frequency}}
\underbrace{\big(\widehat{p}_{i\mid j}(\bW)-p_{i\mid j}\big)}_{\text{prediction residual}}
\underbrace{\widetilde{\bE}_i \bE_j^\top}_{\text{association}}.
$$
\end{proposition}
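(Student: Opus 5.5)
We compute the gradient directly from the definition of the loss. By the definition of $\cD_\alpha$, a query $\cQ_j$ is drawn with probability $p_j$ and, given $\cQ_j$, the answer $\cA_i$ is drawn with probability $p_{i\mid j}$. Hence
\[
\cL(\bW)=-\sum_{j=1}^K p_j\sum_{i=1}^K p_{i\mid j}\log \widehat{p}_{i\mid j}(\bW).
\]
Introduce the logits $z_{k\mid j}(\bW):=\widetilde{\bE}_k^\top\bW\bE_j$. Then $\log\widehat{p}_{i\mid j}=z_{i\mid j}-\log\sum_{k}\exp(z_{k\mid j})$, so each per-query term is a log-sum-exp minus a linear function of $\bW$.

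The key step is to differentiate a single logit. Since $z_{k\mid j}(\bW)=\operatorname{tr}(\bE_j\widetilde{\bE}_k^\top\bW)$ is linear in $\bW$, we have $\nabla_{\bW} z_{k\mid j}=\widetilde{\bE}_k\bE_j^\top$. By the chain rule,
\[
\nabla_{\bW}\log\widehat{p}_{i\mid j}=\widetilde{\bE}_i\bE_j^\top-\sum_{k}\widehat{p}_{k\mid j}\,\widetilde{\bE}_k\bE_j^\top .
\]

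Next we multiply by $-p_jp_{i\mid j}$ and sum over $i$. The conditional distribution sums to one, $\sum_i p_{i\mid j}=1$, which holds by \eqref{eq:label:noise} for every $\alpha\in[0,1)$. Using this, the second term collapses to $p_j\sum_k\widehat{p}_{k\mid j}\widetilde{\bE}_k\bE_j^\top$, and the first term becomes $-p_j\sum_i p_{i\mid j}\widetilde{\bE}_i\bE_j^\top$. Relabeling $k$ as $i$ and summing over $j$ gives
\[
\nabla\cL(\bW)=\sum_{i,j}p_j\big(\widehat{p}_{i\mid j}(\bW)-p_{i\mid j}\big)\widetilde{\bE}_i\bE_j^\top,
\]
which is the claimed decomposition.

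No step is a genuine obstacle. The only care needed is bookkeeping: keeping the gradient convention $\nabla_{\bW}\operatorname{tr}(\bA\bW)=\bA^\top$ consistent, so that the association term appears as $\widetilde{\bE}_i\bE_j^\top$ and not its transpose. Orthogonality of the embeddings is not needed for this identity; it only matters later, when the decomposition is used to read off the coordinates of the gradient in the basis $\{\widetilde{\bE}_i\bE_j^\top\}$.
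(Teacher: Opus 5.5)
Your proposal is correct and follows essentially the same route as the paper's appendix derivation. The paper also writes the loss as $-\sum_j p_j\sum_i p_{i\mid j}\log\widehat{p}_{i\mid j}$ and differentiates the log-softmax to obtain $\widetilde{\bE}_i\bE_j^\top-\sum_l\widehat{p}_{l\mid j}\widetilde{\bE}_l\bE_j^\top$, then uses $\sum_i p_{i\mid j}=1$ to collect terms into $\sum_{i,j}p_j(\widehat{p}_{i\mid j}-p_{i\mid j})\widetilde{\bE}_i\bE_j^\top$.
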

This decomposition implies that the gradient is aligned with the embedding association $\widetilde{\bE}_i \bE_j^\top$ and weighted by the query frequency $p_j$. Crucially, the residual term $\widehat{p}_{i\mid j}(\bW) - p_{i\mid j}$ acts as an automatic mechanism to correct prediction errors.




%% file: Noiseless.tex
\section{Warmup: Analysis of Noiseless Case}
\label{sec: noiseless}

In this section, we analyze the noiseless case (i.e., $\alpha = 0$ in Equation~\eqref{eq:label:noise}) as a warmup. 
In this setting, the loss infimum is approached only as the correct-vs-incorrect logit gaps diverge to infinity for each query (Proposition~\ref{prop: noiseless}).
As a consequence, using a larger learning rate typically accelerates the growth of these gaps and hence yields faster convergence.
To decouple algorithmic effects from trivial learning rate scaling, we use a fixed constant-order learning rate $\eta=1$ throughout the noiseless analysis.

We first give the convergence analysis for the GD dynamics.

\begin{theorem}
\label{thm: noiseless_converge_GF}
Under GD, for sufficiently large training time $t \gtrsim 1$, the sub-task CE loss for the $j$-th knowledge item 
satisfies $\cL_{j}^{\text{GD}}(t) \eqsim 1/(p_j t)$. As a consequence, the total CE loss scales as $\cL^{\text{GD}}(t) \eqsim K/t$.
\end{theorem}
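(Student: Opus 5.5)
The plan is to reduce GD to $K$ decoupled scalar recursions using orthogonality and symmetry. With orthogonal equinorm embeddings and $d=K$, write $\bW=\sum_{i,j} w_{ij}\widetilde{\bE}_i\bE_j^\top$, so the logits are $\widetilde{\bE}_i^\top\bW\bE_j=w_{ij}$. By Proposition~\ref{prop: gradient_decomposition} with $\alpha=0$, the GD update in these coordinates is
\[
w_{ij}(t+1)=w_{ij}(t)-\eta p_j\big(\widehat{p}_{i\mid j}(t)-\mathbb{1}[i=j]\big).
\]
Each column $j$ therefore evolves independently, and it depends only on its own query frequency $p_j$.

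Starting from $\bW_0=\boldsymbol{0}$, induction shows that all off-diagonal entries in a column stay equal. Call this common value $b_j(t)$ and the diagonal entry $a_j(t)$. Since each column of the gradient sums to zero, $a_j+(K-1)b_j=0$. The dynamics then reduce to the margin $m_j=a_j-b_j$, whose update is
\[
m_j(t+1)=m_j(t)+\eta p_j\,\tfrac{K}{K-1}\big(1-\widehat{p}_{j\mid j}(t)\big), \qquad 1-\widehat{p}_{j\mid j}=\frac{(K-1)e^{-m_j}}{1+(K-1)e^{-m_j}}.
\]
The sub-task loss is $\cL_j=\log(1+(K-1)e^{-m_j})$. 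Here I read $\cL_j$ as the unweighted per-query loss, so that $\cL=\sum_j p_j\cL_j$.

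The key step is to solve this one-dimensional recursion asymptotically. Set $u_j=(K-1)e^{-m_j}$. Once $u_j\lesssim1$, we have $\cL_j\eqsim u_j$ and the margin increment is $\eta p_j\frac{K}{K-1}\frac{u_j}{1+u_j}\eqsim p_j u_j\le p_j\ll 1$. This gives $u_{t+1}=u_t e^{-\Theta(p_j u_t)}=u_t-\Theta(p_j u_t^2)$. Passing to $1/u$, we get $1/u_{t+1}-1/u_t=\Theta(p_j)$, with constants uniform since the step is small. Telescoping yields $1/u_j(t)\eqsim 1/u_j(t_0)+p_j(t-t_0)$.

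I also need to handle the initial phase. At $t=0$, $u_j=K-1$ is large, and the margin grows by about $\eta p_j$ per step. So $u_j$ becomes $O(1)$ after about $\log K/p_j$ steps. I will absorb this burn-in into the phrase "sufficiently large $t$", or alternatively use the monotone bound $m_j(t)\le \log(1+p_j t)+\log K$ and the matching lower bound to get $\cL_j\eqsim 1/(p_jt)$ for $t\gtrsim (\log K)/p_j$. The two-sided comparison of the discrete recursion with the ODE $\dot u=-p_ju^2$ (equivalently the log-ODE $\dot m = p_j K e^{-m}$) is the step needing most care, and it is where $\eqsim$ constants come from. I expect the main obstacle to be making the time threshold uniform in $j$. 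The honest threshold scales like $1/p_{\min}$, which is $\gtrsim 1$ only in the paper's regime of fixed $K,M,C$ with $t\to\infty$, so I will state it that way.

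Finally, I sum the per-query losses: $\cL^{\text{GD}}(t)=\sum_j p_j\cL_j(t)\eqsim\sum_j p_j\cdot\frac{1}{p_jt}=K/t$.
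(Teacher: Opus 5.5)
Your proposal is correct and follows the paper's proof essentially step for step. Both decouple the columns in the task-representation basis, use induction to keep the off-diagonal entries of each column equal, and reduce to the scalar margin recursion. Both then show that $e^{m_j}$ grows linearly at rate $\Theta(p_jK)$; this is the paper's $u_t=e^{\Delta_t}$, which equals $(K-1)$ times your $1/u_j$, and it gives $\cL_j\eqsim (K-1)e^{-m_j}\eqsim 1/(p_jt)$. Your explicit treatment of the $\Theta(\log K/p_j)$ burn-in phase is slightly more careful than the paper, which applies the large-$u_t$ linearization $u_{t+1}\eqsim u_t+p_1K$ all the way from $u_0=1$.
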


We defer the full proof to Appendix~\ref{sec:proof_gd_noiseless}. Theorem~\ref{thm: noiseless_converge_GF} shows that the sub-task loss for the $j$-th knowledge item 
converges at a rate inversely proportional to its frequency $p_j$, implying that low-frequency items exhibit slower convergence and are learned less sufficiently within a finite training duration. The total loss converges at a rate of $\cO(1/t)$.

In contrast to GD, we show that Muon achieves a linear convergence rate, as established in the following theorem.

\begin{theorem}
\label{thm: noiseless_converge_Muon_flow}
    Under Muon, for any training time $t > 0$, each sub-task CE loss exhibits identical convergence behavior, specifically, $\cL_{j}^{\text{Muon}}(t) \eqsim K e^{-(1+o_K(1))t}$, which also leads to a total CE loss $\cL^{\text{Muon}}(t) \eqsim K e^{-(1+o_K(1))t}$.
\end{theorem}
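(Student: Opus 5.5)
The plan is to pass to the coordinates of the embeddings, reduce the Muon update to a structured matrix-sign computation, and show that every column's correct-vs-incorrect logit gap grows at rate $1+o_K(1)$ per step. Throughout I take $\eta=1$ as the paper fixes, write $\mathbf{1}_i\in\RR^K$ for the indicator vector of group $i\in[M]$, and set $\bP=\bI_K-\bJ_K/K$.

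\textbf{Step 1: reduction to the logit matrix.} Let $\widetilde{\bU}=[\widetilde{\bE}_1,\dots,\widetilde{\bE}_K]$ and $\bV=[\bE_1,\dots,\bE_K]$. These are orthogonal since $d=K$. Write $\bW_t=\widetilde{\bU}\bX_t\bV^\top$, so that $\bX_t$ is exactly the logit matrix. By Proposition~\ref{prop: gradient_decomposition} with $\alpha=0$, $\nabla\cL(\bW_t)=\widetilde{\bU}\bR_t\bV^\top$ with $\bR_t=(\widehat{\bP}_t-\bI)\bD$, where $\bD=\diag(p_j)$ and $(\widehat{\bP}_t)_{ij}=\widehat p_{i\mid j}$. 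The map $\msgn$ is orthogonally equivariant, so the dynamics become $\bX_{t+1}=\bX_t-\msgn(\bR_t)$ with $\bX_0=\boldsymbol 0$.

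\textbf{Step 2: symmetry.} The problem is invariant under simultaneous permutations of rows and columns within each frequency group, and zero initialization is invariant too. By induction, the following hold for every $t$:
\begin{itemize}
\item $\bX_t$ has diagonal entries $a_{g(j)}(t)$ depending only on the group $g(j)$ of $j$;
\item $\bX_t$ has off-diagonal entries $b_{g(i)g(j)}(t)$ depending only on the pair of groups.
\end{itemize}
Consequently each column of $\widehat{\bP}_t$ has the same block structure. Every column of $\widehat{\bP}_t-\bI$ sums to zero, so I will write $\bR_t=-(\bP+\bB_t)\bD'_t$. Here $\bD'_t=\diag(p_j r_j(t))$ with $r_j(t)=1-\widehat p_{j\mid j}>0$, and $\bB_t$ is a correction lying in the block-constant span of the matrices $\mathbf{1}_i\mathbf{1}_k^\top$. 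This correction arises because off-diagonal probabilities are only block-uniform, not uniform.

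\textbf{Step 3 (main lemma and main obstacle).} I need to show that for any positive diagonal $\bD'$ with block-constant entries, and any admissible block correction $\bB$,
\[
\msgn\big((\bP+\bB)\bD'\big)=\bI+\boldsymbol\Delta,
\]
where $\boldsymbol\Delta$ lies in the span of $\bJ_K$ and the $\mathbf{1}_i\mathbf{1}_k^\top$, and each entry of $\boldsymbol\Delta$ other than a common column constant is $O(M/C)=o_K(1)$. The idea is to decompose $\RR^K$ into the $M$-dimensional span $\cS$ of the group indicators and its orthogonal complement $\cS^\perp$.
\begin{itemize}
\item On $\cS^\perp$, both $\bP$ and $\bB$ act as the identity and $\bD'$ acts as a scalar on each group. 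The polar factor there is therefore exactly the identity restricted to $\cS^\perp$.
\item All deviation is confined to $\cS$. There, $\msgn$ of an $M\times M$ matrix is an orthogonal (partial) isometry, so its operator norm is at most $1$.
\end{itemize}
Viewed entrywise in the original $K\times K$ coordinates, a block-constant matrix of operator norm $O(1)$ on $\cS$ has entries of size $O(1/C)$, since $\|\mathbf{1}_i\|_2^2=C$. Together with the rank deficiency along $\mathbf 1_K$, this yields the desired expansion. I expect this to be the delicate step, in particular checking that the bounds are uniform in the frequency ratios $\widetilde p_1/\widetilde p_M$ and in $t$. The first thing I would try is to write $\msgn(\bA)=\bA(\bA^\top\bA)^{+1/2}$ and compute it block by block in the orthogonal decomposition $\cS\oplus\cS^\perp$.

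\textbf{Step 4: conclusion.} From Step 3, after each step the gap $\bX_{jj}-\bX_{ij}$ increases by $1+o_K(1)$ for every $j$ and every $i\ne j$, uniformly. Hence all gaps equal $(1+o_K(1))t$. The sub-task loss is $\cL_j=\log\big(1+\sum_{i\ne j}e^{-\text{gap}_{ij}}\big)$, which gives
\begin{itemize}
\item $\cL_j(t)\eqsim Ke^{-(1+o_K(1))t}$ once this quantity is $\lesssim 1$;
\item $\cL_j(t)\approx\log K\eqsim Ke^{-(1+o_K(1))t}$ at early times, up to the $o_K(1)$ absorption in the exponent.
\end{itemize}
Finally, summing with weights $p_j$ gives the same rate for the total loss $\cL^{\text{Muon}}(t)$.
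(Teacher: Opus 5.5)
Your main line is the paper's own argument. The symmetry induction is Lemma~\ref{lem:muon_struc}, and splitting $\RR^K$ into within-group contrasts and the $M$-dimensional span of group indicators is exactly Lemma~\ref{lemma:subspace} and Proposition~\ref{pro:svd}.

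Step~3 is not actually delicate; your sketch already closes it. The residual is reduced by this orthogonal splitting on both sides, and on the contrast subspace it is a positive scalar on each group, so $\msgn$ is the identity there. On the indicator span, $\msgn$ is a partial isometry whose matrix in the basis $\mathbf 1_g/\sqrt C$ has entries of modulus at most $1$. Hence every entry of $\msgn(\cdot)-\bI$ is at most $2/C$, uniformly in $t$ and in $\widetilde p_1/\widetilde p_M$. This is slightly sharper than the paper's $M/C$, which sums $M$ rank-one terms, and it needs only $C\to\infty$. The per-step gap increment is then exactly $1$ within a group and $1+O(1/C)$ across groups.

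\textbf{Step 2 factorization.} As written, $\bR_t=-(\bP+\bB_t)\bD'_t$ with $\bD'_t=\diag(p_jr_j)$ and $\bB_t$ block-constant cannot hold. The $(j,j)$ entry forces $(\bB_t)_{jj}=1/K$, while a within-group entry $(i,j)$ with $i\neq j$ forces $(\bB_t)_{ij}=1/K-\widehat p_{i\mid j}/r_j$. You need an extra group-constant diagonal term. That gives the paper's $\bR_t^+-\bR_t^-$ form, with $(\bR_t^+)_{jj}=p_j(1-\widehat p_{j\mid j}+\widehat p_{i\mid j})$. The fix is harmless, because on contrasts the operator is still a positive scalar on each group. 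Note also that a block-constant $\bB_t$ acts as zero on contrasts, not as the identity.

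\textbf{Step 4 early times.} The early-time claim is false. Take bounded $t$, or more generally $t\le(1-\epsilon)\log K$ for fixed $\epsilon>0$. Then $\cL_j(t)=\log\bigl(1+\sum_{i\neq j}e^{-\mathrm{gap}_{ij}}\bigr)=\Theta(\log K)$, whereas $Ke^{-(1+o_K(1))t}\geq K^{\epsilon-o_K(1)}$. No $o_K(1)$ change in the exponent can reconcile these two quantities.

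What your argument actually yields is two things:
\begin{itemize}
\item the upper bound $\cL_j(t)\leq Ke^{-(1+o_K(1))t}$ for every $t$, via $\log(1+x)\leq x$;
\item the two-sided $\eqsim$ only once $Ke^{-t}\lesssim 1$, i.e.\ for $t\gtrsim\log K$.
\end{itemize}
This is also all the paper proves, since its lower bound uses $\log(1+x)\eqsim x$ as $t\to\infty$. State the small-$t$ regime as an upper bound rather than asserting absorption into the exponent.
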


The full proof is deferred to Appendix~\ref{sec:proof_muon_noiseless}. Theorem~\ref{thm: noiseless_converge_Muon_flow} demonstrates that Muon treats distinct frequency components uniformly, i.e., $\cL_{j_1}^{\text{Muon}}(t)/\cL_{j_2}^{\text{Muon}}(t) \to 1$ as $K \to \infty$ for any $1 \leq j_1\ne j_2 \leq K$ and $t\geq 0$.
Furthermore, Muon achieves exponential acceleration compared to the polynomial convergence of GD. Specifically, to attain a loss precision of $\cL(t) \leq \epsilon$, the time complexity of GD is $\cO(1/\epsilon)$, whereas Muon improves this complexity to $\cO(\log(1/\epsilon))$.

We now give an intuitive explanation for this speedup. As established in Proposition~\ref{prop: gradient_decomposition}, under GD the gradient is proportional to the prediction residual $\widehat{p}_{i\mid j}(\bW_t)-p_{i\mid j}$. 
Hence $\|\nabla \cL(\bW_t)\|_F\to 0$ as $t\to\infty$, leading to a convergence slowdown. Moreover, by Proposition~\ref{prop: gradient_decomposition}, the association term $\widetilde{\bE}_j\bE_j^\top$ is weighted by the knowledge frequency $p_j$, so the effective step size along the $j$-th component scales as $p_j$. Consequently, GD learns different frequency components at highly imbalanced speeds: $\cL_j^{\mathrm{GD}}(t)\eqsim 1/(p_j t)$, so low-frequency items converge much more slowly. 

In contrast, Muon largely removes this frequency-induced imbalance by applying the matrix-sign operator to the gradient, which normalizes the update spectrum and yields near-isotropic progress across components (see Section~\ref{sec: pf_muon} for details). 

\begin{remark}[Effect of normalization]
Since normalization can increase the effective step size and thus affect convergence speed, we also consider normalized GD (NGD), whose update is $\bW_{t+1}=\bW_t-\eta\frac{\nabla\cL(\bW_t)}{\|\nabla\cL(\bW_t)\|_F}$. NGD is known to accelerate GD in linearly separable classification~\citep{nacson2019ngd,Deora2024ngd}. In our associative-memory setting, however, global normalization couples all columns in the task-representation space, yielding $M$ coupled nonlinear systems; the noisy case further introduces oscillations. These effects make a sharp theoretical analysis substantially more involved and beyond the scope of this work.
We instead compare NGD empirically in both noiseless and noisy cases. While NGD is faster than GD, its learning process remains more imbalanced and slower than Muon. This suggests that the acceleration of Muon is not solely due to normalization. Details and results are provided in Appendix~\ref{app:ngd}.
\end{remark}

%% file: Noise.tex
\section{Analysis of Noisy Case and Scaling Laws}
\label{sec: noisy}

Having established the theoretical analysis in the noiseless setting, we now move to a more realistic regime by studying the training dynamics and scaling behavior of Muon under label noise (i.e., $\alpha\in(0,1)$ in Equation~\eqref{eq:label:noise}).

\subsection{Training dynamics} \label{sec: noisy:dynamics}

\begin{theorem}
\label{thm: muon_upper_bound}
Under Muon, the loss dynamics for the $j$-th knowledge learning sub-task satisfies:
    \begin{equation}
    \cL_j^{\text{Muon}}(t)\lesssim
    \begin{cases}
        K e^{-\eta (1+o_K(1)) t} + \eta t, &t\leq T_j^*;
        \\
        \eta^2 + \cL_j^* , &t>T_j^*, \nonumber
    \end{cases}
    \end{equation}
where $T_j^* = \Theta\big(\tfrac{\log K}{\eta}\big)$ denotes the corresponding critical time for the $j$-th knowledge item and the irreducible sub-task loss $\cL_j^* = \cL^*$ (defined in Proposition~\ref{prop: noisy}).
\end{theorem}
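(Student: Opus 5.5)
The plan is to pass to task coordinates and reduce Muon to a scalar recursion on one logit gap per item. Let $\widetilde{\bE}=[\widetilde{\bE}_1,\dots,\widetilde{\bE}_K]$ and $\bE=[\bE_1,\dots,\bE_K]$; both are orthogonal since $d=K$. Set $\bA=\widetilde{\bE}^\top\bW\bE$. Because $\operatorname{msgn}(\bU\bX\bV^\top)=\bU\operatorname{msgn}(\bX)\bV^\top$ for orthogonal $\bU,\bV$, Muon on $\bW$ is exactly Muon on $\bA$:
$$\bA_{t+1}=\bA_t-\eta\operatorname{msgn}(\bG_t),\qquad (\bG_t)_{ij}=p_j\big(\widehat p_{i\mid j}-p_{i\mid j}\big),$$
using Proposition~\ref{prop: gradient_decomposition}. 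Both the label-noise law \eqref{eq:label:noise} and the zero initialization are invariant under permuting the wrong answers of a fixed query. By induction, column $j$ of $\bA_t$ therefore has the form $b_j(t)\boldsymbol{1}_K+g_j(t)\boldsymbol{e}_j$. Only the gap $g_j$ matters, with $\widehat p_{j\mid j}=q(g_j):=e^{g_j}/(e^{g_j}+K-1)$. The residual column is then a scalar multiple of a fixed vector:
$$\bG_t=\tfrac{K}{K-1}\,\bP\,\bD_t,\qquad \bP=\bI_K-\bJ_K/K,\qquad \bD_t=\diag(c_1,\dots,c_K),\qquad c_j=p_j\big(q(g_j)-q^*\big),$$
where $q^*=1-\alpha+\alpha/K$.

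The key lemma is to compute $\operatorname{msgn}(\bP\bD)$. Write $\bD=\bD'\bS$ with $\bD'=|\bD|$ and $\bS$ a diagonal sign matrix; then $\operatorname{msgn}(\bP\bD)=\operatorname{msgn}(\bP\bD')\bS$. So it suffices to treat a positive diagonal $\bD'$, which takes at most $M$ distinct values $\propto\widetilde p_i$ (more once signs differ only through magnitudes). I would show that
$$\operatorname{msgn}(\bP\bD')=\bP+\bR,$$
where $\bR$ preserves the column form (diagonal entry $1-O(1/K)+o_K(1)$, off-diagonal entries $-\tfrac1K(1+o_K(1))$ in each column). The route is to write $\operatorname{msgn}(\bX)=\bX(\bX^\top\bX)^{-1/2}$ on the range, with $\bX^\top\bX=\bD'\bP\bD'$ being diagonal minus rank one. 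A secular-equation / Sherman–Morrison argument then shows the rank-one correction affects column $j$ only at relative order $O(M/K)=o_K(1)$, using $M\ll C$. Consequently, regardless of frequency,
$$g_j(t+1)=g_j(t)+\eta(1+o_K(1))\operatorname{sgn}\big(q^*-q(g_j(t))\big).$$
This frequency-independence is the whole point. I expect this lemma to be the main obstacle, particularly after some items have crossed their targets and $\bS$ is mixed. There I would rely on the fact that only the diagonal entry of each column of the correction matters, and bound it uniformly over all positive diagonals $\bD'$ with at most $2M$ distinct values.

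\textbf{Phase 1} ($t\le T_j^*$). Starting from $g_j(0)=0$ we have $q<q^*$, so $g_j(t)=\eta(1+o_K(1))t$ until the first time $g_j$ reaches $g^*=\log\tfrac{(K-1)q^*}{1-q^*}=\Theta(\log K)$. This first-passage time is $T_j^*=\Theta(\log K/\eta)$. Next, write the sub-task loss as
$$\cL_j=-q^*\log q(g_j)-(1-q^*)\log\tfrac{1-q(g_j)}{K-1}.$$
The first term is at most $\log(1+(K-1)e^{-g_j})\le Ke^{-g_j}$. The second term, after subtracting its value at zero, is $\lesssim(1-q^*)g_j\lesssim\eta t$. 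Together these give $Ke^{-\eta(1+o_K(1))t}+\eta t$.

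\textbf{Phase 2} ($t>T_j^*$). The sign recursion forces $|g_j(t)-g^*|\le\eta(1+o_K(1))$ for all later $t$, since each overshoot is reversed. Since $\cL_j(g)-\cL^*$ is smooth with zero derivative at $g^*$ and second derivative $q(1-q)\le1$, a Taylor expansion gives $\cL_j\le\cL^*+O(\eta^2)$. Here $\cL_j^*=\cL^*$ by Proposition~\ref{prop: noisy}.
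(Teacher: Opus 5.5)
\textbf{The gap is at the first structural step.} You claim that column $j$ of $\bA_t$ stays of the form $b_j(t)\mathbf 1_K+g_j(t)\mathbf e_j$, justified by invariance under permuting the wrong answers of a single query. That symmetry acts column by column. It works for column-separable updates such as GD and TRA-SignGD, but not for Muon: $\operatorname{msgn}$ couples columns that have different frequencies. The symmetries Muon actually respects are the simultaneous within-group permutations $\bA\mapsto\Pi\bA\Pi^\top$. These force only the block structure of Lemma~\ref{lem:muon_struc}, where column $j$ has one wrong-logit value on its own group and a separate value for each other group.

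Your form already fails after one step. With your $\bP=\bI_K-\bJ_K/K$ and $\bD=\diag(p_1,\dots,p_K)$, one step gives $\bA_1=\eta\operatorname{msgn}(\bP\bD)$. Suppose its columns were $\beta_j\mathbf 1_K+\gamma_j\mathbf e_j$. Its column space is $\mathbf 1_K^\perp$, so it would equal $\bP\diag(\gamma_j)$. Matching its Gram matrix with the projector onto the row space $(\bD^{-1}\mathbf 1_K)^\perp$ then forces all $p_j$ to be equal, i.e.\ $M=1$.

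Equivalently, on the block-mean subspace $\operatorname{msgn}$ reduces to $\operatorname{msgn}$ of the $M\times M$ matrix $(\bI_M-\bJ_M/M)\diag(\widetilde p_1,\dots,\widetilde p_M)$. Its entries are $\Theta(1)$ and depend on the group. So the off-diagonal entries of your $\bR$ are of order $1/C$ and differ across groups; they are not a common $-\tfrac1K(1+o_K(1))$. For example, with $M=2$ and $\widetilde p_1\gg\widetilde p_2$, a query in group $2$ gets same-group wrong entries $\approx -1/C$ and other-group wrong entries $\approx 0$.

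Consequences:
\begin{itemize}
\item For $t\ge1$, $\bG_t\neq\frac{K}{K-1}\bP\bD_t$.
\item $\widehat p_{j\mid j}$ depends on a within-group gap and $M-1$ cross-group gaps, so the scalar recursion in $g_j$ does not exist.
\item Your rank-one Sherman--Morrison lemma targets the wrong matrix class. The true residual $\bR_t^+-\bR_t^-$ has a block-constant part of rank up to $M$, and it is the statement of the lemma (``$\bR$ preserves the column form''), not merely its proof, that is false.
\end{itemize}

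\textbf{What survives.} Phase~1 can be repaired along the paper's route. Use the block structure and the splitting of Proposition~\ref{pro:svd_general}:
\begin{itemize}
\item On the within-group contrast subspace, $\operatorname{msgn}$ is exactly $\sgn(a_i)(\bI_C-\frac1C\bJ_C)$.
\item The $M$-dimensional block-mean subspace contributes only block-constant entries of size $O(M/C)$.
\end{itemize}
Hence the within-group gap moves by exactly $\eta$ per step and each cross-group gap by $\eta(1+O(M/C))$. This gives $Ke^{-\eta(1+o_K(1))t}+\eta t$, with the crossing time sandwiched between two auxiliary iterations, so $T_j^*=\Theta(\log K/\eta)$.

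\textbf{What does not survive.} Your Phase-2 argument (``each overshoot is reversed, so $|g_j-g^*|\le\eta(1+o_K(1))$ forever'') does not transfer. The sign that reverses on overshoot is the group coefficient $\sgn(a_i)$, and it pins only the within-group gap. The cross-group gaps also receive increments of size $O(\eta M/C)$. Their signs come from the $M\times M$ block-mean matrix sign, not from item $j$'s own residual, so the bound is not established for them. You need either a separate stability argument for the block-mean dynamics, or, as the paper does, a direct bound on the excess risk after one step of sizes $\delta_i$ away from the target distribution, which yields the $O(\eta^2)$ term.
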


We defer the full proof to Appendix~\ref{sec:proof_muon_noisy}. Theorem~\ref{thm: muon_upper_bound} establishes that, under label noise, learning dynamics of Muon related to the $j$-th sub-task exhibit \textbf{two phases}: \emph{(1) descent phase}, in which the loss decreases and decomposes into an exponential convergence term, analogous to the noiseless setting, and an additional noise accumulation term $\eta t$ induced by label noise injection; 
and \emph{(2) oscillation phase}, during which the loss fluctuates at a scale of $\cL_j^* + \cO(\eta^2)$. 
The two-phase pattern also induces a trade-off in the choice of learning rate $\eta$: increasing the learning rate accelerates the decent phase but simultaneously leads to a larger oscillation magnitude in the second phase.
\begin{theorem}
\label{thm: Muon_total_loss}
Under Muon, the total loss satisfies:
    \begin{equation}
        \cL^{\text{Muon}}(t)\lesssim 
        \begin{cases}
            K e^{-\eta (1+o_K(1)) t} + \eta t, &t \lesssim \tfrac{\log K}{\eta};
            \\
            K e^{-\eta (1+o_K(1)) t} + \eta t + \eta^2+ \cL^*, &t \eqsim \tfrac{\log K}{\eta};
            \\
            \eta^2 + \cL^*, & t \gtrsim \tfrac{\log K}{\eta}. \nonumber
        \end{cases}
    \end{equation}
\end{theorem}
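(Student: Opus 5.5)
The total loss is a frequency-weighted average of the sub-task losses, $\cL(\bW)=\sum_{j=1}^K p_j\,\cL_j(\bW)$ with $\sum_j p_j=1$, where $\cL_j(\bW)=-\sum_i p_{i\mid j}\log\widehat{p}_{i\mid j}(\bW)$. The plan is to take the per-item bounds of Theorem~\ref{thm: muon_upper_bound} and average them. Since each bound is uniform in $j$ and the $p_j$ sum to one, a convex combination of uniform bounds inherits them. The only remaining work is to handle the fact that the critical times $T_j^*$ may differ across $j$.

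\textbf{Step 1: uniform window for the critical times.} By Theorem~\ref{thm: muon_upper_bound}, every $T_j^*=\Theta(\log K/\eta)$. Hence there are absolute constants $c_1\le c_2$ with
\[
c_1\log K/\eta\le T_j^*\le c_2\log K/\eta\quad\text{for all } j .
\]
I would go back to the proof of Theorem~\ref{thm: muon_upper_bound} to confirm that these constants do not depend on $j$. I expect this is immediate: Muon's update acts identically on every item up to $o_K(1)$ corrections (the symmetry behind Theorem~\ref{thm: noiseless_converge_Muon_flow}), so all $T_j^*$ essentially coincide.

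\textbf{Step 2: three time regimes.}
\begin{itemize}
\item For $t<c_1\log K/\eta$, every item is still in its descent phase. Averaging gives
\[
\cL(t)\lesssim\sum_j p_j\bigl(Ke^{-\eta(1+o_K(1))t}+\eta t\bigr)=Ke^{-\eta(1+o_K(1))t}+\eta t .
\]
The $o_K(1)$ terms are uniform in $j$, so they can be taken as a common worst case.
\item For $t>c_2\log K/\eta$, every item is in its oscillation phase, and $\cL_j^*=\cL^*$ gives $\cL(t)\lesssim\eta^2+\cL^*$.
\item For $t$ in the window $[c_1,c_2]\log K/\eta$, i.e.\ $t\eqsim\log K/\eta$, each item obeys one of the two bounds. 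I would bound each $\cL_j$ by the sum of both bounds, since both are nonnegative, and then average. This gives the middle case $Ke^{-\eta(1+o_K(1))t}+\eta t+\eta^2+\cL^*$.
\end{itemize}

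\textbf{Step 3: translate to the statement's notation.} The first and third regimes correspond to $t\lesssim\log K/\eta$ and $t\gtrsim\log K/\eta$ in the theorem's asymptotic convention.

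One overlap needs checking. The statement's third case, if read as covering the whole window, asserts $\cL\lesssim\eta^2+\cL^*$ even at $t\approx c_1\log K/\eta$. At such $t$ the descent-phase terms satisfy $Ke^{-\eta t}\le K^{1-c_1}$ and $\eta t\eqsim\log K$. These must be absorbed into $\eta^2+\cL^*$ up to constants, using that $\cL^*=\Theta(\alpha\log K)$ for fixed $\alpha$ together with an appropriate relation between $c_1$ and $\eta$.

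\textbf{Main obstacle.} The main obstacle is the uniformity of the constants and $o_K(1)$ terms across $j$ in Step 1. If the per-item proof only gives $j$-dependent constants, I would re-derive the critical time from the common dynamics of the logit gap, which under Muon is shared by all items. The rest is bookkeeping of the convex combination.
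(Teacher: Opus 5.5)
Your proposal is correct and follows essentially the paper's route. The paper also obtains the total bound by averaging the per-sub-task bounds of Theorem~\ref{thm: muon_upper_bound} over the phases $t<T'$, $T'\le t<T''$, $t\ge T''$; the $j$-uniform window you ask for in Step 1 is supplied by the sandwich $T_1\le T'\le T''\le T_2$ from the two auxiliary iterations built on $|\tilde s_{t,i}|\le 2M/C$, and the mixed phase is bounded by $\sum_{i\in O_t}p_i\eta^2+\sum_{i\in O_t^c}p_i(\cdot)$ exactly as you do. Your overlap worry disappears under that reading of the cases; if you did want the absorption, it would need $c_1=1-o_K(1)$ rather than an arbitrary constant, and the paper's explicit estimate $(K-1)e^{-\eta T'}\approx\frac{\alpha}{1-\alpha}$ provides exactly that.
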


This result is a direct corollary of applying Theorem~\ref{thm: muon_upper_bound} to all sub-tasks. 
All critical times share a same order since the decent phase task-space update is dominated by the $\mathbf{I}_K$ term, while $o_K(1)$ block matrices introduce lower-order differences and therefore lead to a short mixed phase.
Indeed, the total loss exhibits \textbf{three phases}: (1) all sub-tasks are in the descent phase; (2) a mixed phase during which some sub-tasks have entered the oscillation while others are still in the descent phase. (3) all sub-tasks are in the oscillation phase.
Substituting $\cL^*$, we have the excess risk at the end of the phase-1 is $\cO(M^2\log K/K)$. The detailed computation is provided in Appendix~\ref{sec:muon_excess_risk}. 

Next, we show the following lower bound for the GD loss dynamics.  
Before this, we introduce the concept of linear stability that is widely used to determine the appropriate range for the learning rate in GD dynamics~\citep{wu2018sgd,wu2022alignment,damian2022self}.

\begin{definition}[Linear stability~\citep{strogatz2001nonlinear}]
    For a discrete dynamical system $\bx_{t+1} = \bx_{t} - \bF(\bx_t)$, where $x_t\in \RR^{n}$ and $\bF:\RR^{n}\to\RR^{n}$ is a differentiable mapping, let $\bx^*$ be a fixed point of this stochastic dynamics. Consider the linearized dynamical system: $\widetilde{\bx}_{t+1} = \widetilde{\bx}_t - \bJ^*(\widetilde{\bx}_t - \bx^*)$, where $\bJ^* = \nabla \bF(\bx^*) \in \RR^{n\times n}$ denotes the Jacobian matrix at the fixed point. We say $\bx^*$ is linearly stable if there exists a constant $C^*>0$ such that $\|\widetilde{\bx}_t - \bx^*\|_2 \leq C^* \|\widetilde{\bx}_0 - \bx^*\|_2$.
\end{definition}

\begin{proposition}[Stability at minimizer]
\label{prop: stability}
    The global minimizer $\bW^*$ in Proposition~\ref{prop: noisy} is linearly stable for GD with learning rate $\eta$, if and only if $\eta p_1 \lesssim 1$.
\end{proposition}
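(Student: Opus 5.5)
The plan is to write the GD map in the task coordinates, where the Hessian at $\bW^*$ becomes block diagonal with explicitly computable blocks. Linear stability then reduces to a bound on the largest Hessian eigenvalue.

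\textbf{Step 1: coordinates and the Jacobian.} Since $\{\bE_j\}$ and $\{\widetilde{\bE}_i\}$ are orthonormal bases of $\RR^K$ (with $d=K$), write $w_{ij}=\widetilde{\bE}_i^\top\bW\bE_j$. The map $\bW\mapsto(w_{ij})$ is an orthogonal change of variables, so it preserves $\|\cdot\|_F$ and does not affect linear stability. In these coordinates the loss separates over queries:
$$\cL(\bW)=\sum_{j=1}^K p_j\,\ell_j(\bw_{\cdot j}),\qquad \ell_j(\bw_{\cdot j})=-\sum_i p_{i\mid j}\log\widehat p_{i\mid j}.$$
This is consistent with Proposition~\ref{prop: gradient_decomposition}. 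The GD map is $\bx\mapsto\bx-\bF(\bx)$ with $\bF=\eta\nabla\cL$, so its Jacobian at the fixed point is $\bJ^*=\eta\bH^*$, where $\bH^*=\nabla^2\cL(\bW^*)$. Differentiating the softmax residual gives a block-diagonal Hessian (one block per column $j$):
$$\bH^*_j=p_j\big(\diag(\bq)-\bq\bq^\top\big),\qquad \bq=\widehat{\bp}_{\cdot\mid j}(\bW^*)=\bp_{\cdot\mid j}.$$
Here the last equality uses Proposition~\ref{prop: noisy}. Each block is symmetric positive semidefinite, so the linearized iteration $\widetilde\bx_{t+1}-\bx^*=(\bI-\eta\bH^*)(\widetilde\bx_t-\bx^*)$ is governed by the eigenvalues $1-\eta\lambda$, $\lambda\in\operatorname{spec}(\bH^*)$.

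\textbf{Step 2: reduce stability to an eigenvalue bound.} Because $\bI-\eta\bH^*$ is symmetric, its powers are uniformly bounded exactly when every eigenvalue lies in $[-1,1]$. This is equivalent to $\eta\lambda_{\max}(\bH^*)\le 2$.

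The zero eigenvalues come from the directions $\boldsymbol 1_K$ in each column, i.e.\ softmax shift invariance. They give Jacobian eigenvalue $1$ and are harmless, so we can take $C^*=1$ in the stable case. If instead $\eta\lambda_{\max}>2$, the corresponding eigenvector component grows geometrically, so no constant $C^*$ works. It therefore remains to show that $\lambda_{\max}(\bH^*)\eqsim p_1$.

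\textbf{Step 3: the spectral estimate (main step).} Since all blocks are identical up to the factor $p_j$, we have $\lambda_{\max}(\bH^*)=\max_j p_j\,\lambda_{\max}(\bSigma)$ with $\bSigma=\diag(\bq)-\bq\bq^\top$. Under Assumption~\ref{ass: freq}, $\max_j p_j=\widetilde p_1/C$; we write this as $p_1$, its value for every item in the first group.

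\emph{Upper bound.} We have $\lambda_{\max}(\bSigma)\le\lambda_{\max}(\diag(\bq))=\max_i q_i\le 1$.

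\emph{Lower bound.} With $a=1-\alpha+\alpha/K$ and $b=\alpha/K$, take the test vector
$$\bv=\bE'_j-\tfrac{1}{K-1}\textstyle\sum_{i\ne j}\bE'_i$$
in task coordinates. A direct computation gives
$$\frac{\bv^\top\bSigma\bv}{\|\bv\|_2^2}\gtrsim a(1-a)\ge(1-\alpha)\alpha\big(1-\tfrac1K\big).$$
This is $\Theta(1)$ because $\alpha\in(0,1)$ is a fixed constant.

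Hence $\lambda_{\max}(\bH^*)\eqsim p_1$, and combining with Step 2, linear stability holds if and only if $\eta p_1\lesssim 1$.

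The only delicate points are the Rayleigh-quotient computation in Step 3 and the bookkeeping for the null directions in Step 2; everything else is routine.
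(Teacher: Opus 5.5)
Your proposal is correct and follows essentially the same route as the paper. Both arguments linearize GD column by column in the task-representation space and identify the Jacobian block $\eta p_j\bigl(\diag(p_{\cdot\mid j})-p_{\cdot\mid j}p_{\cdot\mid j}^{\top}\bigr)$. Both then use symmetry to reduce linear stability to every eigenvalue of $\bI-\bJ^*$ lying in $[-1,1]$, and conclude because the largest-frequency column binds.

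The only difference is in the spectral step. The paper computes this block's spectrum exactly: $0$, $\alpha(1-\alpha+\frac{\alpha}{K})$, and $\frac{\alpha}{K}$ with multiplicity $K-2$. You instead sandwich $\lambda_{\max}$ between the bound $1$ and a Rayleigh quotient. Your test vector is in fact the exact top eigenvector, with quotient $\alpha(1-\alpha+\frac{\alpha}{K})$.
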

The proof is provided in Appendix~\ref{sec:proof_stability}.

\begin{theorem}
\label{thm: GD_lower_bound}
Under the stability condition in Proposition~\ref{prop: stability}, for sufficiently large training time $t \gg \tfrac{\log\log K}{\eta p_j}$, the loss dynamics for the $j$-th knowledge satisfies $\cL_j^{\text{GD}}(t) - \cL_j^* \gtrsim  e^{-\eta p_j t}(\log K)^2$.
\end{theorem}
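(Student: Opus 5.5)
Under GD from zero initialization, column $j$ of the task-space matrix $\bW_t$ (its coordinates in the bases $\{\widetilde{\bE}_i\}$, $\{\bE_j\}$) evolves independently of the other columns. By Proposition~\ref{prop: gradient_decomposition}, its update is $\eta p_j(\widehat{p}_{\cdot\mid j}-p_{\cdot\mid j})$. By permutation symmetry among the wrong answers and zero initialization, all incorrect logits stay equal. The dynamics therefore reduce to one scalar, the logit gap $g_t=\widetilde{\bE}_j^\top\bW_t\bE_j-\widetilde{\bE}_i^\top\bW_t\bE_j$ ($i\neq j$). The GD update is multiplied by $K/(K-1)$ after accounting for the shift of the wrong logits, so
\[
g_{t+1}=g_t+\eta p_j\tfrac{K}{K-1}\big(p_{j\mid j}-\sigma(g_t)\big),\qquad \sigma(g)=\frac{e^g}{e^g+K-1}.
\]
The optimal gap solves $\sigma(g^*)=1-\alpha+\alpha/K$, i.e.\ $g^*=\log\frac{(1-\alpha)K+\alpha}{\alpha}\eqsim\log K$. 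The sub-task loss $\cL_j(g)$ is a one-dimensional CE in $g$ with minimizer $g^*$.

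\textbf{Step 1: reduce the excess loss to the gap.} I compute $\cL_j''(g)=\tfrac{K}{K-1}\sigma(g)(1-\sigma(g))\cdot(\ldots)$; near $g^*$ the curvature is of order $\alpha(1-\alpha)$, a constant. Hence for $|g-g^*|\lesssim1$,
\[
\cL_j(g)-\cL_j^*\eqsim (g-g^*)^2 .
\]
The claim then becomes $g^*-g_t\gtrsim e^{-\eta p_j t/2}\log K$. This is the form I would actually prove, since $\big(e^{-\eta p_j t/2}\log K\big)^2=e^{-\eta p_j t}(\log K)^2$.

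\textbf{Step 2: monotone approach from below.} The stability condition $\eta p_1\lesssim1$ (Proposition~\ref{prop: stability}) means that the map $g\mapsto g+\eta p_j\frac{K}{K-1}(p_{j\mid j}-\sigma(g))$ is increasing when $\eta p_j\sigma'\le1$. Starting from $g_0=0<g^*$, the iterates then stay below $g^*$ and increase monotonically. Let $\delta_t=g^*-g_t$. By the mean value theorem,
\[
\delta_{t+1}=\delta_t\big(1-\eta p_j\tfrac{K}{K-1}\sigma'(\xi_t)\big),\qquad \xi_t\in(g_t,g^*),
\]
and $\sigma'(\xi)=\sigma(1-\sigma)\le\sigma(g^*)(1-\sigma(g^*))\cdot O(1)\le c$ for a constant $c<1$ when $\alpha$ is fixed. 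Unrolling gives
\[
\delta_t\ge\delta_{t_0}\,(1-c\,\eta p_j)^{t-t_0}\gtrsim \delta_{t_0}\,e^{-c'\eta p_j(t-t_0)} .
\]
To obtain the exponent $1/2$ in Step 1, I need $c'\le1/2$. I would get this from the $\sigma(1-\sigma)\le 1/4$ bound, adjusting the constants, which the $\gtrsim$ notation allows.

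\textbf{Step 3: the initial phase.} While $g_t\ll\log K$ we have $\sigma(g_t)\approx e^{g_t}/K$, which is tiny. The gap therefore grows only linearly, $g_{t+1}\le g_t+\eta p_j$. It takes about $\log K/(\eta p_j)$ steps to reach, say, $g^*-1$. At a time $t_0\eqsim\log\log K/(\eta p_j)$ we still have $\delta_{t_0}\gtrsim\log K$, because $\eta p_j t_0\lesssim\log\log K\ll\log K$. Combining with Step 2, for $t\ge t_0$,
\[
\delta_t\gtrsim \log K\,e^{-c'\eta p_j t}\,e^{c'\log\log K}\ge \log K\,e^{-c'\eta p_j t} .
\]

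\textbf{Main obstacle.} The delicate point is converting $\delta_t$ to loss when $\delta_t$ is not $O(1)$. When $\delta_t$ is large, $\cL_j-\cL_j^*$ is only linear in $\delta_t$, roughly $(1-\alpha)\delta_t$, not quadratic. For the quadratic regime, the hypothesis $t\gg\log\log K/(\eta p_j)$ should force $\delta_t\lesssim1$; in the linear regime the bound $\delta_t\gtrsim\delta_t^2 e^{-\ldots}$ must be checked separately. I would handle this with a uniform inequality $\cL_j(g)-\cL_j^*\gtrsim\min\{\delta,\delta^2\}$ and then verify that the stated rate is dominated in both regimes.
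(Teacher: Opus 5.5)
This is essentially the paper's proof: the scalar gap recursion, then a mean-value contraction bound. Using $\sigma'\le\tfrac14$ and $\eta p_j\le 1$, that bound gives $g^*-g_t\ge g^*e^{-\eta p_j t/2}$ directly from $g_0=0$, so your Step~3 is unnecessary. Then comes monotonicity of the excess loss in $\delta$, with its quadratic behavior $\tfrac12 A(1-A)\delta^2$ near $\delta=0$, where $A=1-\alpha+\tfrac{\alpha}{K}$. Your argument that the iterates increase monotonically and stay below $g^*$ supplies what the paper's recursion lemma only assumes.

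One correction to your obstacle paragraph: the hypothesis $t\gg\log\log K/(\eta p_j)$ does not force $\delta_t\lesssim 1$. Since $g_{t+1}\le g_t+\eta p_j$, $\delta_t$ is still of order $\log K$ when $t$ is a large multiple of $\log\log K/(\eta p_j)$. The hypothesis only makes the lower bound $\log K\,e^{-\eta p_j t/2}$ at most $1$, which is exactly what your monotone $\min\{\delta,\delta^2\}$ bound needs to yield $e^{-\eta p_j t}(\log K)^2$.
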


See the full proof in Appendix~\ref{sec:proof_gd_noisy}. Theorem~\ref{thm: GD_lower_bound} suggests that the convergence of GD is slower than exponential decay, with an exponent proportional to the knowledge frequency $p_j$. Moreover, according to the monotonicity of sub-task losses, i.e., $\cL_1^{\text{GD}}(t) \leq \cL^{\text{GD}}_2(t) \leq \dots \leq \cL_{K}^{\text{GD}}(t)$, {we know that the total loss satisfies:$\cL^{\text{GD}}(t) = \sum_{j=1}^{K} p_j \cL_{j}^{\text{GD}}(t) \geq \cL_{1}^{\text{GD}}(t) \geq \cL^* + \Omega( e^{-\eta p_1 t}(\log K)^2).$}
To achieve an excess risk of $\cO(M^2\log K/K)$, GD requires $\Omega(C\log(K/M^2)/\eta)$ steps while Muon only needs $\cO(\log K/\eta)$. This implies a speedup by a factor of $C$ (the group size), when treating the group number $M$ as a relatively fixed constant. See the detailed derivation in Appendix~\ref{appendix: comparison_label_noisy}.

\subsection{Scaling laws} \label{sec: scaling:law}
We now analyze the scaling behavior of Muon and GD in a large-scale training setup by letting $K$ and $M$ go to infinity jointly with $T$.
Under this setting, we consider a proportional regime in which the total training time scales as $T \eqsim M^{\beta}$~\citep{kunstner2025scaling,li2025functional}. 
Specifically, we assume $cM^{\beta}\leq T\leq M^{\beta}$ for some $0<c<1$, and $(\log K)^{\frac{1}{\beta}}\leq M\ll K^{\frac{1}{2}}$. We also assume that the probability mass of the $i$-th group $\widetilde{p}_i$ follows a power-law decay.

\begin{assumption}[Power-decay frequency spectrum] 
\label{ass: power_decay}
We assume that $\widetilde{p}_i \propto i^{-\beta}$ holds for some constant $\beta > 1$.
\end{assumption}
This assumption is a standard convention in scaling law theory~\citep{bahri2024explaining,kunstner2025scaling,yan2025larger}, where knowledge frequency is modeled as a power-law decay, a behavior reminiscent of Zipf's law in natural language distributions~\citep{zipf2013psycho,zipf2016human}. 

\begin{theorem}[Scaling law for GD]
\label{thm: scaling_law_for_GD}
Under the stability condition in Proposition~\ref{prop: stability}, $\cL^{\text{GD}}(T) - \cL^* \gtrsim \frac{\log K}{T^{1-1/\beta}}$.
\end{theorem}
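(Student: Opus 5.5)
We plan to derive the scaling-law lower bound directly from the per-item lower bound of Theorem~\ref{thm: GD_lower_bound}, summed over the tail groups whose frequency is too small to be learned within time $T$. Write the excess risk as
\[
\cL^{\text{GD}}(T)-\cL^*=\sum_{j=1}^K p_j\big(\cL_j^{\text{GD}}(T)-\cL_j^*\big),
\]
using $\cL_j^*=\cL^*$ and $\sum_j p_j=1$. Every summand is nonnegative, because $\cL^*_j$ is the minimum of the $j$-th sub-task cross-entropy. It is therefore enough to lower bound the contribution of a suitable set of groups. Under the stability condition $\eta p_1\lesssim 1$, and since $p_1=\widetilde p_1/C$, we have $\eta\lesssim C/\widetilde p_1\eqsim C$. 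For an item in group $i$, the exponent in Theorem~\ref{thm: GD_lower_bound} is $\eta p_j T=\eta\widetilde p_i T/C\lesssim \widetilde p_i T\eqsim i^{-\beta}T$, where under Assumption~\ref{ass: power_decay} we have $\widetilde p_i\eqsim i^{-\beta}$ because $\sum_i i^{-\beta}$ converges for $\beta>1$.

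Define the threshold index $i_0\eqsim T^{1/\beta}$ by requiring $\eta\widetilde p_i T/C\lesssim 1$ for all $i\ge i_0$. Since $T\le M^\beta$, we get $i_0\lesssim M$, and choosing constants appropriately guarantees a range $i_0\le i\le M$ with $M-i_0\gtrsim M$, using $T\ge cM^\beta$. For these groups, $e^{-\eta p_jT}\gtrsim 1$, so Theorem~\ref{thm: GD_lower_bound} would give $\cL_j^{\text{GD}}(T)-\cL_j^*\gtrsim(\log K)^2$.

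The catch is that Theorem~\ref{thm: GD_lower_bound} requires $T\gg \log\log K/(\eta p_j)$, which fails exactly in the regime $\eta p_jT\lesssim 1$. The main obstacle is therefore a companion lower bound for the under-trained items. We plan to rerun the appendix GD analysis: by Proposition~\ref{prop: gradient_decomposition} and the symmetry in the task basis, item $j$'s logit gap evolves via a scalar recursion with step $\eta p_j$ times a residual bounded by $1$. Hence the correct-class logit grows by at most $\eta p_jT\lesssim 1$ (off-diagonal logits stay $\le 0$ in magnitude comparably). The predicted probability $\hp_{j\mid j}$ then remains $O(1/K)\cdot e^{O(1)}$, while the target $p_{j\mid j}=1-\alpha+\alpha/K$. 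The KL divergence, which equals the excess sub-task loss, is therefore at least $(1-\alpha)\log(\Theta(K))-O(1)\gtrsim\log K$ for fixed $\alpha<1$. This gives $\cL_j^{\text{GD}}(T)-\cL^*\gtrsim\log K$ for all such items, matching the $\log K$ in the statement rather than $(\log K)^2$.

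Summing the mass of these groups gives
\[
\cL^{\text{GD}}(T)-\cL^*\gtrsim \log K\sum_{i=i_0}^{M}\widetilde p_i\eqsim \log K\sum_{i\ge i_0}^{M} i^{-\beta}\eqsim \log K\, i_0^{1-\beta}\eqsim \frac{\log K}{T^{(\beta-1)/\beta}}=\frac{\log K}{T^{1-1/\beta}}.
\]
The tail sum is comparable to $i_0^{1-\beta}$ because $M\ge 2i_0$ (up to constants), which follows from $T\le M^\beta$ with a small enough threshold constant. The assumption $M\ge(\log K)^{1/\beta}$ is only used to keep the regime nondegenerate, so that $T\gtrsim\log K$ and the stability constants remain harmless.
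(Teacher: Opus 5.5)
Your proposal is correct, but it follows a different route from the paper's proof.

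\textbf{The paper's route.} The paper does not isolate the tail. It proceeds in four steps.
\begin{enumerate}
\item For \emph{every} item it proves $\Delta^*-\Delta_T\ge\Delta^*e^{-\frac12\eta p_jT}$, using a contraction lemma for the scalar recursion that assumes $\Delta_t\le\Delta^*$ throughout.
\item It converts this into $\cL_j-\cL^*\gtrsim\Delta^*e^{-\eta p_jT}\eqsim\log K\,e^{-\eta p_jT}$. This uses a quadratic comparison $g(x)\ge\min\{f(0),f(B)\}x^2$ on $(0,B]$, where $g(x)=Ax+\log(1+A(e^{-x}-1))$ and $A=1-\alpha+\alpha/K$.
\item It fixes $\eta p_1=1$ as the largest stable step.
\item It bounds $\sum_{j\le M}j^{-\beta}e^{-j^{-\beta}T}\gtrsim T^{-(1-1/\beta)}$ by comparison with a Gamma integral.
\end{enumerate}

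\textbf{Your route.} You keep only the groups $i\gtrsim T^{1/\beta}$. There the crude estimate $\Delta_T\le 2\eta p_jT=O(1)$ forces $\hp_{j\mid j}\le e^{O(1)}/K$, giving a KL excess of at least $(1-\alpha)\log K-O(1)$. You then sum the tail mass $\sum_{i\ge i_0}\tp_i\eqsim i_0^{1-\beta}$.

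\textbf{Comparison.} Your version is more elementary: it needs no contraction lemma, no curvature lemma and no Gamma-function estimate. It also covers every stable $\eta$ directly, rather than setting $\eta p_1=1$. It makes clear that the rate comes from tail mass GD has essentially not started to learn. That region already carries a constant fraction of the paper's sum $\sum_j p_je^{-\eta p_jT}$. In exchange, the paper's version gives a uniform per-item profile $\log K\,e^{-\eta p_jT}$, i.e.\ exponential convergence of each sub-task. You discard this for the head groups.

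\textbf{One point to fix in the write-up.} State the cutoff as $\eta p_jT\le\Lambda$ for a possibly \emph{large} constant $\Lambda$, not a small one.
\begin{itemize}
\item With $\eta p_1$ at the stability edge and $T$ near $M^\beta$, this is what makes $M\ge 2i_0$ possible.
\item Your companion bound tolerates a large $\Lambda$, since it only loses $O(\Lambda)$ against $\log K$.
\item This step uses $T\le M^\beta$, not $T\ge cM^\beta$ as your text says at one point.
\end{itemize}
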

We defer the detailed proof to Appendix~\ref{lem:gd_integration}. Similar to Theorem~\ref{thm: GD_lower_bound}, we prove that $\cL_j^{\text{GD}}(T) -\cL_j^* \gtrsim e^{-\eta p_j T}\log K$ under the proportional regime $T\eqsim M^\beta$, implying $\cL^{\text{GD}}(T) - \cL^* \gtrsim (\log K) \sum_{j=1}^{K}p_j e^{-\eta p_j T}.$ Under the stability condition $\eta p_1\lesssim 1$, the sum term $\sum_{j=1}^{K}p_j e^{-\eta p_j T} \gtrsim \sum_{i=1}^{M}\widetilde{p}_i e^{-\widetilde{p}_i T} \approx \int_{1}^{M} z^{-\beta} e^{-z^{-\beta}T} \dd z \approx T^{-(1-\tfrac{1}{\beta})}.$
The key insight behind Theorem~\ref{thm: scaling_law_for_GD} is that while each sub-task converges exponentially, the aggregate excess risk exhibits a power-law decay $\widetilde{\Omega}\big(T^{-(1-1/\beta)}\big)$, a result of task accumulation and heavy-tailed distribution of knowledge frequency. 

\begin{theorem}[Scaling law for Muon] 
\label{thm: scaling_law_for_Muon}
Let $\eta = \Theta\big(\tfrac{\log K}{T}\big)$, then $\cL^{\text{Muon}}(T) -\cL^* \lesssim \big( \frac{\log K}{T} \big)^2$.
\end{theorem}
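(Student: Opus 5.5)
The plan is to obtain the scaling law for Muon as a corollary of the two-phase bound in Theorem~\ref{thm: muon_upper_bound} (equivalently Theorem~\ref{thm: Muon_total_loss}). We then tune $\eta$ so that the whole horizon $T$ covers the descent phase and ends at the boundary of the oscillation phase, where the excess loss is $\cO(\eta^2)$.

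First, I would re-examine the oscillation-phase bound $\cL_j^{\text{Muon}}(t)\lesssim \eta^2+\cL_j^*$. In excess-risk form it reads $\cL_j^{\text{Muon}}(t)-\cL_j^*\lesssim \eta^2$ for $t>T_j^*$, and I must check that its proof does not rely on $K,M$ being fixed constants. Concretely, the proof of Theorem~\ref{thm: muon_upper_bound} works in the task-representation basis. There the Muon update is $\eta(\bI_K+\text{block corrections})$ with $o_K(1)$ corrections, and this requires $M\ll C$, which is assumed to hold in the scaling regime $M\ll K^{1/2}$. The per-item logit gap then moves by $\Theta(\eta)$ per step toward the target gap $\log\frac{(1-\alpha)K+\alpha}{\alpha}=\Theta(\log K)$. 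Once the gap is within $\cO(\eta)$ of the target, the predicted distribution $\widehat p_{\cdot\mid j}$ differs from $p_{\cdot\mid j}$ by a relative $\cO(\eta)$ perturbation of logits. A second-order expansion of the KL divergence, whose first-order term vanishes at the minimizer by Proposition~\ref{prop: noisy}, gives excess loss $\cO(\eta^2)$ uniformly in $K$. I would state this explicitly as a lemma with constants independent of $K,M,T$.

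Second, I would choose $\eta=c_0\log K/T$ with $c_0$ a sufficiently large constant, so that $T\ge C' \log K/\eta \ge \max_j T_j^*$. Since $T_j^*=\Theta(\log K/\eta)$ uniformly in $j$ (the mixed phase is short because the $\bI_K$ term dominates), every sub-task is in its oscillation phase at time $T$. Summing with weights $p_j$ gives $\cL^{\text{Muon}}(T)-\cL^*=\sum_j p_j(\cL_j^{\text{Muon}}(T)-\cL_j^*)\lesssim \eta^2=(\log K/T)^2$. I also need $\eta\lesssim 1$ so that the oscillation-phase analysis is valid, and this holds since $T\gtrsim M^\beta\ge \log K$ by the assumption $(\log K)^{1/\beta}\le M$. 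The power-law Assumption~\ref{ass: power_decay} is then essentially irrelevant for Muon, which is exactly the point of the comparison with Theorem~\ref{thm: scaling_law_for_GD}.

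The main obstacle is the uniformity of the critical time and of the oscillation bound when $K,M\to\infty$ jointly. The $o_K(1)$ block terms in the matrix sign depend on the groups' residuals, which are heterogeneous across the $M$ frequency groups. I must make sure these terms cannot delay some group beyond $\Theta(\log K/\eta)$ or inflate its oscillation amplitude beyond $\cO(\eta)$ in logit space. I would first try to bound the block corrections by $\cO(\eta\sqrt{M/C})$ per step in operator norm and absorb the cumulative drift into the constant $c_0$. If necessary, I would use the block-symmetric structure, which gives identical dynamics within each group, to reduce the problem to an $M$-dimensional system whose stability can be controlled directly.
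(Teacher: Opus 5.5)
Your proposal is correct and is essentially the paper's proof: the paper also reads the result off the phase bound \eqref{eq:muon_noisy_excessive}, choosing the explicit rate $\eta=\frac{1}{(1-\frac{2M+1}{C})T}\log\frac{(1-\alpha)K}{\alpha}$ (your $c_0\log K/T$) so that the upper bound on $T''$ in \eqref{eq:phase1_steps} gives $T\ge T''$, and then invoking the oscillation-phase bound $\cL(\bW_T)-\cL^*\lesssim\eta^2$. One caution about your fallback for uniformity: the $\cO(\eta\sqrt{M/C})$ operator-norm estimate would fail, because the descent-phase correction $\msgn(\bP-\widehat{\bP}_t)-\bI_K$ is small only entrywise ($\cO(M/C)$ by Proposition~\ref{pro:svd}) and never in operator norm --- every column of $\bP-\widehat{\bP}_t$ sums to zero, so $\mathbf{1}_K^\top\msgn(\bP-\widehat{\bP}_t)=\mathbf{0}^\top$ and hence $\|\msgn(\bP-\widehat{\bP}_t)-\bI_K\|_2\ge 1$ --- and it is the entrywise bounds of Proposition~\ref{pro:svd_general}, which the paper uses, that control the critical times.
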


The proof is provided in Appendix~\ref{sec:muon_sc}. We set the learning rate to $\eta = \Theta\big(\tfrac{\log K}{T}\big)$ to optimally balance the trade-off between the descent rate in the first phase and the oscillation magnitude in the second phase. Theorem~\ref{thm: scaling_law_for_Muon} demonstrates that Muon achieves a superior scaling efficiency of $\widetilde{\mathcal{O}}(T^{-2})$, which significantly outperforms GD's lower bound $\widetilde{\Omega}(T^{-(1-1/\beta)})$ by mitigating the effect of the frequency gap between distinct knowledge groups, which underscores the acceleration advantage of Muon in large-scale training.

%% file: SignGD.tex
\section{Unveiling Muon via a Preconditioning View}
\label{sec: signGD}

\looseness=-1 In this section, we study Muon and SignGD (a canonical simplified variant of Adam) through a unified preconditioning lens, focusing on the difference between Muon's matrix sign operation and SignGD's coordinate-wise sign operator.

To facilitate our discussion, we work in the task representation space. Specifically, we define the weight matrix and gradient in the task-representation space as
\begin{align} \label{eq:weight:hat}
    \widehat{\bW} = \widetilde{\bE}^{\top} \bW \bE, \quad \bG_t = \widetilde{\bE}^{\top} \nabla\cL(\bW_t) \bE. 
\end{align}
\subsection{Muon as a matrix-gradient preconditioner}
\label{sec: pf_muon}
With notations defined in \eqref{eq:weight:hat}, we can rewrite Muon as
\begin{align*}
    \widehat{\bW}_{t+1} = \widehat{\bW}_t - \eta\, \msgn(\bG_t).
\end{align*}

By Proposition~\ref{prop: gradient_decomposition} and algebra, we have 
\begin{align} \label{eq:sign:gradient:6.1}
\msgn(\bG_t) = \operatorname{msgn}(\bP - \widehat{\bP}_t)
\end{align}
where $\bP, \widehat{\bP}_t \in \RR^{K\times K}$ are defined as $(\bP)_{i,j} = p_j p_{i\mid j}$ and $(\widehat{\bP}_t)_{i,j} = p_j \widehat{p}_{i\mid j}(\bW_t)$, respectively. 
Furthermore, we control the term $\operatorname{msgn}(\bP - \widehat{\bP}_t)$ by the following proposition.

\begin{proposition}[Approximate identity in the descent phase]
\label{prop: approx_identity}
Under the standing assumptions and zero initialization, for any Muon iterate in the global descent phase defined in Section~\ref{sec: noisy:dynamics}, we have
\[
    \|\operatorname{msgn}(\bP - \widehat{\bP}_t) - \bI_{K}\|_{\max} = o_K(1).
\]
In the noiseless case, this conclusion holds for any finite time $t\geq 0$.
\end{proposition}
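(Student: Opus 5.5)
Our plan is to exploit the block-symmetric structure of $\bP-\widehat{\bP}_t$ that zero initialization and Muon preserve. First we prove by induction on $t$ that $\widehat{\bW}_t$ lies in a structured algebra. Its diagonal entries depend only on the group of the column. Its off-diagonal entries $(i,j)$ depend only on the pair of groups containing $i$ and $j$ (and on whether $i=j$). Equivalently, in block form with $M\times M$ blocks of size $C$,
\[
\widehat{\bW}_t=\diag(a_1,\dots,a_M)\otimes \bI_C+\bB\otimes \bJ_C ,
\]
with scalars $a_g$ and an $M\times M$ matrix $\bB$. The base case holds since $\widehat{\bW}_0=\boldsymbol{0}$. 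If $\widehat{\bW}_t$ has this form, then the softmax probabilities $\widehat{p}_{i\mid j}$ have the same symmetry. By Proposition~\ref{prop: gradient_decomposition} and Assumption~\ref{ass: freq}, $\bG_t=\widehat{\bP}_t-\bP$ has it too. The set of such matrices is a $*$-algebra invariant under transposition. It is therefore closed under the map $\bX\mapsto \bX(\bX^\top\bX)^{-1/2}$, so $\msgn(\bG_t)$ keeps the form (in the nonsingular case; otherwise use the polar factor restricted to the range). This closes the induction.

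Next we diagonalize. Write $\bI_C=(\bI_C-\tfrac1C\bJ_C)+\tfrac1C\bJ_C$. On the orthogonal complement of the all-ones vector within each block, the matrix $\bP-\widehat{\bP}_t$ acts as $\diag(c_1,\dots,c_M)$, where
\[
c_g=\tfrac{\widetilde p_g}{C}\bigl(p_{j\mid j}-\widehat p_{j\mid j}-(p_{i\mid j}-\widehat p_{i\mid j})\bigr)
\]
for $i\neq j$ in group $g$. On the $M$-dimensional span of the block indicators, it acts as an $M\times M$ matrix $\bD_t$. Hence
\[
\msgn(\bP-\widehat{\bP}_t)=\diag(\sgn c_g)\otimes(\bI_C-\tfrac1C\bJ_C)+\msgn(\bD_t)\otimes\tfrac1C\bJ_C ,
\]
with the second factor read in the normalized indicator basis. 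The key point is that if every $c_g>0$, then
\[
\bigl\|\msgn(\bP-\widehat{\bP}_t)-\bI_K\bigr\|_{\max}\le \tfrac{1}{C}\bigl(1+\|\msgn(\bD_t)\|_{\max}\bigr)\le \tfrac{2}{C}.
\]
Under $M\ll C$ this is $o_K(1)$. No control of $\bD_t$ is needed beyond $\|\msgn(\bD_t)\|_2\le 1$.

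It then remains to show $c_g>0$, i.e.\ the residual of the correct label exceeds that of an incorrect within-group label, which is sign information only. In the noiseless case, $p_{j\mid j}-\widehat p_{j\mid j}=1-\widehat p_{j\mid j}>0$ and $-(0-\widehat p_{i\mid j})\ge 0$ for all finite $t$, so $c_g>0$ always. In the noisy case the condition reads $\widehat p_{j\mid j}-\widehat p_{i\mid j}<1-\alpha$. This is precisely how the global descent phase of Section~\ref{sec: noisy:dynamics} should be characterized: the logit gap has not yet reached its optimal value $\log\bigl(1+(1-\alpha)K/\alpha\bigr)$.

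The main obstacle is this last step. We must verify, using the dynamics underlying Theorem~\ref{thm: muon_upper_bound}, that throughout the descent phase every group keeps $c_g>0$ simultaneously. The decisive quantity is the correct-versus-within-group-incorrect margin, not the cross-group ones. We would first try to show that this margin increases by $\eta(1+o_K(1))$ per step while $c_g>0$, and define the phase as ending at the first time any $c_g$ changes sign.
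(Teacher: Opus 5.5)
Your proposal is correct and follows the paper's route. Block symmetry is preserved by induction from $\widehat{\bW}_0=\boldsymbol{0}$. The $M(C-1)$ within-group contrast directions contribute $\sgn(c_g)(\bI_C-\tfrac1C\bJ_C)$ to each diagonal block, as in the paper's Proposition~\ref{pro:svd_general}, and the $M$ block-mean directions are treated separately. Your bound on that last part via $\|\operatorname{msgn}(\bD_t)\|_2\le 1$ is slightly sharper, giving $2/C$ where the paper sums $M$ rank-one terms and gets $(1+M)/C$.

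The step you flag as the main obstacle needs no dynamical argument. The paper defines the global descent phase (Phase~1, ending at $T'$) as exactly the period in which $\widehat{p}_{t,iC\mid iC}-\widehat{p}_{t,iC-1\mid iC}<1-\alpha$ for every group $i\in[M]$, i.e.\ all $c_g>0$, which is your proposed definition. The per-step margin growth you mention is needed only to locate $T'=\Theta(\log K/\eta)$, not for this proposition.
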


\looseness=-1 Equation~\eqref{eq:sign:gradient:6.1} and Proposition~\ref{prop: approx_identity} demonstrate that the spectral normalized gradient approximates the identity matrix during the descent phase, i.e., 
$\operatorname{msgn}(\bG_t) \approx \bI_K$ and $\widehat{\bW}_t \approx t \, \bI_K$. Thus, Muon (i) preconditions the update to identify the desired task representations and roughly optimize in these directions, and (ii) makes a near-isotropic update across task-representation directions.

\paragraph{Proof sketch of Proposition~\ref{prop: approx_identity}} The full proof is provided in Appendix~\ref{sec:proof_muon_noiseless} and Appendix~\ref{sec:proof_muon_noisy} for the noiseless case and the noisy case, respectively. The key observation is that, starting from $\bW_0=\mathbf{0}$, Muon updates preserve the block symmetry induced by the frequency groups, which can be proved by induction.
Hence, at every step, the prediction residual in the task-representation space admits the decomposition $\bP-\widehat{\bP}_t=\bR_t^+-\bR_t^-$, where $\bR_t^+$ is block diagonal with $M$ blocks and each diagonal block is proportional to $\bI_C$, while $\bR_t^-$ is a block-wise constant matrix containing $M^2$ blocks and each block is proportional to $\bJ_C$. 

For each group, the solution space of $\bz^\top \mathbf{1}_C=0$ yields a $(C-1)$-dimensional within-group contrast singular subspace. Let $\cS_i := \{\bx\in \RR^{K} \mid \bx_{C(i-1)+1: Ci}^\top \boldsymbol{1}_C = 0 \text{ and } x_j = 0 \text{ for } j \notin [C(i-1)+1, Ci] \}$ for $i\in [M]$. On each $\cS_i$, the block-wise constant term $\bR_t^-$ vanishes, while $\bR_t^+$ acts as a scalar multiple of the identity. Therefore, $\cS:=\cS_1\oplus\cdots\oplus\cS_M$ is an $M(C-1)$-dimensional singular subspace of $\bP-\widehat{\bP}_t$. The remaining singular directions lie in the $M$-dimensional block-mean subspace $\cS^\perp=\mathrm{span}\{\mathbf e_i\otimes \mathbf 1_C\}_{i=1}^M$, where $\mathbf{e}_i$ is the $i$-th standard basis vector in $\RR^M$. Thus, any unit singular vector in the $\cS^\perp$ is constant within each group, so each of its entries has magnitude at most $\tfrac{1}{\sqrt{C}}$.

Consider a matrix $\bX\in\RR^{C\times(C-1)}$ where the columns form an orthonormal basis for the solution space of $\bx^{\top}\boldsymbol{1}_C = 0$. We have $\bX\bX^\top = \bI_{C} - \tfrac{1}{C}\bJ_C$. Since the contribution from the remaining $M$ singular directions is at most $M/C$, we obtain $\|\operatorname{msgn}(\bP - \widehat{\bP}_t) -\bI_K\|_{\max}\leq \tfrac{1}{C}+\tfrac{M}{C} = o_K(1)$.

\begin{remark}With the frequency hierarchy, only the $M$ block-mean singular directions evolve nontrivially during training, while the remaining $M(C-1)$ within-group contrast directions are fixed by symmetry. The matrix sign operator therefore plays two roles: it preconditions the aligned gradient against frequency disparities across different groups, and it dilutes the variation of this evolving part over the $C$ coordinates within each group, making the update close to $\bI_K$.\end{remark}
\subsection{Comparison between $\operatorname{msgn}$ and $\operatorname{sgn}$ operators}
To better understand the matrix sign operator $\operatorname{msgn}$ in Muon, we compare it with the coordinate-wise sign operator $\operatorname{sgn}$ in SignGD, which is a simplified version of Adam. 
\vspace{-1em}
\paragraph{TRA-SignGD.} 
We introduce Task-Representation Aligned SignGD (TRA-SignGD), an idealized variant of SignGD whose updates depend on the \emph{unknown} matrices $\bE$ and $\widetilde{\bE}$.  
Intuitively, TRA-SignGD first maps the gradient into the task-representation space, applies the coordinate-wise sign there, and then maps the signed update back to the original parameter space. TRA-SignGD is introduced purely as an analytical device to compare the preconditioning via sign mapping, rather than as a practical optimizer or a proposal for improving optimization. Concretely, the update rule is
    \begin{equation*}
        \bW_{t+1} = \bW_t - \eta\, \widetilde{\bE} \, \sgn\Big(\widetilde{\bE}^{\top} \nabla_{\bW_t} \mathcal{L}(\bW_t) \bE\Big) \bE^{\top}.
    \end{equation*}
Under the notations in \eqref{eq:weight:hat}, the update rule of TRA-SignGD can be rewritten as:
\begin{equation}
    \widehat{\bW}_{t+1} = \widehat{\bW}_t - \eta \operatorname{sgn}(\bG_t) .
\end{equation}
With TRA-SignGD, the optimization of different sub-tasks are independent. For each sub-task, zero initialization and symmetry preservation keep the predicted probabilities of all incorrect classes identical throughout training. Viewed through the logit gap, each sub-task reduces to a one-dimensional optimization problem with a strictly convex objective, so taking the coordinate-wise sign in the aligned basis always moves toward the optimum. Moreover, learning is balanced across sub-tasks because the update speed is the same along all aligned coordinates. We establish in the following theorem that SignGD, when equipped with task-representation alignment, can achieve performance comparable to Muon. The formal statement and proof are provided in Appendix~\ref{appendix: TRA-SignGD}.
\begin{theorem}[Informal]
\label{thm: results_of_TRA_SignGD}
    The results of Theorems~\ref{thm: muon_upper_bound}, \ref{thm: Muon_total_loss}, and \ref{thm: scaling_law_for_Muon}, derived for Muon with a learning rate of $2\eta$, also hold for TRA-SignGD with a learning rate of $\eta$.
\end{theorem}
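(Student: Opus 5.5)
The plan is to work entirely in the task-representation space, where TRA-SignGD reads $\widehat{\bW}_{t+1}=\widehat{\bW}_t-\eta\operatorname{sgn}(\bG_t)$ with $\bG_t=\widehat{\bP}_t-\bP$. The key point is that the proof can reuse the Muon analysis almost verbatim, because both algorithms reduce to the same scalar recursion on logit gaps.

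\textbf{Step 1: symmetry.} First I show by induction from $\widehat{\bW}_0=\mathbf{0}$ that each column $j$ of $\widehat{\bW}_t$ has the form $a_t^{(j)}\mathbf{e}_j+b_t^{(j)}(\mathbf{1}_K-\mathbf{e}_j)$. Given this form, $\widehat{p}_{i\mid j}$ takes one value on the diagonal and a common value off it, and the same holds for $p_{i\mid j}$. The sign of the $j$-th gradient column is therefore also of this diagonal/off-diagonal form.

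\textbf{Step 2: the sign pattern.} The diagonal residual and the common off-diagonal residual have opposite signs, since column probabilities sum to one. So whenever $\widehat{p}_{j\mid j}<p_{j\mid j}$, the update is $+\eta$ on the diagonal entry and $-\eta$ on every off-diagonal entry. The logit gap $\Delta_t^{(j)}=a_t^{(j)}-b_t^{(j)}$ therefore evolves as
\[
\Delta_{t+1}^{(j)}=\Delta_t^{(j)}+2\eta\,\sgn\big(p_{j\mid j}-\widehat{p}_{j\mid j}(\Delta_t^{(j)})\big).
\]
Here $\widehat{p}_{j\mid j}$ is a strictly increasing function of $\Delta$. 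This sign-based gap recursion is independent of $j$ and of $p_j$, since the sign kills the frequency factor.

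\textbf{Step 3: compare with Muon.} By Proposition~\ref{prop: approx_identity}, Muon in the descent phase has update $\approx\bI_K$ in the max norm. Its gap grows by $\eta(1+o_K(1))$ per step, because the $o_K(1)$ off-diagonal corrections enter both $a$ and $b$ at lower order. So Muon at rate $2\eta$ gives gap increments $2\eta(1+o_K(1))$, matching TRA-SignGD at rate $\eta$ exactly up to the $o_K(1)$ factor.

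I then redo the loss computations behind Theorems~\ref{thm: muon_upper_bound} and \ref{thm: Muon_total_loss} for the recursion in Step 2:
\begin{itemize}
\item In the descent phase, $\Delta_t=2\eta t$, which gives $\cL_j\lesssim Ke^{-2\eta t}+\eta t$-type bounds.
\item The gap reaches the finite optimum $\Delta^*=\log\big((1-\alpha+\alpha/K)K/\alpha\big)$ at time $\Theta(\log K/\eta)$.
\item After that, $\Delta_t$ oscillates within $2\eta$ of $\Delta^*$, and a second-order Taylor expansion of the strictly convex one-dimensional loss around $\Delta^*$ gives excess loss $\cO(\eta^2)$.
\end{itemize}
The scaling-law statement, Theorem~\ref{thm: scaling_law_for_Muon}, then follows by plugging in $\eta=\Theta(\log K/T)$, as in the Muon case.

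\textbf{Main obstacle.} The delicate step is the oscillation phase, in two respects. I need to argue that the sign recursion never overshoots by more than one step. I also need to check that the loss, as a function of the gap, has curvature of order $\Theta(1)$ near $\Delta^*$ uniformly in $K$, so that the excess loss is $\cO(\eta^2)$ with constants independent of $K$. For the curvature, I would first compute the second derivative of $-\,p_{j\mid j}\log\widehat{p}_{j\mid j}-\sum_{i\ne j}p_{i\mid j}\log\widehat{p}_{i\mid j}$ in $\Delta$. It equals $\widehat{p}_{j\mid j}(1-\widehat{p}_{j\mid j})\le 1/4$, which gives the needed upper bound directly. The noiseless case is easier: the sign never flips, so $\Delta_t=2\eta t$ for all $t$.
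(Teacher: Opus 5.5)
Your proposal is correct and follows essentially the paper's argument. The paper also shows that every column receives the same task-space update $\eta(2\bI_K-\bJ_K)$, so the gap moves by $\pm 2\eta$ and all sub-tasks flip simultaneously. It then computes the descent-phase loss $(K-1)e^{-2\eta t}+2\alpha\eta t$ and the flip time $T'\eqsim\frac{1}{2\eta}\log\frac{(1-\alpha)K}{\alpha}$, bounds the oscillation excess by a second-order expansion to get $\cO(\eta^2)$, and takes $\eta=\frac{1}{2T}\log\frac{(1-\alpha)K}{\alpha}$ for the scaling law. Your curvature bound $\widehat{p}_{j\mid j}(1-\widehat{p}_{j\mid j})\le 1/4$ is a slightly cleaner, non-asymptotic replacement for the paper's Taylor expansion of $\log S$ under its ``worst-case'' assumption. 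The overshoot issue you flag is immediate: after the first crossing the recursion is a $2$-cycle with $|\Delta_t-\Delta^*|<2\eta$.
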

\begin{figure*}[t]
\vskip 0.2in
\centering
\begin{subfigure}[t]{0.24\textwidth}
  \centering
  \includegraphics[width=\linewidth]{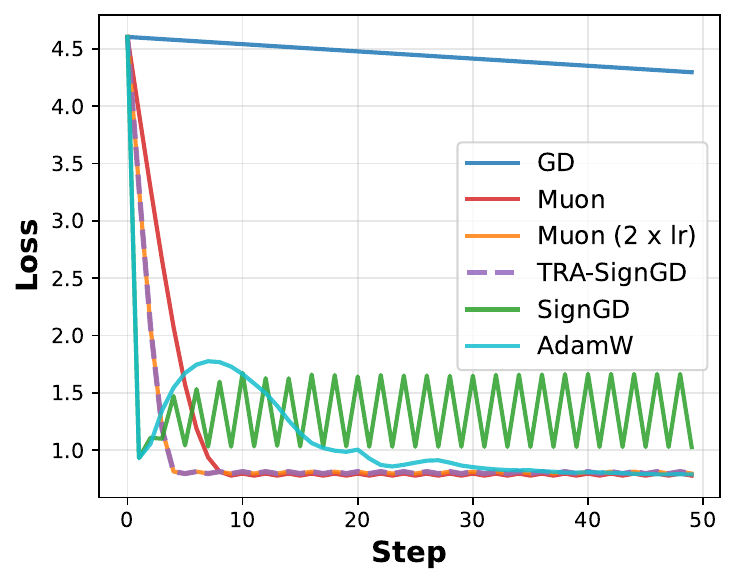}
  \caption{Loss dynamics}
  \label{fig: linear_loss}
\end{subfigure}
\hfill
\begin{subfigure}[t]{0.24\textwidth}
  \centering
  \includegraphics[width=\linewidth]{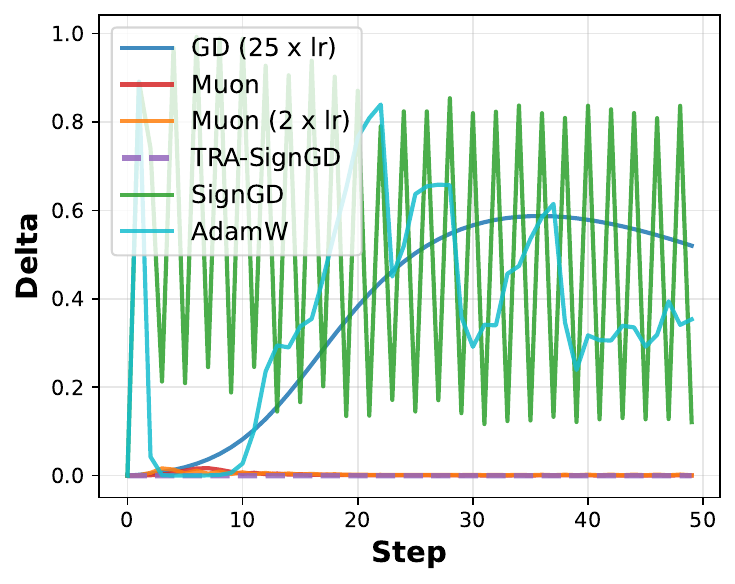}
  \caption{Disparity in predictions}
  \label{fig: linear_gap}
\end{subfigure}
\hfill
\begin{subfigure}[t]{0.24\textwidth}
  \centering
  \includegraphics[width=\linewidth]{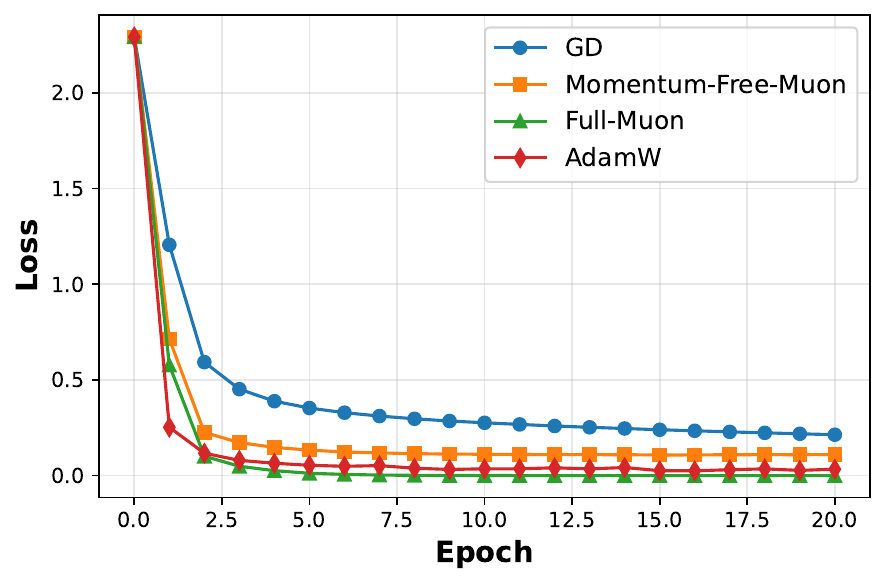}
  \caption{Loss dynamics}
  \label{fig: mnist_loss}
\end{subfigure}
\hfill
\begin{subfigure}[t]{0.24\textwidth}
  \centering
  \includegraphics[width=\linewidth]{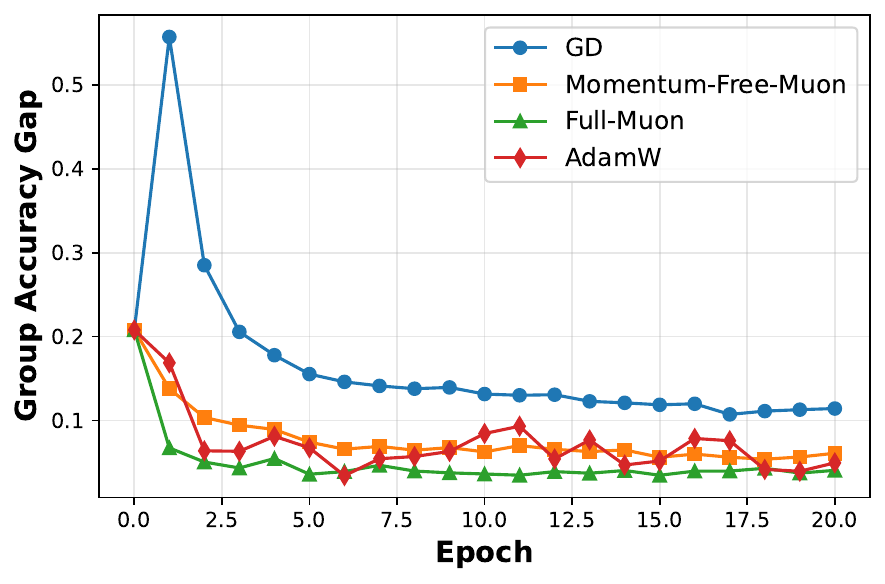}
  \caption{Disparity in accuracies}
  \label{fig: mnist_gap}
\end{subfigure}
\caption{(a)(b): Numerical simulations; (c)(d): Synthetic imbalanced classification.}
\label{fig:summary-results}
\vskip -0.2in
\end{figure*}
However, strictly implementing TRA-SignGD is infeasible as task representations are typically unknown.
When applying vanilla SignGD directly in the original parameter coordinates, it fails to exploit the latent structure, so its per-subtask behavior can no longer be guaranteed.
\paragraph{SignGD in the original coordinates.}
\label{sec:naive_signGD}
{Viewed in the task-representation space, the update of vanilla SignGD is $$\widehat{\mathbf{W}}_{t+1} = \widehat{\mathbf{W}}_t - \eta \widetilde{\mathbf{E}}^{\top}\operatorname{sgn}(\widetilde{\mathbf{E}}\mathbf{G}_t\mathbf{E}^{\top})\mathbf{E}.$$ When the task-representation basis is misaligned with the original coordinate basis, the update of vanilla SignGD can be roughly viewed as applying the aligned gradient in an arbitrary coordinate system, performing the sign operation, and rotating it back. The high-frequency sub-tasks dominate the direction after rotation due to their large gradients, while the directional signals of the low-frequency sub-tasks are submerged, so their updates may fail to align with the correct direction, causing ineffective learning. When the number of low-frequency sub-tasks is large enough that their loss dominates the total loss, the optimization with SignGD will oscillate at a high loss. Moreover, the misalignment also makes the effective update norm in the task-representation space uncontrolled, which can be large and further leads to large-amplitude oscillations. Both phenomena are observed in our numerical simulations, as shown in Figure~\ref{fig: linear_loss} and~\ref{fig: linear_gap}.}

This observation may offer a possible rationale for the empirical effectiveness of matrix-sign preconditioning reported in large-scale model training~\citep{jordan2024muon,liu2025muon,wen2025fantastic}. 
Prior work has discovered optimization heterogeneity across blocks in Transformers and shown that exploiting such heterogeneity can improve training efficiency~\citep{zhang2024transformers,zhang2024adam,zhang2024understanding,wang2025sharpness,li2026scaling}.
However, focusing on each sub-block, even if there is some block-diagonal structure viewed in another basis (such as task representation space), SignGD/Adam fails to capture it since the basis is typically inaccessible. Muon, on the other hand, can adapt to such structures and may provide effective preconditioning.

%% file: Experiments.tex
\section{Experiments}
\label{sec: experiments}
In this section, we validate our theory across increasingly realistic settings, gradually relaxing the assumptions made in our theoretical analysis. 
We start with controlled numerical simulations of the linear softmax associative memory model,
then turn to the imbalanced MNIST classification \citep{lecun2002gradient}. 
Finally, we run LLaMA pre-training in multiple data budgets to measure optimization scaling laws and test whether Muon’s data-efficiency advantage persists in realistic LLM training. Although our goal is not to directly compare Muon and AdamW, we also report AdamW results in the numerical simulations and synthetic imbalanced classification experiments to provide additional practical context.
\subsection{Numerical simulations}
\label{sec: numerical}
\paragraph{Settings.} We conduct numerical simulations of the linear softmax associative memory model. To verify the dynamics predicted by our theory, we compare the performance of five optimizers: GD, Muon, TRA-SignGD, and SignGD, AdamW ($\beta_1=0.9$, $\beta_2=0.999$, weight decay = $0.01$). We consider the following problem setting: $K=100$, $M=C=10$, the total steps $T=50$, the base learning rate $\eta = 0.75$, and the noise level $\alpha = 0.1$. 
To quantify imbalance,
we define the maximal probability gap as $\Delta_t = \max_{j\in[K]} \widehat{p}_{j\mid j}(\bW_t) - \min_{j\in[K]} \widehat{p}_{j\mid j}(\bW_t)$. Due to the slow convergence of GD, we additionally employ a larger learning rate of 25$\eta$ when measuring the disparity, allowing for a meaningful visual comparison of the imbalanced learning within the same timeframe. See more details in Appendix~\ref{appendix: numerical}.
\vspace{-1em}
\paragraph{Results.} The results are presented in Figures~\ref{fig: linear_loss} and~\ref{fig: linear_gap}. Evidently, Muon outperforms GD by effectively mitigating imbalanced learning. Notably, TRA-SignGD exhibits a loss trajectory nearly identical to that of Muon scaled by a factor of 2 in learning rate (Theorem~\ref{thm: results_of_TRA_SignGD}), whereas vanilla SignGD fails to capture low-frequency knowledge, resulting in a higher probability gap and larger oscillations as analyzed in Section~\ref{sec:naive_signGD}. As for AdamW, it exhibits pronounced oscillations in its loss trajectory and large nonsystematic fluctuations in the maximal probability gap, suggesting that it does not achieve balanced learning across groups.
\subsection{Synthetic imbalanced classification}
\paragraph{Settings.} 
To empirically test whether our theoretical conclusions continue to hold beyond the idealized setting, we consider a nonlinear imbalanced MNIST classification task. We relax the strict orthogonal-embedding assumption (viewing the image inputs as the analogue of embeddings in our theoretical setting) and employ a two-layer MLP with ReLU activation to extend the analysis beyond the linear regime. We partition the training dataset into groups with varying retention rates to simulate an imbalanced distribution, while maintaining a balanced test dataset to ensure an unbiased evaluation. We compare the performance of SGD against the momentum-free Muon. We also report optimization dynamics for full-Muon and AdamW to provide additional practical context. All optimizers share a fixed learning rate of 0.005. Performance is evaluated by the training loss and the group accuracy gap defined as the maximum disparity between group accuracies. See more details in Appendix~\ref{appendix: synthetic}.
\vspace{-2em}
\paragraph{Results.} As illustrated in Figures~\ref{fig: mnist_loss} and \ref{fig: mnist_gap}, GD is bottlenecked by imbalanced learning arising from distinct group frequencies. In contrast, momentum-free Muon overcomes this spectral disparity, resulting in accelerated convergence.
Compared to AdamW, full Muon achieves better accuracy on the low-frequency group and more balanced learning, while its performance on the high-frequency group is similar, resulting in a smaller group-accuracy gap and a lower training loss. These results empirically support that our theoretical insights generalize beyond the idealized associative memory learning and hold robustly in more realistic regimes.
\subsection{Language model pre-training}
\paragraph{Settings.} To verify optimization scaling laws in realistic language modeling settings, we conduct extensive pre-training experiments using the LLaMA architecture~\citep{touvron2023llama}. We utilize a model configuration with approximately 100M parameters, trained on a 10B-token corpus with a constant learning rate, batch size 512, and sequence length 2048. By maintaining a fixed batch size for both SGD and Muon, the total number of training steps scales linearly with the data budget. 
For each budget, we sweep learning rates over $\{0.000125, 0.00025, 0.0005, 0.001, 0.002, 0.004, 0.008\}$ 
and report the optimal validation loss. We fit the resulting data scaling curves to the power-law form $\cL(D) = \cL^* + a D^{-\gamma}$, where $\cL^*$ denotes the irreducible risk and $\cL(D)-\cL^*$ represents the excess risk.
\vspace{-1em}
\paragraph{Results.} Figure~\ref{fig: scaling_law} illustrates the data scaling laws for SGD and Muon. Aligned with our theoretical findings from linear associative memory learning (Theorem~\ref{thm: scaling_law_for_GD} and Theorem~\ref{thm: scaling_law_for_Muon}), we observe a distinct power-law decay in the optimal excess risk with respect to the data budget. Crucially, Muon exhibits a steeper scaling slope (faster scaling rate) compared to SGD. This confirms that the data efficiency advantage of Muon generalizes beyond the simplified linear softmax associative memory learning regime and holds robustly in LLM pre-training.
\begin{figure}[t]
    \centering
    \includegraphics[width=0.625\linewidth]{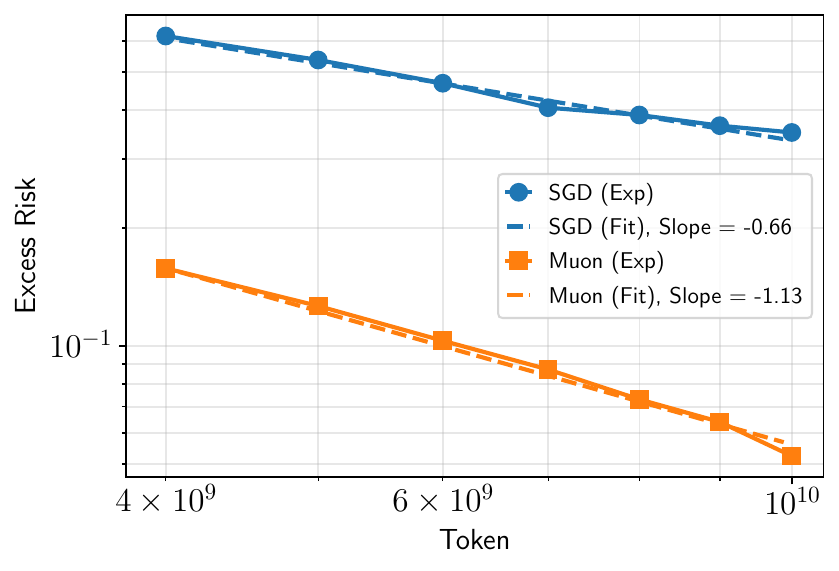}
    \caption{Data scaling behavior in language model pre-training. To extract the scaling envelope, we sweep over learning rates for each data budget and plot the optimal validation loss.}
    \label{fig: scaling_law}
    \vspace{-1.5em}
\end{figure}

%% file: Conclusion.tex
\section{Conclusions and Limitations}
\label{sec:limitations}



In this work, we analyze the Muon optimizer within a linear associative memory framework and show that it mitigates the frequency-dependent optimization imbalance that bottlenecks GD under hierarchical data. Our analysis identifies implicit task alignment and matrix-gradient preconditioning as the key mechanisms: Muon adaptively aligns the update with task representations and promotes more uniform progress across frequency components. These mechanisms yield an exponential speedup in the noiseless case and improved scaling laws under label noise, and are supported empirically by synthetic experiments and LLM pre-training results. More broadly, our results shed light on a practical bottleneck in modern LLM training, where heavy-tailed language distributions and heterogeneous data sources can cause optimization to be dominated by frequent patterns while underrepresented components are learned much more slowly. This perspective suggests that Muon is particularly worth considering when training data are highly imbalanced or rare-frequency components matter, while also leaving several limitations and future directions discussed below.

\textbf{Orthogonal assumption and initialization.}
Our theoretical analysis relies on orthogonal task embeddings, and the behavior under relaxed assumptions is only partially verified by experiments, leaving a theoretical gap. Our theory also relies on zero initialization. We provide additional numerical simulations under random initialization, examining loss dynamics, prediction disparity, and task-space update matrices to analyze alignment behavior. With small random initialization, Muon remains close to the zero-initialization case; with larger random initialization, it remains largely diagonal and balanced but becomes noisier. GD consistently shows clear frequency-dependent disparities. More details are provided in Appendix~\ref{app: random_init}.

\textbf{Gap between associative memory model and practical LLM training.}
Our theory focuses on a linear associative memory model, a standard and widely adopted abstraction for rigorously analyzing neural network dynamics~\citep{wang2025muon,kunstner2025scaling,kunstner2024heavytailed,zhong2025understandingtransformerperspectiveassociative}, but it still differs from the transformer setting. Future work can extend the analysis to more realistic deep-network and LLM training settings, studying how the alignment mechanism identified here arises and what structures Muon captures in practice.

\textbf{Limited scope of preconditioning analysis.}
Our preconditioning analysis focuses on sign-style methods and removes the influence of momentum. Covariance-based preconditioners, such as Shampoo~\citep{pmlr-v80-gupta18a} and SOAP~\cite{vyas2025soapimprovingstabilizingshampoo}, are not covered. Extending the alignment perspective to these methods and to full practical Muon variants is an important future direction.



%% file: Appendix.tex
\newpage
    \renewcommand{\contentsname}{Appendix Contents}
\startcontents[appendixtoc]
\printcontents[appendixtoc]{}{1}{\section*{\contentsname}}

\newpage
\section{Notations}
The notations used in this appendix are consistent with those in the main paper. For clarity, we summarize all notations here:
\subsection{Setting and Model}
We consider a linear associative memory model. 
$K$ is a positive integer, representing the number of pairs of queries and answers to be memorized.
We assume $K$ is sufficiently large.

Each query and answer is represented as a $d$-dimensional vector. As we mentioned in the main paper, we set $d=K$ for simplicity.
$\bE_j\in \RR^K$ is the embedding of the $j$-th query and $\widetilde{\bE}_j \in \RR^K$ is the embedding of the $j$-th answer.
Denote $\bE\in \RR^{K \times K}$ and $\widetilde{\bE} \in \RR^{K \times K}$ as the embedding matrices of queries and answers, respectively.
\begin{equation*}
\bE = [\bE_1, \bE_2, \dots, \bE_K], \quad \widetilde{\bE} = [\widetilde{\bE}_1, \widetilde{\bE}_2, \dots, \widetilde{\bE}_K].
\end{equation*}

The weight matrix of the linear associative memory model is denoted as $\bW\in \RR^{K \times K}$.
$\bW_t$ is the weight matrix at time step $t$ during training. $\widehat{\bW}_t$ is defined as $\widehat{\bW}_t = \widetilde{\bE}^{\top} \bW_t \bE$, representing the weight matrix in the embedding space.

The predicted probability of the $i$-th answer given the $j$-th query at time step $t$ is denoted as $\widehat{p}_{t,i|j}$. 
It is defined as $$\widehat{p}_{t,i|j} = \softmax(\widetilde{\bE}^\top\bW_t\bE_j)_i=\frac{\exp(\widetilde{\bE}^{\top}_i\bW_t \bE_j)}{\sum_{l=1}^{K}\exp(\widetilde{\bE}^{\top}_l\bW_t \bE_j)}.$$

\textbf{Assumptions about the dataset: }
We assume $K$ pairs can be divided into several groups, where pairs in the same group share the same probability in the dataset.
$M$ is a positive integer, representing the number of groups. To simplify the analysis, we assume the size of each group is equal, 
which requires that $K$ is divisible by $M$. The size of each group is denoted as $C$, satisfying $C=K/M$. 
We assume $C$ is sufficiently large, which means $K$ is much larger than $M^2$.

The probability of the $j$-th query in the dataset is denoted as $p_j$. Without loss of generality, we assume the $i$-th group comprises pairs indexed from $(i-1)C + 1$ to $iC$, $i=1,2,\cdots,M$. $\widetilde{p}_i$ denotes the sum of probabilities of all pairs in the $i$-th group. Without loss of generality, we assume $\widetilde{p}_1 \geq \widetilde{p}_2 \geq \cdots \geq \widetilde{p}_M$.

The conditional probability that the $i$-th answer is observed given the $j$-th query in the dataset is denoted as $p_{i|j}$.

The noise level is denoted as $\alpha \in [0,1)$, representing the probability that the observed label is incorrect.
When the noise level is $\alpha$, the observed label is  correct with probability $1-\alpha$ and is uniformly randomly assigned to one of $K$ pairs with probability $\alpha$.
In other words, the conditional probability $p_{i|j}$ can be expressed as:
\begin{equation*}
p_{i|j} = \begin{cases} 1-\alpha + \frac{\alpha}{K}, & \text{if } i=j \\ \frac{\alpha}{K}, & \text{if } i\neq j \end{cases}.
\end{equation*}

We refer to the case where $\alpha = 0$ as the noiseless case, while the case where $\alpha > 0$ is referred to as the noisy case.

\textbf{Scores: }
To facilitate the analysis of optimization dynamics, we define 
$s^+$ and $s^-$ as the exponentiated logits (scores) for correct and incorrect class predictions, respectively.

Let $s^+_{t,i|j}$ denote the score at time $t$ for a correct prediction in the $i$-th group, given the $j$-th query as input. This score is well-defined only when the input query 
$j$ belongs to group $i$:
\begin{equation*}
s^+_{t,i|j} = \exp(\widetilde{\bE}^{\top}_j\bW_t \bE_j).
\end{equation*}
Similarly, we let $s^-_{t,i|j}$ represent the score for an incorrect prediction in group $i$ under input query 
$j$. We further demonstrate that all incorrect labels within the same group share a uniform prediction score. Thus, the score for an arbitrary incorrect entry in group 
$i$ is defined as:
\begin{equation*}
  s^-_{t,i|j} =\exp(\widetilde{\bE}^{\top}_l\bW_t \bE_j),
\end{equation*}
where $l$ is any index such that $l$ belongs to group $i$ and $l \neq j$.

\subsection{Loss Function and Gradient}
The cross-entropy loss at time step $t$ is denoted as $\mathcal{L}(\bW_t)$.
The gradient of the loss function with respect to $\bW_t$ is denoted as $\nabla_{\bW_t} \mathcal{L}(\bW_t)$.
Let $\bG_t = \widetilde{\bE}^{\top} \nabla_{\bW_t} \mathcal{L}(\bW_t) \bE$ be the gradient in the task representation space.

As the memory task can be divided into $K$ sub-tasks based on the $K$ queries, the total loss can be viewed as the sum of the cross-entropy losses related to each sub-task.
The cross-entropy loss related to the $i$-th sub-task is denoted as $\mathcal{L}_i(\bW_t)$. The total loss can be expressed as:
\begin{equation*}
    \mathcal{L}(\bW_t) = \sum_{i=1}^{K} p_i\mathcal{L}_i(\bW_t).
\end{equation*}

The optimal value of the loss function is denoted as $\mathcal{L}^*$. 

\subsection{Optimizer}
\label{sec:optimizer}
Throughout this work, we assume zero initialization for the weight matrix that 
\begin{equation}
\label{eq:initialization}
  \bW_0 = \mathbf{0}.
\end{equation}
For each optimizer discussed below, we employ a fixed learning rate $\eta > 0$. 
Note that while $\eta$ is constant throughout the training trajectory of a single model, 
its value may vary across different optimizers.

We consider Gradient Descent (GD), Adam, and Muon optimizers in our classification task. 
To facilitate our analysis, we employ simplified versions of Adam and Muon by omitting momentum. 
Furthermore, for Adam, we also set $\beta_2=0$ and omit the numerical stability epsilon, reducing the optimizer to Sign Gradient Descent (SignGD). 

\begin{itemize}
    \item \textbf{Gradient Descent (GD):} The weight matrix is updated directly using the gradient:
    \begin{equation*}
        \bW_{t+1} = \bW_t - \eta \nabla_{\bW_t} \mathcal{L}(\bW_t).
    \end{equation*}
    \item \textbf{SignGD:} The update depends only on the sign of the gradient:
    \begin{equation*}
        \bW_{t+1} = \bW_t - \eta \, \sgn\left(\nabla_{\bW_t} \mathcal{L}(\bW_t)\right).
    \end{equation*}
    \item \textbf{Muon:} The update is based on the matrix sign function of the gradient:
    \begin{equation*}
        \bW_{t+1} = \bW_t - \eta\, \msgn\left(\nabla_{\bW_t} \mathcal{L}(\bW_t)\right).
    \end{equation*}
    Let the SVD of the gradient be $\nabla_{\bW_t} \mathcal{L}(\bW_t) = \bU \mathbf{\Sigma} \bV^{\top}$. The matrix sign function is defined as $\msgn\left(\nabla_{\bW_t} \mathcal{L}(\bW_t)\right)=\bU\sgn(\mathbf{\Sigma})\bV^{\top}$. 
    The update rule of Muon can be expressed as:
    \begin{equation*}
        \bW_{t+1} = \bW_t - \eta\, \bU \sgn(\mathbf{\Sigma}) \bV^{\top}.
    \end{equation*}
\end{itemize}
Using $\widehat{\bW}_t$ and $\bG_t$, the update rules of these optimizers can be rewritten as:
\begin{itemize}
    \item \textbf{GD: }
    \begin{equation}
    \label{eq:gd_update}
        \widehat{\bW}_{t+1} = \widehat{\bW}_t - \eta \bG_t.
    \end{equation}
    \item \textbf{SignGD: }
    \begin{equation*}
        \widehat{\bW}_{t+1} = \widehat{\bW}_t - \eta \widetilde{\bE}^{\top}\sgn\big(\widetilde{\bE}\bG_t\bE^{\top}\big) \bE.
    \end{equation*}
    \item \textbf{Muon: }
    \begin{equation}
    \label{eq:muon_update}
      \widehat{\bW}_{t+1} = \widehat{\bW}_t - \eta\, \msgn(\bG_t).
    \end{equation}
\end{itemize}
In the following analysis, we primarily focus on the updates of $\widehat{\bW}_t$.

\subsection{Auxiliary Vectors, Matrices, and Functions}
We define $\mathbf{1}_C \in \RR^C$ as the $C$-dimensional vector of all ones. Let $\bI_C \in \RR^{C \times C}$ be the $C \times C$ identity matrix, and $\bJ_C \in \RR^{C \times C}$ be the $C \times C$ all-ones matrix.

We denote the set of all discrete probability distributions over $K$ outcomes by $\Delta^{K-1}$. 
For a distribution $P \in \Delta^{K-1}$, $P=(p_1,p_2,\cdots,p_K)$, its entropy is defined as: 
\begin{equation}
\label{eq:entropy}
  H(P) = -\sum_{i=1}^K p_i \log p_i.
\end{equation} 
For two distributions $Q_1, Q_2 \in \Delta^{K-1}$, $Q_1=(q_{1,1},q_{1,2},\cdots,q_{1,K})$, $Q_2=(q_{2,1},q_{2,2},\cdots,q_{2,K})$. 
The Kullback-Leibler (KL) divergence is defined as:
\begin{equation}
\label{eq:kl_divergence} 
D_{\mathrm{KL}}(Q_1 \parallel Q_2) = \sum_{i=1}^K q_{1,i} \log (q_{1,i} / q_{2,i}).
\end{equation}

\section{Supporting Propositions}
The supporting propositions that will be used in the proof are presented in this section.
\subsection{Supporting Propositions of Gradient}
\label{sec:gradient}
We denote the probability of predicting the $i$-th answer given that the input is the $j$-th query at time step $t$ as:
$$\widehat{p}_{t,i|j}=\frac{\exp(\widetilde{\bE}^{\top}_i\bW_t \bE_j)}{\sum_{l=1}^{K}\exp(\widetilde{\bE}^{\top}_l\bW_t \bE_j)}.$$
The cross-entropy loss and its gradient are analyzed in both noiseless and noisy cases.

\subsubsection{Noiseless case}
The cross-entropy loss in the noiseless case is:
\begin{equation}
\label{eq:loss}
\mathcal{L}(\bW_t) = -\sum_{j=1}^{K} p_j \log \widehat{p}_{t,j|j}.
\end{equation}
The gradient of the loss function with respect to $\bW_t$ is:
\begin{align}
\nabla_{\bW_t} \mathcal{L}(\bW_t) &= -\sum_{j=1}^{K} p_j \bigg((1 - \widehat{p}_{t,j|j}) \widetilde{\bE}_j \bE_j^\top - \sum_{i \neq j}\widehat{p}_{t,i|j} \widetilde{\bE}_i \bE_j^\top \bigg)  \notag\\
&=-\widetilde{\bE}\cdot (\bP-\widehat{\bP}_t) \cdot \bE^\top\label{eq:grad},
\end{align}
where $\bP = \mathrm{diag}(p_1, p_2, \dots, p_K)$ is the ground-truth joint probability matrix, and $\widehat{\bP}_t \in \RR^{K\times K}$ is the predicted joint probability matrix at time step $t$ with entries $(\widehat{\bP}_t)_{i,j} = p_j\widehat{p}_{t,i|j}$.

\subsubsection{Noisy case}
\label{sec:noisy_case}
We consider label noise with noise level $\alpha \in [0,1)$, where the observed label is correct with probability $1-\alpha$ and is uniformly randomly assigned to one of the $K$ possible values with probability $\alpha$.

Denoting the probability that the $j$-th query is matched with the $i$-th answer in the dataset as $p_{i|j}$, we have:
\begin{equation*}
p_{i|j} = \begin{cases} 1-\alpha + \frac{\alpha}{K}, & \text{if } i=j \\ \frac{\alpha}{K}, & \text{if } i\neq j \end{cases}.
\end{equation*}
The cross-entropy loss in the noisy case is:
\begin{equation}
\label{eq:loss_noisy}
\mathcal{L}(\bW_t) = -\sum_{j=1}^{K} p_j \bigg(\sum_{i=1}^{K} p_{i|j} \log \widehat{p}_{t,i|j}\bigg).
\end{equation}
The gradient of the loss function with respect to $\bW_t$ is:
\begin{align}
\nabla_{\bW_t} \mathcal{L}(\bW_t) &= -\sum_{j=1}^{K} p_j \sum_{i=1}^{K}\bigg(p_{i|j}(\widetilde{\bE}_i\bE_j^{\top}-\sum_{l=1}^{K}\whp_{t,l|j}\widetilde{\bE}_l\bE_j^{\top})\bigg) \notag\\
&=-\sum_{i,j}p_j (p_{i|j}-\hp_{t,i|j}) \widetilde{\bE}_i\bE_j^{\top} \notag\\
&= -\widetilde{\bE}\cdot (\bP'-\widehat{\bP'}_t) \cdot \bE^\top\label{eq:grad_noisy},
\end{align}
where $\bP' \in \RR^{K\times K}$, $\widehat{\bP'}_t \in \RR^{K\times K}$ satisfy:
\begin{equation*}
(\bP')_{i,j} = p_j p_{i|j}, \quad (\widehat{\bP'}_t)_{i,j} = p_j \widehat{p}_{t,i|j}.
\end{equation*}
\subsection{Supporting Propositions of SVD}
\begin{proposition}
\label{pro:svd}
Let $\bQ = \bA+\bB\in \RR^{CM \times CM}$ be a matrix satisfying:
\begin{itemize}
$\mathbf{A} = \mathrm{diag}(a_1 \mathbf{1}_C^\top, a_2 \mathbf{1}_C^\top, \dots, a_M \mathbf{1}_C^\top)$ is a diagonal matrix, 
where the first $C$ elements on the diagonal are $a_1$, the second $C$ elements are $a_2$ and so on, up to $a_M$. $a_i>0$ for all $i\in[M]$.\\ 
$\bB$ is a block matrix with $M\times M$ blocks and each block is a $C\times C$ matrix, defined as:
$$\bB=\begin{pmatrix}
b_{1,1}\bJ_C & b_{1,2}\bJ_C & \ldots & b_{1,M}\bJ_C \\
b_{2,1}\bJ_C & b_{2,2}\bJ_C & \ldots & b_{2,M}\bJ_C \\
\vdots & \vdots & \ddots & \vdots \\
b_{M,1}\bJ_C & b_{M,2}\bJ_C & \ldots & b_{M,M}\bJ_C
\end{pmatrix},$$
where $\bJ_C$ is a $C\times C$ matrix with all entries being 1.
\end{itemize}
Then, the matrix sign of the matrix $\mathbf{Q}$ satisfies:
\begin{equation*}
    \msgn(\bQ) = \bI_{CM} + \bB',
\end{equation*}
where $\bI_{CM}$ is the identity matrix in $\RR^{CM \times CM}$ and $\mathbf{B}'$ shares the same block structure as $\bB$ (a block matrix with $M\times M$ blocks and each block is a $C\times C$ matrix). 
Furthermore, each entry in $\mathbf{B}'$ is $\mathcal{O}(M/C)$, as $C/M \to \infty$.
\end{proposition}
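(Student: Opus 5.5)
The plan is to diagonalize $\bQ$ with respect to the orthogonal decomposition $\RR^{CM}=\cS\oplus\cS^\perp$ introduced in the proof sketch of Proposition~\ref{prop: approx_identity}. Here $\cS=\cS_1\oplus\cdots\oplus\cS_M$ is the within-group contrast subspace, and $\cS^\perp=\mathrm{span}\{\bu_k\}_{k=1}^M$ with $\bu_k:=\tfrac{1}{\sqrt C}\,\mathbf e_k\otimes\mathbf 1_C$ is the block-mean subspace. The claim then follows because $\msgn$ acts separately on the two pieces.

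\textbf{Step 1 (invariance).} First I show that $\bQ$ and $\bQ^\top$ both map $\cS$ into $\cS$ and $\cS^\perp$ into $\cS^\perp$.
\begin{itemize}
\item Take $\bx\in\cS_i$. Every block of $\bB$ and of $\bB^\top$ is a multiple of $\bJ_C$, and $\bJ_C$ annihilates vectors whose entries in each group sum to zero. Hence $\bB\bx=\bB^\top\bx=\mathbf 0$.
\item For the same $\bx$, $\bA\bx=a_i\bx$.
\item Therefore $\bQ\bx=\bQ^\top\bx=a_i\bx$ on $\cS_i$.
\end{itemize}
Since $\bQ$ and $\bQ^\top$ preserve $\cS$, both also preserve $\cS^\perp$, because $\langle \bQ\by,\bx\rangle=\langle \by,\bQ^\top\bx\rangle=0$ for $\by\in\cS^\perp$ and $\bx\in\cS$.

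Consequently, in an orthonormal basis adapted to $\cS\oplus\cS^\perp$ (used for both domain and codomain), $\bQ$ is block diagonal:
\[
\bQ=\bigoplus_i a_i\bI_{\cS_i}\oplus \bQ_\perp .
\]
The restriction $\bQ_\perp$ is the $M\times M$ matrix with entries $(\bQ_\perp)_{kl}=\bu_k^\top\bQ\bu_l=a_k\delta_{kl}+Cb_{k,l}$.

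\textbf{Step 2 (msgn of a direct sum).} A singular value decomposition of a block-diagonal matrix is obtained by concatenating singular value decompositions of its blocks. Moreover, $\msgn(\bX)=\bU\sgn(\boldsymbol\Sigma)\bV^\top$ is independent of which SVD is chosen: it is the partial isometry in the polar decomposition, with zero singular values sent to $0$. Hence
\[
\msgn(\bQ)=\bigoplus_i\msgn(a_i\bI_{\cS_i})\oplus\msgn(\bQ_\perp).
\]
Because every $a_i>0$, the first part is $\bI_{\cS}=\bP_{\cS}$, the orthogonal projector onto $\cS$. Write $\bO:=\msgn(\bQ_\perp)\in\RR^{M\times M}$. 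Then $\|\bO\|_2\le 1$, so $|O_{kl}|\le 1$ for every entry. This bound holds even if $\bQ_\perp$ is singular, which is the one point where well-definedness needs care.

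\textbf{Step 3 (assemble).} Map $\bO$ back to $\RR^{CM}$ and use $\bP_{\cS}=\bI_{CM}-\sum_k\bu_k\bu_k^\top$:
\[
\msgn(\bQ)=\bI_{CM}-\sum_k\bu_k\bu_k^\top+\sum_{k,l}O_{kl}\bu_k\bu_l^\top=\bI_{CM}+\bB'.
\]
Since $\bu_k\bu_l^\top$ is $\tfrac1C\bJ_C$ placed in block $(k,l)$, block $(k,l)$ of $\bB'$ equals $\tfrac{O_{kl}-\delta_{kl}}{C}\,\bJ_C$. Thus $\bB'$ has the same block structure as $\bB$, and each entry satisfies $|\cdot|\le 2/C=\cO(M/C)$; this is in fact slightly stronger than the claimed bound.

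The main obstacle I expect is Step 2: justifying rigorously that $\msgn$ commutes with the orthogonal direct-sum decomposition. This needs the simultaneous invariance of both $\bQ$ and $\bQ^\top$ from Step 1, and an argument that $\msgn$ does not depend on the SVD chosen when singular values are repeated or zero. I would handle this through the characterization $\msgn(\bX)=\bX(\bX^\top\bX)^{+1/2}$, with ${}^{+}$ the pseudoinverse, which is manifestly basis-free and respects block-diagonal structure.
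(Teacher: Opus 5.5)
Your proof is correct and follows essentially the same route as the paper. Both arguments isolate the within-group contrast subspaces $\cS_i$, which are invariant under $\bQ$ and $\bQ^\top$ with singular value $a_i$; both compress $\bQ$ on the block-mean subspace to an $M\times M$ matrix; and both write the $\cS$-part of $\msgn(\bQ)$ as $\bI_{CM}$ minus the block-mean projector (the paper's $\bX\bX^\top=\bI_C-\tfrac{1}{C}\bJ_C$).

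The differences are refinements only. Bounding the block-mean contribution through $\|\msgn(\bQ_\perp)\|_2\le 1$, rather than through the paper's entrywise bound $1/\sqrt{C}$ on each of the $M$ singular-vector pairs, gives a sharper entrywise bound of $2/C$ instead of $(M+1)/C$, and it makes the block-constant structure of $\bB'$ explicit. Your polar-decomposition remark also settles SVD non-uniqueness and the case of singular $\bQ_\perp$, both of which the paper leaves implicit.
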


\begin{proof}[Proof of Proposition~\ref{pro:svd}]
  We prove Lemma~\ref{lemma:subspace} first to analyze the SVD structure of $\bQ$, then use it to complete the proof of Proposition~\ref{pro:svd}.
\begin{lemma}
\label{lemma:subspace}
Let $\bQ = \bA+\bB\in \RR^{CM \times CM}$ be a matrix defined as in Proposition~\ref{pro:svd}. 
The singular value decomposition (SVD) of $\bQ$ can be characterized by the direct sum of $M+1$ mutually orthogonal subspaces:$$\mathbb{R}^{CM} = \cS^{\perp} \oplus \mathcal{S}_1 \oplus \mathcal{S}_2 \oplus \dots \oplus \mathcal{S}_M.$$
For each $i\in\{1, 2,\cdots, M\}$, $\cS_{i}$ is a $(C-1)$-dimensional subspace defined as:
$$\cS_i = \{ \bx \in \RR^{CM} \mid \bx_{j} = 0\text{ for } j \notin [C(i-1)+1,Ci], \text{ and } \bx_{C(i-1)+1:Ci}^{\top}\mathbf{1}_C=0 \},$$
$\cS_i$ is the singular subspace associated with the singular value $a_i$ of $\bQ$.
$\cS^{\perp}$ is a $M$-dimensional space defined as: 
$$\cS^{\perp} = \mathrm{span}\{ \by_1, \dots, \by_M \}\text{, where }\by_i = \frac{1}{\sqrt{C}} (\mathbf{0}, \dots, \mathbf{1}_C^\top, \dots, \mathbf{0})^\top\text{ is the }i\text{-th block-unit vector.}$$
Let $\bu_1,\bu_2,\cdots,\bu_M$ be the left singular vectors of $\bQ$ in $\cS^{\perp}$ associated with singular values $\sigma_1\geq\sigma_2\geq\cdots\geq\sigma_M$ respectively, and $\bv_1,\bv_2,\cdots,\bv_M$ be the corresponding right singular vectors.
The entries of the singular vectors $\mathbf{u}_i$ and $\mathbf{v}_i$ satisfy 
$$\lvert(\mathbf{u}_i)_j\rvert \le \frac{1}{\sqrt{C}}, \quad \lvert(\mathbf{v}_i)_j\rvert \le \frac{1}{\sqrt{C}}\quad j = 1,2,\cdots,CM, i= 1,2,\cdots,M.$$
\end{lemma}
\begin{proof}[Proof of Lemma~\ref{lemma:subspace}]
First, we show that $\cS_i$ is a singular subspace of $\bQ$ associated with singular value $a_i$ for each $i\in\{1,2,\cdots,M\}$.
As $\bB$ is a block matrix with each block being a constant matrix, we have:
$$\bB\bx=\bB^{\top}\bx=\mathbf{0},\quad \forall \bx\in \cS_i.$$
Thus, we have:
$$\bQ\bx = \bQ^{\top}\bx = \bA\bx=a_i\bx.$$
This indicates $\cS_i$ is both the left and right singular subspace of $\bQ$ associated with singular value $a_i$.\\

Next, we consider the singular vectors in $\cS^{\perp}$.\\
The application of $\bQ$ in $\cS^{\perp}$ is determined by a $M\times M$ induced matrix $\tilde{\bQ}$ defined as:
\begin{equation*}
  \tilde{\bQ} = \begin{pmatrix} \by_1^\top \bQ \by_1 & \by_1^\top \bQ \by_2 & \cdots & \by_1^\top \bQ \by_M \\
\by_2^\top \bQ \by_1 & \by_2^\top \bQ \by_2 & \cdots & \by_2^\top \bQ \by_M \\
\vdots & \vdots & \ddots & \vdots \\
\by_M^\top \bQ \by_1 & \by_M^\top \bQ \by_2 & \cdots & \by_M^\top \bQ \by_M
\end{pmatrix}.
\end{equation*}

$$\tilde{\bQ}_{i,j} = \begin{cases} a_i + C b_{i,i} & \text{if } i=j \\ C b_{i,j} & \text{if } i \neq j \end{cases}.$$
The singular values of $\tilde{\bQ}$ are the same as the singular values of $\bQ$ in $\cS^{\perp}$.
Let $\tilde{\bu}_1,\tilde{\bu}_2,\cdots,\tilde{\bu}_M$ and $\tilde{\bv}_1,\tilde{\bv}_2,\cdots,\tilde{\bv}_M$ be the left and right singular vectors, corresponding to singular values $\sigma_1\geq\sigma_2\geq\cdots\geq\sigma_M$ respectively.
The following equations hold:
$$\mathbf{u}_i = \sum_{j=1}^M (\tilde{\mathbf{u}}_i)_j \by_j, \quad \mathbf{v}_i = \sum_{j=1}^M (\tilde{\mathbf{v}}_i)_j \by_j, \quad i=1,2,\cdots,M.$$
As $\|\tilde{\bu}_i\|_2 = 1$ and $\|\tilde{\bv}_i\|_2 = 1$, we have $\lvert(\tilde{\bu}_i)_j\rvert \le 1$ and $\lvert(\tilde{\bv}_i)_j\rvert \le 1$ for all $j = 1,2,\cdots,M$.
Thus, we have $$\lvert(\mathbf{u}_i)_j\rvert \le \frac{1}{\sqrt{C}}, \quad \lvert(\mathbf{v}_i)_j\rvert \le \frac{1}{\sqrt{C}}\quad j = 1,2,\cdots,CM, i= 1,2,\cdots,M.$$

\end{proof}

Assume the SVD of $\bQ$ is $\bQ = \bU \mathbf{\Sigma} \bV^{\top}$.
We define a new matrix $\bX\in \RR^{C\times(C-1)}$, the columns of which form an orthonormal basis of the subspace $\{\bx\in \RR^C \mid \bx^{\top}\mathbf{1}_C=0\}$. With Lemma~\ref{lemma:subspace}, the matrix $\bU$ and $\bV$ can be expressed as:
\begin{equation*}
\mathbf{U} = \begin{pmatrix}
\mathbf{X} & \mathbf{0} & \cdots & \mathbf{0} & & & \\
\mathbf{0} & \mathbf{X} & \cdots & \vdots & \mathbf{u}_1&\mathbf{u}_2&\cdots&\mathbf{u}_M\\
\vdots & \vdots & \ddots & \vdots & & & \\
\mathbf{0} & \mathbf{0} & \cdots & \mathbf{X} & & & 
\end{pmatrix},
\end{equation*}
\begin{equation*}
\mathbf{V} = \begin{pmatrix}
\mathbf{X} & \mathbf{0} & \cdots & \mathbf{0} & & & \\
\mathbf{0} & \mathbf{X} & \cdots & \vdots & \mathbf{v}_1&\mathbf{v}_2&\cdots&\mathbf{v}_M\\
\vdots & \vdots & \ddots & \vdots & & & \\
\mathbf{0} & \mathbf{0} & \cdots & \mathbf{X} & & & 
\end{pmatrix}. 
\end{equation*}
Then, the matrix sign of $\bQ$ can be expressed as:
\begin{equation*}
    \msgn(\bQ) = \bU \sgn(\mathbf{\Sigma}) \bV^{\top} = 
    \begin{pmatrix}\mathbf{X}\bX^{\top} & \mathbf{0} & \cdots & \mathbf{0}\\
\mathbf{0} & \mathbf{X}\bX^{\top} & \cdots & \vdots \\
\vdots & \vdots & \ddots & \vdots \\
\mathbf{0} & \mathbf{0} & \cdots & \mathbf{X}\bX^{\top}
\end{pmatrix}+\sum_{i=1}^M \sgn(\sigma_i) \mathbf{u}_i \mathbf{v}_i^{\top}.
\end{equation*}

According to Lemma~\ref{lemma:subspace}, the entries of the singular vectors $\mathbf{u}_i$ and $\mathbf{v}_i$ satisfy $\lvert(\mathbf{u}_i)_j\rvert \le \frac{1}{\sqrt{C}}$ and $\lvert(\mathbf{v}_i)_j\rvert \le \frac{1}{\sqrt{C}}$ for all $j = 1,2,\cdots,CM$, $i= 1,2,\cdots,M$.
Thus, the absolute value of each entry in the matrix $\sum_{i=1}^M \sgn(\sigma_i) \mathbf{u}_i \mathbf{v}_i^{\top}$ is upper bounded by $\frac{M}{C}$. Then we analyze the term $\mathbf{X}\bX^{\top}$.
\begin{lemma}
\label{lemma:xx}
We define $\bX$ as above. The following equation holds:
$\bX\bX^{\top} = \bI_C - \frac{1}{C}\mathbf{J}_C$.
\end{lemma}
\begin{proof}[Proof of Lemma~\ref{lemma:xx}]
We construct a $C\times C$ matrix $\bX'$, defined as $\bX'=\begin{pmatrix}\bX,&\frac{1}{\sqrt{C}}\mathbf{1}_C\end{pmatrix}$. $\bX'$ is an orthogonal matrix, thus we have:
$$\bI_C = \bX' \bX'^{\top} = \mathbf{X}\mathbf{X}^{\top} + \frac{1}{C}\mathbf{J}_C.$$
Thus we have $\bX\bX^{\top} = \bI_C - \frac{1}{C}\mathbf{J}_C$.
\end{proof}
With Lemma~\ref{lemma:xx}, we can express the matrix sign of $\bQ$ as:
\begin{equation*}
\msgn(\bQ)=\bI_{CM}-\begin{pmatrix}\frac{1}{C}\bJ_C & \mathbf{0} & \cdots & \mathbf{0}\\
\mathbf{0} & \frac{1}{C}\bJ_C & \cdots & \vdots \\
\vdots & \vdots & \ddots & \vdots \\
\mathbf{0} & \mathbf{0} & \cdots & \frac{1}{C}\bJ_C
\end{pmatrix}+\sum_{i=1}^M \sgn(\sigma_i) \mathbf{u}_i \mathbf{v}_i^{\top}.
\end{equation*}
Thus, we complete the proof of Proposition~\ref{pro:svd}.
\end{proof}
\begin{proposition}
\label{pro:svd_general}
Let $\mathbf{Q} = \mathbf{A} + \mathbf{B} \in \mathbb{R}^{CM \times CM}$ be defined as in Proposition \ref{pro:svd}, but removing the restriction $a_i > 0$. 
Then, the matrix sign of $\mathbf{Q}$ satisfies:
\begin{equation*}
\msgn(\mathbf{Q}) = \begin{pmatrix}\sgn(a_1)I_{C} & \mathbf{0} & \cdots & \mathbf{0}\\
\mathbf{0} & \sgn(a_2)I_{C} & \cdots & \vdots \\
\vdots & \vdots & \ddots & \vdots \\
\mathbf{0} & \mathbf{0} & \cdots & \sgn(a_M)I_{C}
\end{pmatrix} + \mathbf{B}',
\end{equation*}
where $\mathbf{I}_{C}$ is the identity matrix in $\mathbb{R}^{C \times C}$ and $\mathbf{B}'$ maintains the same block structure as $\mathbf{B}$ with entries of order $\cO(M/C)$.
\end{proposition}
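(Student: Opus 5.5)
The plan is to reuse the subspace decomposition from the proof of Proposition~\ref{pro:svd} almost verbatim; the only new issue is that $a_i$ may now be negative or zero. The structural facts are unchanged. For $\bx\in\cS_i$ we still have $\bB\bx=\bB^\top\bx=\mathbf{0}$, so $\bQ\bx=\bQ^\top\bx=a_i\bx$. Hence $\cS_i$ is invariant under both $\bQ$ and $\bQ^\top$, and $\bQ$ restricted to it is the scalar $a_i$.

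On $\cS_i$, a singular value decomposition can be taken with singular value $|a_i|$, right singular vectors any orthonormal basis $\bX_i$ of $\cS_i$, and left singular vectors $\sgn(a_i)\bX_i$ when $a_i\neq 0$. If $a_i=0$, this block contributes nothing to $\msgn$, consistent with $\sgn(0)=0$. The orthogonal complement $\cS^\perp=\mathrm{span}\{\by_1,\dots,\by_M\}$ is also invariant under $\bQ$ and $\bQ^\top$, since $\bQ\by_j=\sum_i \widetilde{\bQ}_{i,j}\by_i$ with $\widetilde{\bQ}$ the same induced $M\times M$ matrix as in Lemma~\ref{lemma:subspace}. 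Therefore $\bQ$ is block diagonal with respect to $\cS^\perp\oplus\cS_1\oplus\cdots\oplus\cS_M$. This is the step to check carefully, because with mixed signs the singular values $|a_i|$ could coincide with singular values of $\widetilde{\bQ}$. The fix is to define $\msgn$ as the polar factor, which is independent of the choice of SVD; it then splits as the direct sum of the polar factors on the invariant pieces, so degeneracy does no harm.

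Assembling the pieces, the $\cS_i$ contribution is $\sgn(a_i)\bX\bX^\top$ on the $i$-th diagonal block. By Lemma~\ref{lemma:xx} this equals $\sgn(a_i)(\bI_C-\tfrac1C\bJ_C)$. The $\cS^\perp$ contribution is $\sum_{k=1}^M \sgn(\sigma_k)\bu_k\bv_k^\top$, where $\bu_k,\bv_k$ lie in $\cS^\perp$ and are built from the singular vectors of $\widetilde{\bQ}$. By Lemma~\ref{lemma:subspace}, whose entry bound only uses that these vectors are unit vectors in the span of the $\by_j$ and so holds regardless of signs, each entry of this sum is at most $M/C$ in magnitude. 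Moreover, each $\bu_k\bv_k^\top$ is constant on every $C\times C$ block, so this term has the same block structure as $\bB$.

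The result is
\[
\msgn(\bQ)=\mathrm{diag}\big(\sgn(a_1)\bI_C,\dots,\sgn(a_M)\bI_C\big)+\bB',
\]
where
\[
\bB'=-\tfrac1C\,\mathrm{diag}\big(\sgn(a_1)\bJ_C,\dots,\sgn(a_M)\bJ_C\big)+\sum_{k}\sgn(\sigma_k)\bu_k\bv_k^\top.
\]
Both terms are block-wise constant, and their entries are bounded by $1/C+M/C=\cO(M/C)$. This is the claimed statement.
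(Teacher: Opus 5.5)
Your proof is correct and follows the same route as the paper's. You reuse the decomposition $\RR^{CM}=\cS^\perp\oplus\cS_1\oplus\cdots\oplus\cS_M$ from Lemma~\ref{lemma:subspace}, take left singular vectors $\sgn(a_i)\bX$ on each $\cS_i$ to obtain $\sgn(a_i)(\bI_C-\tfrac1C\bJ_C)$ via Lemma~\ref{lemma:xx}, and bound the block-constant rank-$M$ term from $\cS^\perp$ entrywise by $M/C$. Your additional remarks fill in details the paper leaves implicit: $\msgn$ is independent of the chosen SVD, so coincidences between $|a_i|$ and the singular values of $\widetilde{\bQ}$ are harmless, and the case $a_i=0$ is covered by $\sgn(0)=0$.
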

\begin{proof}[Proof of Proposition~\ref{pro:svd_general}]
Without the restriction $a_i>0$, Lemma~\ref{lemma:subspace} still holds.
The only difference is that for each $i\in\{1,2,\cdots,M\}$, $\cS_i$ is the singular subspace associated with the singular value $\lvert a_i\rvert$ of $\bQ$.
We use the same notations as in the proof of Proposition~\ref{pro:svd}.
Thus, the matrix $U$ and $V$ can be expressed as:
\begin{equation*}
\mathbf{U} = \begin{pmatrix}
\sgn(a_1)\mathbf{X} & \mathbf{0} & \cdots & \mathbf{0} & & & \\
\mathbf{0} & \sgn(a_2)\mathbf{X} & \cdots & \vdots & \mathbf{u}_1&\mathbf{u}_2&\cdots&\mathbf{u}_M\\
\vdots & \vdots & \ddots & \vdots & & & \\
\mathbf{0} & \mathbf{0} & \cdots & \sgn(a_M)\mathbf{X} & & & 
\end{pmatrix},
\end{equation*}
\begin{equation*}
\mathbf{V} = \begin{pmatrix}
\mathbf{X} & \mathbf{0} & \cdots & \mathbf{0} & & & \\
\mathbf{0} & \mathbf{X} & \cdots & \vdots & \mathbf{v}_1&\mathbf{v}_2&\cdots&\mathbf{v}_M\\
\vdots & \vdots & \ddots & \vdots & & & \\
\mathbf{0} & \mathbf{0} & \cdots & \mathbf{X} & & & 
\end{pmatrix}. 
\end{equation*}
Then, the matrix sign of $\bQ$ can be expressed as:

\begin{equation*}
\msgn(\bQ)=\begin{pmatrix}\sgn(a_1)(\bI_C-\frac{1}{C}\bJ_C) & \mathbf{0} & \cdots & \mathbf{0}\\
\mathbf{0} & \sgn(a_2)(\bI_C-\frac{1}{C}\bJ_C) & \cdots & \vdots \\
\vdots & \vdots & \ddots & \vdots \\
\mathbf{0} & \mathbf{0} & \cdots & \sgn(a_M)(\bI_C-\frac{1}{C}\bJ_C)
\end{pmatrix}+\sum_{i=1}^M \sgn(\sigma_i) \mathbf{u}_i \mathbf{v}_i^{\top}.
\end{equation*}

According to Lemma~\ref{lemma:subspace}, the absolute value of each entry in the matrix $\sum_{i=1}^M \sgn(\sigma_i) \mathbf{u}_i \mathbf{v}_i^{\top}$ is upper bounded by $\frac{M}{C}$.
Thus, we complete the proof of Proposition~\ref{pro:svd_general}.
\end{proof}

\subsection{Supporting Proposition for Linear Stability}
\label{sec:proof_stability}

\begin{proof}[Proof of Proposition~\ref{prop: stability}]
We work in the task-representation space
\[
    \widehat{\bW}_t=\widetilde{\bE}^{\top}\bW_t\bE .
\]
Under GD, the columns of $\widehat{\bW}_t$ evolve independently. Thus it
suffices to first study the first column. Its update is
\begin{equation*}
    (\widehat{\bW}_{t+1})_{:,1}=(\widehat{\bW}_{t})_{:,1}-\eta p_1\left(\softmax((\widehat{\bW}_{t})_{:,1})-p_{\cdot|1}\right),
\end{equation*}
where $p_{\cdot|1}=\left(1-\alpha+\frac{\alpha}{K},\frac{\alpha}{K},\ldots,\frac{\alpha}{K}\right)^{\top}$.
Let $(\widehat{\bW}^{*})_{:,1}$ be a fixed point satisfying $\softmax((\widehat{\bW}^{*})_{:,1})=p_{\cdot|1}$.
Such fixed points are not unique because the softmax function is invariant
under adding a constant multiple of $\boldsymbol{1}_K$ to the logits. However,
the Jacobian of the gradient part at any such fixed point is the same:
\begin{equation*}
    \bJ_1^*=\eta p_1\left(\diag(p_{\cdot|1})-p_{\cdot|1}p_{\cdot|1}^{\top}\right).
\end{equation*}
Therefore the linearized GD map around this fixed point is $\bI-\bJ_1^*$. The matrix
$\diag(p_{\cdot|1})-p_{\cdot|1}p_{\cdot|1}^{\top}$ has eigenvalues $0,\alpha\left(1-\alpha+\frac{\alpha}{K}\right),\frac{\alpha}{K}(\text{with multiplicity } K-2)$.
Consequently, the eigenvalues of the linearized GD map $\bI-\bJ_1^*$ are
$1,1-\eta p_1\alpha\left(1-\alpha+\frac{\alpha}{K}\right),1-\eta p_1\frac{\alpha}{K}(\text{with multiplicity } K-2)$.

We use the standard power-bounded notion of linear stability: the fixed point
is linearly stable if there exists a constant $C>0$ such that
\[
    \|(\bI-\bJ_1^*)^t\|_2\le C,\qquad \forall t\ge 0.
\]
Since $\bJ_1^*$ is symmetric, this condition is equivalent to requiring every
eigenvalue of $\bI-\bJ_1^*$ to have absolute value at most one. Thus, linear
stability is equivalent to
\[
    \left|
    1-\eta p_1\alpha\left(1-\alpha+\frac{\alpha}{K}\right)
    \right|\le 1
    \quad\text{and}\quad
    \left|
    1-\eta p_1\frac{\alpha}{K}
    \right|\le 1.
\]

When $\alpha\in(0,1)$ is treated as a fixed constant, this is equivalent to $\eta p_1\lesssim 1$.

For a general column $j$, the same argument applies after replacing $p_1$ by
$p_j$ and permuting the coordinates of $p_{\cdot|1}$. Thus the $j$-th column is
linearly stable if and only if
$\eta p_j\lesssim 1$
Since $p_1\ge p_2\ge\cdots\ge p_K$, all columns are linearly stable if and only
if the largest-frequency column is linearly stable. Therefore the global
minimizer is linearly stable for GD if and only if
$\eta p_1\lesssim 1$, which proves both the ``if'' and the ``only if'' directions.
\end{proof}

\subsection{Proof for the Noiseless Case (Proposition~\ref{prop: noiseless})}
\label{proof: noiseless_global_minimizer}
\begin{proof}[Proof of Proposition~\ref{prop: noiseless}]
When $\alpha = 0$, the problem reduces to a classification task with no label noise. The cross-entropy loss function $\mathcal{L}(\mathbf{W})$ is given by
$$\mathcal{L}(\mathbf{W}) = -\sum_{j=1}^{K} p_j \log \bigg( \frac{\exp(\widetilde{\bE}_j^\top \mathbf{W} \mathbf{E}_j)}{\sum_{l=1}^{K} \exp(\widetilde{\bE}_l^\top \mathbf{W} \mathbf{E}_j)} \bigg)=\sum_{j=1}^{K}p_j\log\bigg(1+\sum_{i\neq j}\exp\big(\widetilde{\bE}_i^\top \mathbf{W}\mathbf{E}_j-\widetilde{\bE}_j^\top \mathbf{W}\mathbf{E}_j\big)\bigg).$$
Since each term is nonnegative, we have $\mathcal{L}(\mathbf{W})\geq 0$. Moreover, there exists a sequence of weights whose correct-vs-incorrect logit gaps diverge to $+\infty$ for all $j\in[K]$ and $i\neq j$, along which the loss converges to zero. Hence,
$\inf_{\mathbf W}\mathcal{L}(\mathbf W)=0.$

We now prove the equivalence. First, suppose that for a sequence $\{\mathbf W^{(m)}\}$,
$\widetilde{\bE}_j^\top\mathbf W^{(m)}\mathbf E_j-\widetilde{\bE}_i^\top\mathbf W^{(m)}\mathbf E_j\to+\infty$, for all $j\in[K]$, $i\neq j$.
Then $\exp\left(\widetilde{\bE}_i^\top \mathbf W^{(m)}\mathbf E_j-\widetilde{\bE}_j^\top \mathbf W^{(m)}\mathbf E_j\right)\to 0$, for all $j\in[K],\ \forall i\neq j$. Therefore, each logarithmic term in $\mathcal{L}(\mathbf W^{(m)})$ converges to zero, and thus $\mathcal{L}(\mathbf W^{(m)})\to0$.

Conversely, suppose $\mathcal{L}(\mathbf W^{(m)})\to0$. Since $p_j>0$ and each logarithmic term is nonnegative, for every $j\in[K]$ we must have
$$\log\bigg(1+\sum_{i\neq j}\exp\big(\widetilde{\bE}_i^\top \mathbf W^{(m)}\mathbf E_j-\widetilde{\bE}_j^\top \mathbf W^{(m)}\mathbf E_j\big)\bigg)\to0.$$
This implies
$\sum_{i\neq j}\exp\left(\widetilde{\bE}_i^\top \mathbf W^{(m)}\mathbf E_j-\widetilde{\bE}_j^\top \mathbf W^{(m)}\mathbf E_j\right)\to0.$
Since every term in the sum is nonnegative, each term must converge to zero. Hence, for all $j\in[K]$ and $i\neq j$,
\[\widetilde{\bE}_j^\top\mathbf W^{(m)}\mathbf E_j-\widetilde{\bE}_i^\top\mathbf W^{(m)}\mathbf E_j\to+\infty.\]
This completes the proof.
\end{proof}
\subsection{Proof for the Noisy Case (Proposition~\ref{prop: noisy})}
\begin{proof}[Proof of Proposition~\ref{prop: noisy}]
\label{proof: noisy_global_minimizer}
    Denote $P_{\cdot|j}$ as a probability distribution in the dataset given the $j$-th query. 
    Denote $\widehat{P}_{\cdot|j}$ as the predicted probability distribution given the $j$-th query.

    According to Equation~\eqref{eq:loss_noisy}, the loss function can be represented with KL divergence between $P_{\cdot|j}$ and $\widehat{P}_{\cdot|j}$ and entropy of $P_{\cdot|j}$:
    \begin{align*}
    \mathcal{L}(\bW) =& -\sum_{j=1}^{K} p_j \bigg(\sum_{i=1}^{K} p_{i|j} \log \widehat{p}_{t,i|j}\bigg)= \sum_{j=1}^{K} p_j \big(D_{\mathrm{KL}}(P_{\cdot|j} \parallel \widehat{P}_{\cdot|j}) + H(P_{\cdot|j})\big),
    \end{align*}
    where $D_{\mathrm{KL}}(\cdot)$ is defined in Equation~\eqref{eq:kl_divergence} and $H(\cdot)$ is the entropy defined in Equation~\eqref{eq:entropy}. Since KL divergence is non-negative, we have $\mathcal{L}(\bW) \geq \sum_{j=1}^{K} p_j H(P_{\cdot|j})$.
    The equality holds when $\widehat{P}_{\cdot|j} = P_{\cdot|j}$ for all $j\in[K]$, i.e., $\widehat{p}_{t,i|j} = p_{i|j}$ for all $i,j\in[K]$.

    According to the assumption of label noise, $P_{\cdot|j}$ for all $j\in[K]$ is equal to the same distribution. Therefore, we have:
    \begin{align*}
    H(P_{\cdot|j})= -\Big(1-\alpha + \frac{\alpha}{K}\Big)\log\Big(1-\alpha + \frac{\alpha}{K}\Big) - \frac{\alpha(K-1)}{K}\log\Big(\frac{\alpha}{K}\Big),\quad \text{for all} j\in[K].
    \end{align*}
    Then, the minimum value of the loss function, $\mathcal{L}^*$, is given by:
    \begin{align*}
      \cL^*= \sum_{j=1}^{K} p_j H(P_{\cdot|j}) = -\Big(1-\alpha + \frac{\alpha}{K}\Big)\log\Big(1-\alpha + \frac{\alpha}{K}\Big) - \frac{\alpha(K-1)}{K}\log\Big(\frac{\alpha}{K}\Big).
    \end{align*}
    And the minimum value is attained when $\widehat{p}_{i|j}(\mathbf{W}) = p_{i|j}$ for all $i, j \in [K]$, meaning that all that all global minimizers $\mathbf{W}^*$ induce the same predicted conditional distribution $\widehat{p}_{i|j}(\mathbf{W}^*) = p_{i|j}$ for all $1 \leq i, j \leq K$.
\end{proof}

\section{Proof for the noiseless case}
\subsection{Analysis of Gradient Descent (GD)}
\label{sec:proof_gd_noiseless}
\begin{proof}[Proof of Theorem~\ref{thm: noiseless_converge_GF}]
  $\cL(\bW_t)$ can be divided into $K$ terms based on different sub-tasks, 
  the loss related to sub-task $j$ is only determined by the $j$-th column of $\widehat{\bW}_t$.
  According to the update rule of GD~\eqref{eq:gd_update}, 
  each column of the weight matrix $\widehat{\bW}_t$ is updated independently.
  We analyze the update of the first column $(\widehat{\bW}_{t})_1$ and the loss 
  related to the first sub-task $\cL_1(\bW_t)$ as an example.
\begin{lemma}
\label{lem:gd_struc}
Assume the weight matrix is initialized as $\mathbf{W}_0 = \mathbf{0}$ and optimized via GD. The weight matrix $\widehat{\bW}_t$ satisfies:
\begin{equation*}
  (\widehat{\bW}_t)_{i_1,j} = (\widehat{\bW}_t)_{i_2,j}, \quad \forall i_1,i_2 \neq j, \quad \forall j \in [K].
\end{equation*}
\end{lemma}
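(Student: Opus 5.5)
We argue by induction on $t$, working column by column in the task-representation space. By \eqref{eq:gd_update} and the gradient formula \eqref{eq:grad}, we have $\bG_t=-(\bP-\widehat{\bP}_t)$ with $\bP=\diag(p_1,\dots,p_K)$. The $j$-th column of $\widehat{\bW}_t$ therefore evolves by
\[
(\widehat{\bW}_{t+1})_{i,j}=(\widehat{\bW}_t)_{i,j}+\eta\, p_j\big(\mathbb{1}[i=j]-\widehat{p}_{t,i|j}\big),
\]
where $\widehat{p}_{t,\cdot|j}=\softmax\big((\widehat{\bW}_t)_{:,j}\big)$. This holds because $\widetilde{\bE}^\top\bW_t\bE_j=(\widehat{\bW}_t)_{:,j}$ by the orthonormality of the embeddings (with $d=K$, $\bE$ and $\widetilde{\bE}$ are orthogonal matrices). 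Hence column $j$ depends only on itself, and the loss $\cL_j$ depends only on column $j$.

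\textbf{Base case.} Since $\bW_0=\mathbf{0}$, we have $\widehat{\bW}_0=\mathbf{0}$, so every off-diagonal entry of each column equals $0$ and the claim holds at $t=0$.

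\textbf{Inductive step.} Fix $j$ and suppose $(\widehat{\bW}_t)_{i,j}=c_t$ for all $i\neq j$. The softmax assigns equal probability to coordinates with equal logits, so
\[
\widehat{p}_{t,i|j}=\frac{e^{c_t}}{e^{(\widehat{\bW}_t)_{j,j}}+(K-1)e^{c_t}}
\]
is the same for every $i\neq j$. Denote this common value by $q_t$. For each $i\neq j$ the update gives $(\widehat{\bW}_{t+1})_{i,j}=c_t-\eta p_j q_t$, which does not depend on $i$. This closes the induction, and since $j$ was arbitrary the lemma holds for all columns.

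No real obstacle arises. The only point needing care is to verify that $\bG_t$ equals $-(\bP-\widehat{\bP}_t)$ exactly, with no mixing between columns. This relies on exact orthogonality and $d=K$, which give $\widetilde{\bE}^\top\widetilde{\bE}=\bE^\top\bE=\bI_K$. In the noisy case the identical argument applies using \eqref{eq:grad_noisy}, since $p_{i|j}=\alpha/K$ is also constant over $i\neq j$.
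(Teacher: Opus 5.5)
Your proof is correct and follows the paper's own argument: induction on $t$ from $\widehat{\bW}_0=\mathbf{0}$. Equal off-diagonal logits in column $j$ give equal softmax probabilities $\widehat{p}_{t,i|j}$, so the GD increment $-\eta p_j \widehat{p}_{t,i|j}$ is the same for every $i\neq j$. You also check explicitly that $\bG_t=-(\bP-\widehat{\bP}_t)$ because $\widetilde{\bE}^\top\widetilde{\bE}=\bE^\top\bE=\bI_K$, which the paper leaves implicit, and your remark on the noisy case matches Lemma~\ref{lem:gd_struc_n}.
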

  The proof of Lemma~\ref{lem:gd_struc} is presented in Appendix~\ref{sec:noiseless_1}.
  With Lemma~\ref{lem:gd_struc}, we have $(\widehat{\bW}_t)_{i,1} = (\widehat{\bW}_t)_{j,1}$ for all $i,j \neq 1$. The predicted probability can be expressed as:
  \begin{align*}
    \widehat{p}_{t,1|1} &= \frac{\exp((\widehat{\bW}_t)_{1,1})}{\exp((\widehat{\bW}_t)_{1,1}) + (K-1)\exp((\widehat{\bW}_t)_{2,1})},\\
    \widehat{p}_{t,i|1} &= \frac{\exp((\widehat{\bW}_t)_{2,1})}{\exp((\widehat{\bW}_t)_{1,1}) + (K-1)\exp((\widehat{\bW}_t)_{2,1})}, \quad \forall i \neq 1.
  \end{align*}
  The update rule of GD can be expressed as:
  \begin{align*}
    (\widehat{\bW}_{t+1})_{1,1} &= (\widehat{\bW}_t)_{1,1} + \eta \cdot p_1 (1 - \widehat{p}_{t,1|1}),\\
    (\widehat{\bW}_{t+1})_{i,1} &= (\widehat{\bW}_t)_{i,1} - \eta \cdot p_1 \widehat{p}_{t, i|1}, \quad \forall i \neq 1.
  \end{align*}

  We denote $\Delta_t=(\widehat{\bW}_{t+1})_{1,1} - (\widehat{\bW}_{t+1})_{2,1}$. Given the initialization $\mathbf{W}_0 = \mathbf{0}$, we have the initial condition $\Delta_0 = 0$. Then the update rule of $\Delta_t$ is:
  \begin{equation*}
    \Delta_{t+1} = \Delta_t + \eta \cdot p_1 \big(1 - \widehat{p}_{t,1|1}+\widehat{p}_{t,2|1}\big).
  \end{equation*}
  As $\widehat{p}_{t,1|1}+(K-1)\widehat{p}_{t,2|1} = 1$, we have:
  \begin{equation*}
    \Delta_{t+1} = \Delta_t + \eta \cdot p_1 K\widehat{p}_{t,2|1}= \Delta_t + \eta \cdot p_1 \frac{K}{e^{\Delta_t} + (K-1)}.
  \end{equation*}
Denote $e^{\Delta_t}$ as $u_t$, the iteration above can be expressed as:
\begin{equation*}
u_{t+1}=u_{t}\cdot \exp\Big(\frac{\eta p_1 K}{u_t+K-1}\Big).
\end{equation*}
As we consider $\eta=1$ in this case, the iteration of $u_t$ can be expressed as:
\begin{equation*}
u_{t+1}=u_{t}\cdot \exp\Big(\frac{p_1 K}{u_t+K-1}\Big).
\end{equation*}
As $t\to \infty$, $\Delta_t \to \infty$, leading $u_t \to \infty$.
Thus, we have:
\begin{align}
u_{t+1}&\eqsim u_{t}\cdot\Big(1+\frac{p_1K}{u_t+K-1}+\big(\frac{p_1K}{u_t+K-1}\big)^2\Big)\label{eq:gd_dis_1}\\
&= u_t + p_1K\cdot \frac{u_t}{u_t+K-1}+\frac{u_t(p_1K)^2}{(u_t+K-1)^2}\notag\\
&\eqsim u_t + p_1K\label{eq:gd_dis_2}.
\end{align}
Equation~\eqref{eq:gd_dis_1} uses $e^x\eqsim 1+x+x^2$ as $x\to 0$. Equation~\eqref{eq:gd_dis_2} applies $\frac{u_t}{u_t+K-1}\to 1$ as $u_t\to\infty$ and $\frac{u_t(p_1K)^2}{(u_t+K-1)^2} \to 0$ as $u_t\to\infty$.

As $u_0=e^0=1$, we have:
\begin{equation*}
    u_t=1+p_1Kt\eqsim p_1Kt,\quad \text{as }t \to \infty.
\end{equation*}
Thus, we have:
\begin{equation*}
    e^{\Delta_t} = u_t \eqsim p_1Kt \quad \text{as }t \to \infty.
\end{equation*}

Then $\cL_1(\bW_T)$ can be expressed with $\Delta_T$ as:
  \begin{align}
    \cL_1(\bW_T) &= -\log \widehat{p}_{T,1|1}\notag\\
    &= -\log \frac{\exp((\widehat{\bW}_T)_{1,1})}{\exp((\widehat{\bW}_T)_{1,1}) + (K-1)\exp((\widehat{\bW}_T)_{2,1})}\notag\\
    &= -\log \frac{1}{1 + (K-1)\exp(-\Delta_T)}\notag\\
    \label{eq:loss_noiseless}
    &\eqsim (K-1)e^{-\Delta_T}\\
    \label{eq:loss_noiseless_2}
    &\eqsim \frac{(K-1)}{p_1K T}, \quad \text{as } T \to \infty\\
    &\eqsim \frac{1}{p_1 T}, \quad \text{for sufficiently large }K.
  \end{align}
  In Equation~\eqref{eq:loss_noiseless}, we use the Taylor expansion $\log(1+x) \approx x$ for small $x$, noting that $(K-1)e^{-\Delta_T} \to 0$ as $\Delta_T \to \infty$.
  In Equation~\eqref{eq:loss_noiseless_2}, we substitute $\Delta_T \eqsim \log(p_1 K T)$.
  
  Similarly, we can analyze other columns of the weight matrix $\widehat{\bW}_T$ and obtain the same convergence rate of $\cL_i(\bW_T)$,
  \begin{equation*}
    \cL_i(\bW_T)\eqsim \frac{1}{p_iT},\quad i=1,2,\cdots,K.
  \end{equation*} 
  Combining the loss related to all sub-tasks, we have:
  \begin{equation*}
    \mathcal{L}(\bW_T) = \sum_{i=1}^{K} p_i\mathcal{L}_i(\bW_T) \eqsim \frac{K}{T}, \quad \text{as } T \to \infty.
  \end{equation*}
\end{proof}
\subsubsection{Proof of Lemma~\ref{lem:gd_struc}}
\label{sec:noiseless_1}
\begin{proof}[Proof of Lemma~\ref{lem:gd_struc}]
We prove the lemma by induction.\\
\textbf{Base Case:} According to the initialization ~\eqref{eq:initialization}, we have $\widehat{\bW}_0 = \mathbf{0}$. 
At time step $t=0$, we have $(\widehat{\bW}_t)_{i_1,j} = (\widehat{\bW}_t)_{i_2,j}=0, \quad \forall i_1,i_2 \neq j, \quad \forall j \in [K].$\\
\textbf{Induction Step:} Assume at time step $t$, the weight matrix $\widehat{\bW}_t$ satisfies Lemma~\ref{lem:gd_struc}. We have: 
$$\widehat{p}_{t,i_1|j}=\frac{\exp((\widehat{\bW}_t)_{i_1,j})}{\sum_{l=1}^{K}\exp((\widehat{\bW}_t)_{l,j})}=\widehat{p}_{t,i_2|j}.$$
According to the update rule of GD~\eqref{eq:gd_update} and gradient of loss~\eqref{eq:grad}, we have:
\begin{align*}
(\widehat{\bW}_{t+1})_{i_1,j} &= (\widehat{\bW}_t)_{i_1,j} - \eta \cdot p_j \widehat{p}_{t,i_1|j} \quad \text{if } i_1 \neq j.\\
(\widehat{\bW}_{t+1})_{i_2,j} &= (\widehat{\bW}_t)_{i_2,j} - \eta \cdot p_j \widehat{p}_{t,i_2|j} \quad \text{if } i_2 \neq j.
\end{align*}
Thus, we have $(\widehat{\bW}_{t+1})_{i_1,j} = (\widehat{\bW}_{t+1})_{i_2,j}, \quad \forall i_1,i_2 \neq j, \quad \forall j \in [K].$
\end{proof}
\subsection{Analysis of Muon}
\label{sec:proof_muon_noiseless}

\begin{proof}[Proof of Theorem~\ref{thm: noiseless_converge_Muon_flow}]
Equivalently, the total loss $\cL(\bW_t)$ can be divided into $K$ terms based on different sub-tasks, the loss related to sub-task $j$ is only determined by the $j$-th column of $\widehat{\bW}_t$.
$\widehat{\bW}_t$ maintains a special structure during the optimization process via Muon, leading to similar updates for different columns of $\widehat{\bW}_t$. We will state the special structure in Lemma~\ref{lem:muon_struc}.
\begin{lemma}
\label{lem:muon_struc}
Assume the weight matrix is initialized as $\mathbf{W}_0 = \mathbf{0}$ and optimized via Muon. At any time step $t$, the weight matrix $\widehat{\bW}_t \in \mathbb{R}^{K \times K}$ can be partitioned into $M^2$ sub-blocks $\{\mathbf{B}_{i,j}\}_{i,j=1}^M$, where each block $\mathbf{B}_{i,j} \in \mathbb{R}^{C \times C}$ exhibits the following structure:
\begin{itemize}
\item \textbf{Diagonal Blocks ($i=j$):} Each block ($\mathbf{B}_t)_{i,i}$ is a symmetric-constant matrix:
$$ (\mathbf{B}_t)_{i,i} = \omega_{i,t} \mathbf{I}_C + \mu_{i,t} (\mathbf{J}_C - \mathbf{I}_C) = \begin{pmatrix}
\omega_{i,t} & \mu_{i,t} & \cdots & \mu_{i,t} \\
\mu_{i,t} & \omega_{i,t} & \cdots & \mu_{i,t} \\
\vdots & \vdots & \ddots & \vdots \\
\mu_{i,t} & \mu_{i,t} & \cdots & \omega_{i,t}
\end{pmatrix}. $$
\item \textbf{Off-diagonal Blocks ($i \neq j$):} Each block $(\mathbf{B}_t)_{i,j}$ is a rank-1 constant matrix:
$$ (\mathbf{B}_t)_{i,j} = \gamma_{i,j,t} \mathbf{J}_C = \begin{pmatrix}
\gamma_{i,j,t} & \gamma_{i,j,t} & \cdots & \gamma_{i,j,t} \\
\gamma_{i,j,t} & \gamma_{i,j,t} & \cdots & \gamma_{i,j,t} \\
\vdots & \vdots & \ddots & \vdots \\
\gamma_{i,j,t} & \gamma_{i,j,t} & \cdots & \gamma_{i,j,t}
\end{pmatrix}. $$
\end{itemize}
where $\omega_{i,t}= (\widehat{\bW}_t)_{C(i-1)+1,C(i-1)+1}$, $\mu_{i,t} = (\widehat{\bW}_t)_{C(i-1)+2,C(i-1)+1}$, and $\gamma_{i,j,t} = (\widehat{\bW}_t)_{C(i-1)+1,C(j-1)+1}$.
\end{lemma}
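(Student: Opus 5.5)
We argue by induction on $t$, mirroring Lemma~\ref{lem:gd_struc} for GD but using Proposition~\ref{pro:svd} in place of the entrywise GD update. Call a $K\times K$ matrix \emph{block-structured} if its $M\times M$ grid of $C\times C$ blocks has diagonal blocks of the form $\omega\bI_C+\mu(\bJ_C-\bI_C)$ and off-diagonal blocks of the form $\gamma\bJ_C$. Equivalently, such a matrix can be written as $\mathrm{diag}(a_1\mathbf{1}_C^\top,\dots,a_M\mathbf{1}_C^\top)+\bB$ with $a_i=\omega_i-\mu_i$ and $\bB$ blockwise constant. This class is closed under addition and scalar multiplication. The base case is immediate, since $\widehat{\bW}_0=\mathbf{0}$ has all parameters equal to zero.

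For the induction step, assume $\widehat{\bW}_t$ is block-structured. We first read off the predicted probabilities. Fix a column $j$ in group $g$. Its entries are: $\omega_{g,t}$ at the diagonal position, $\mu_{g,t}$ at the other $C-1$ positions of group $g$, and $\gamma_{h,g,t}$ at every position of group $h\neq g$. Hence $\widehat{p}_{t,i|j}$ depends only on whether $i=j$ and, if $i\neq j$, on the group of $i$, and these values are the same for every $j$ in group $g$. Since $p_j=\widetilde{p}_g/C$ is constant within the group, the matrix $\widehat{\bP}_t$ with entries $p_j\widehat{p}_{t,i|j}$ is block-structured. In the noiseless case $\bP=\diag(p_1,\dots,p_K)$ is block-structured with $\mu=\gamma=0$, so by \eqref{eq:grad} the matrix $-\bG_t=\bP-\widehat{\bP}_t$ is block-structured. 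In the noisy case, $\bP'$ from \eqref{eq:grad_noisy} has diagonal blocks $\frac{\widetilde p_g}{C}\big((1-\alpha+\frac{\alpha}{K})\bI_C+\frac{\alpha}{K}(\bJ_C-\bI_C)\big)$ and off-diagonal blocks $\frac{\widetilde p_g}{C}\frac{\alpha}{K}\bJ_C$, so it is also block-structured.

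Next we apply Proposition~\ref{pro:svd_general} to $\bQ=\bP-\widehat{\bP}_t$. This requires no sign condition on the $a_i$, and the proof gives exactly
\[
\msgn(\bQ)=\mathrm{blockdiag}\big(\sgn(a_i)(\bI_C-\tfrac1C\bJ_C)\big)+\textstyle\sum_k \sgn(\sigma_k)\bu_k\bv_k^\top ,
\]
where the $\bu_k,\bv_k$ lie in $\cS^\perp=\mathrm{span}\{\by_i\}$. Each term $\bu_k\bv_k^\top$ is a combination of $\by_i\by_{i'}^\top=\frac1C\bJ_C$ placed in block $(i,i')$, so it is blockwise constant. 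Therefore $\msgn(\bQ)$ is block-structured. If some $a_i=0$, the corresponding $\cS_i$ lies in the kernel and contributes nothing, which still preserves the form. By \eqref{eq:muon_update}, $\widehat{\bW}_{t+1}=\widehat{\bW}_t+\eta\,\msgn(\bQ)$ (using $\msgn(-\bQ)=-\msgn(\bQ)$), a sum of block-structured matrices, which closes the induction. The identification of $\omega_{i,t},\mu_{i,t},\gamma_{i,j,t}$ with the stated entries then holds by definition.

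The only delicate point is the claim that $\msgn(\bQ)$ is \emph{exactly} block-structured. This is not just an entrywise bound; it relies on the decomposition in Lemma~\ref{lemma:subspace}. One subtlety needs care there: if a singular value $\sigma_k$ coincides with some $|a_i|$, the SVD is non-unique and the singular vectors could mix $\cS_i$ with $\cS^\perp$. We avoid this by not using a particular SVD and instead noting that $\msgn(\bQ)=\bQ(\bQ^\top\bQ)^{+1/2}$ is basis-independent. Because $\bQ$ maps $\cS_i$ and $\cS^\perp$ into themselves, and so does $\bQ^\top$, we can compute $\msgn(\bQ)$ separately on each invariant subspace. This gives the displayed formula regardless of ties.
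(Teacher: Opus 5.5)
Your proof is correct and follows the paper's argument. Both proceed by induction on $t$: show that $\bP-\widehat{\bP}_t$ splits into a diagonal part that is a scalar multiple of $\bI_C$ on each group plus a blockwise-constant part, then use the subspace structure of Lemma~\ref{lemma:subspace} to conclude that the matrix-sign update keeps this form. Your use of Proposition~\ref{pro:svd_general} (the paper's proof invokes Proposition~\ref{pro:svd}, which needs $a_i>0$), together with the basis-free identity $\msgn(\bQ)=\bQ(\bQ^\top\bQ)^{+1/2}$ on the invariant subspaces $\cS_i$ and $\cS^\perp$, also covers the noisy case, $a_i=0$, and singular-value ties, which the paper handles only implicitly.
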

The proof of Lemma~\ref{lem:muon_struc} is presented in Appendix~\ref{sec:muon_struc}. According to Lemma~\ref{lem:muon_struc}, the update of different columns of the weight matrix $\widehat{\bW}_t$ are similar.
We analyze the loss related to the first column of the weight matrix $\widehat{\bW}_t$ as an example.

As Lemma~\ref{lem:muon_struc} indicates, the weight matrix $\widehat{\bW}_t$ maintains a special structure during the optimization process via Muon.
Thus, $\bP-\widehat{\bP}_t$ (defined in Equation~\eqref{eq:grad}) also maintains a similar structure that it can be rewritten as the sum of a diagonal block matrix and a block-wise constant matrix.
We apply SVD to $\bP-\widehat{\bP}_t$ and use the similar notations as in Proposition~\ref{pro:svd}.
Define the last $M$ left singular vectors of $\bP-\widehat{\bP}_t$ as $\{\bu_{t,i}\}_{i=1}^M$ and the last $M$ right singular vectors as $\{\bv_{t,i}\}_{i=1}^M$. 
The update of the first column of the weight matrix $\widehat{\bW}_t$ is:
\begin{align*}
  \begin{cases}
  (\widehat{\bW}_t)_{1,1} = (\widehat{\bW}_{t-1})_{1,1} + \eta\Big(1 -\frac{1}{C} + \sum_{i=1}^{M}(\bu_{t-1,i} \bv_{t-1,i}^{\top})_{1,1}\Big),\\[1.5ex]
  (\widehat{\bW}_t)_{C,1} = (\widehat{\bW}_{t-1})_{C,1} +\eta\Big(- \frac{1}{C} + \sum_{i=1}^{M}(\bu_{t-1,i} \bv_{t-1,i}^{\top})_{1,1}\Big), \\[1.5ex]
  (\widehat{\bW}_t)_{iC,1} = (\widehat{\bW}_{t-1})_{iC,1} +\eta\Big( \sum_{i=1}^{M}(\bu_{t-1,i} \bv_{t-1,i}^{\top})_{iC,1}\Big), \quad  i \neq 1.
  \end{cases}
\end{align*}
\begin{remark}
  Using Proposition~\ref{pro:svd}, each entry in the matrix $\sum_{i=1}^{M}(\bu_{t-1,i} \bv_{t-1,i}^{\top})$ is of order $\cO(M/C)=\cO\Big(\frac{M^2}{K}\Big)$.
  As $M^2 \ll K$, these entries are relatively small compared to $1$. We have:
\begin{equation}
\label{eq:muon_identity}
\widehat{\bW}_t-\widehat{\bW}_{t-1}\eqsim \bI_K.
\end{equation}
Specifically, as mentioned in Proposition~\ref{prop: approx_identity}, we have:
\begin{equation*}
 \|\operatorname{msgn}(\bP - \widehat{\bP}_t) - \bI_{K}\|_{\max} = o_K(1).
\end{equation*}
\end{remark}
As $\widehat{\bW}_t$ is initialized as $\mathbf{0}$, summing up with the time step, we have:
\begin{align}
  \label{eq:muon_update_1col}
  \begin{cases}
  (\widehat{\bW}_t)_{1,1} = \eta\Big(1 -\frac{1}{C} + \sum_{t'=0}^{t-1}\sum_{i=1}^{M}(\bu_{t',i} \bv_{t',i}^{\top})_{1,1}\Big),\\[1.5ex]
  (\widehat{\bW}_t)_{C,1} = \eta\Big(- \frac{1}{C} + \sum_{t'=0}^{t-1}\sum_{i=1}^{M}(\bu_{t',i} \bv_{t',i}^{\top})_{1,1}\Big), \\[1.5ex]
  (\widehat{\bW}_t)_{iC,1} = \eta\Big( \sum_{t'=0}^{t-1}\sum_{i=1}^{M}(\bu_{t',i} \bv_{t',i}^{\top})_{iC,1}\Big), \quad  i \neq 1.
  \end{cases}
\end{align}

According to Lemma~\ref{lem:muon_struc}, $\widehat{\bW}_t$ maintains the special structure, we introduce $s^+_{t,i|j}$ and $s^-_{t,i|j}$ to simplify the notation. 
Remark that the first index $i\in [M]$ is the group index and the last index $j\in [K]$ is the query index. These scores are defined as:
\begin{align*}
    s^+_{t,i|j} &= \exp\left((\widehat{\bW}_t)_{j,j}\right),\quad\text{ denoting the score for a true prediction, }i \text{ is the group includes } j \\
    s^-_{t,i|j} &= \exp\left((\widehat{\bW}_t)_{l,j}\right),\quad\text{ denoting the score for a false prediction, }l\text{ belongs to group } i, l \neq j.
\end{align*}
where the indices $i$ and $j$ denote that the predicted answer is in group $i$, given that the input is the $j$-th query.

The loss $\cL_1(\bW_t)$ can be expressed using $s^+_{t,1|1}$ and $s^-_{t,l|1}$ as:
\begin{equation}
    \cL_1(\bW_t) = -\log\left( \frac{s^+_{t,1|1}}{s^+_{t,1|1} + (C-1)s^-_{t,1|1} + \sum_{l=2}^{M}Cs^-_{t,i|1}}\right).
\end{equation}
According to Equation~\eqref{eq:muon_update_1col}, all scores related to the first column of $\widehat{\bW}_t$ can be expressed by $s^+_{t,1|1}$. We have:
\begin{align}
 s^-_{t,1|1} &= s^+_{t,1|1} \cdot \exp(-\eta t),\label{eq:s_minus_1}\\
 s^-_{t,i|1} &= s^+_{t,1|1} \cdot \exp\bigg(-\eta t+\frac{\eta t}{C}+\eta \sum_{t'=0}^{t-1}\sum_{l=1}^{M}((\bu_{t',l} \bv_{t',l}^{\top})_{iC,1}-(\bu_{t',l} \bv_{t',l}^{\top})_{1,1})\bigg), \quad i \neq 1\label{eq:s_minus_i}.
\end{align}
Denote $\tilde{s}_{t,i}=\sum_{l=1}^{M}((\bu_{t',l} \bv_{t',l}^{\top})_{iC,1}-(\bu_{t',l} \bv_{t',l}^{\top})_{1,1})$.
With Equation~\eqref{eq:s_minus_1} and Equation~\eqref{eq:s_minus_i}, $\cL_1(\bW_t)$ can be expressed as:
\begin{equation*}
  \cL_1(\bW_t) = \log \bigg(1 + (C-1)\exp(-\eta t) + C\cdot\sum_{l=2}^{M}\exp(-\eta t+\frac{\eta t}{C}+\eta \sum_{t'=0}^{t-1}\tilde{s}_{t,l})\bigg).
\end{equation*}
As we consider $\eta = 1$ in this case, we have:
\begin{equation}
\label{eq:ex_of_l1}
  \cL_1(\bW_t) = \log \bigg(1 + (C-1)\exp(-t) + C\cdot\sum_{l=2}^{M}\exp(-t+\frac{t}{C}+ \sum_{t'=0}^{t-1}\tilde{s}_{t,l})\bigg).
\end{equation}
As $\lvert(\bu_{t',l} \bv_{t',l}^{\top})_{iC,1}\rvert\leq 1$, for all $t',l,i$, we have:
\begin{align}
  \lvert \tilde{s}_{t,l}\rvert &\leq \sum_{l=1}^{M} \lvert(\bu_{t,l} \bv_{t,l}^{\top})_{iC,1}-(\bu_{t,l} \bv_{t,l}^{\top})_{1,1}\rvert \notag\\
  &\leq \frac{2M}{C}= \frac{2M^2}{K},\quad \text{as } K=C M.  \label{eq:s_bound}
\end{align} 

Substituting the upper bound of $\lvert \tilde{s}_{t,l}\rvert$ in Equation~\eqref{eq:s_bound} into the expression of $\cL_1(\bW_t)$ in Equation~\eqref{eq:ex_of_l1}, we have:
\begin{align}
  \cL_1(\bW_t) &\leq \log \left(1 + (C-1)\exp(-t) + C\cdot\sum_{l=2}^{M}\exp\bigg(-t+\frac{t}{C}+\frac{2M^2t}{K}\bigg)\right)\notag\\
  &\leq \log \left(1 + (C-1)\exp(-t) + C(M-1)\exp\bigg(-t+\frac{3M^2 t}{K}\bigg)\right)\label{eq:muon_loss_2}\\
  &\leq \log \left(1 + K\exp\bigg(-t+\frac{3M^2 t}{K}\bigg)\right)\label{eq:muon_loss_3}\\
  &\eqsim K\cdot \exp\bigg(-t+\frac{3M^2 t}{K}\bigg), \quad \text{as } t \to \infty.\label{eq:muon_loss_1}
\end{align}
In Equation ~\eqref{eq:muon_loss_2}, we relax $2M^2 + 1$ to $3M^2$ (for $M\geq 1$).
In Equation ~\eqref{eq:muon_loss_3}, we upper bound $\exp(-t)$ with $\exp\big(-t+\frac{3M^2 t}{K}\big)$ and relax $CM-1$ to $K$.
As $M^2 \ll K$, the term $\exp\bigg(-t+\frac{3M^2 t}{K}\bigg) \to 0$ as $t \to \infty$. 
In Equation ~\eqref{eq:muon_loss_1}, we use $\log(1+x)\eqsim x$ as $x \to 0$.

Similarly, we can analyze other columns of the weight matrix $\widehat{\bW}_t$ and obtain the same convergence rate of $\cL_i(\bW_t)$.
Combining the loss related to all sub-tasks, we have:
\begin{align*}
  \mathcal{L}(\bW_t) = \sum_{i=1}^{K} p_i\mathcal{L}_i(\bW_t) \lesssim K\cdot \exp\bigg(-t+\frac{3M^2 t}{K}\bigg), \quad \text{as } t \to \infty.
\end{align*}
For the lower bound, the estimation is similar. Combining Equation~\eqref{eq:s_bound} and Equation~\eqref{eq:ex_of_l1}, we have:
\begin{align}
  \cL_1(\bW_t) &\geq \log \left(1 + (C-1)\exp(-t) + C\cdot\sum_{l=2}^{M}\exp\bigg(-t+\frac{t}{C}-\frac{2M^2t}{K}\bigg)\right)\notag\\
  &\geq \log \left(1 + K\exp\bigg(-t-\frac{2M^2 t}{K}\bigg)\right)\label{eq:1036}\\
  &\eqsim K\cdot \exp\bigg(-t-\frac{2M^2 t}{K}\bigg), \quad \text{as } t \to \infty.\label{eq:1037}
\end{align}
In Equation ~\eqref{eq:1036}, we relax $\exp(-t)$ and $\exp(-t+\frac{t}{C}-\frac{2M^2t}{K})$ to $\exp\big(-t-\frac{2M^2 t}{K}\big)$. We also approximate $K-1$ to $K$ as $K$ is sufficiently large.
In Equation ~\eqref{eq:1037}, we use $\log(1+x)\eqsim x$ as $x \to 0$.

Combining the loss related to all sub-tasks, we have:
\begin{align*}
  \cL(\bW_t) &\gtrsim K\cdot \exp\bigg(-t-\frac{2M^2 t}{K}\bigg), \quad \text{as } t \to \infty.
\end{align*}
Combining the upper and lower bounds, we have:
$$\cL(\bW_t)\eqsim K e^{-(1+o_K(1))t}.$$
\end{proof}
\subsubsection{Proof of Lemma~\ref{lem:muon_struc}}
\label{sec:muon_struc}
\begin{proof}[Proof of Lemma~\ref{lem:muon_struc}]
We prove the lemma by induction.\\
\textbf{Base Case:} At time step $t=0$, we have $\widehat{\bW}_0 = \mathbf{0}$. Thus, each block $\mathbf{B}_{i,j}$ satisfies the structure in Lemma~\ref{lem:muon_struc}.\\
\textbf{Induction Step:} Assume Lemma~\ref{lem:muon_struc} holds from time step $0$ to $t$. We analyze the update from time step $t$ to $t+1$.\\
According to Equation~\eqref{eq:grad} loss function, we have:
$$\msgn(\nabla_{\bW_t}\cL(\bW_t))=-\widetilde{\bE}\cdot \msgn(\bP-\widehat{\bP}_t) \cdot \bE^\top.$$
Then the update of $\widehat{\bW}_t$ according to update rule of Muon~\eqref{eq:muon_update} can be expressed as:
\begin{equation*}
\widehat{\bW}_{t+1} = \widehat{\bW}_t + \msgn(\bP-\widehat{\bP}_t).
\end{equation*}
$\widehat{p}_{t,i|j}$ can be expressed with $\widehat{\bW}_t$ as:
$$\widehat{p}_{t,i|j}=\frac{\exp((\widehat{\bW}_t)_{i,j})}{\sum_{l=1}^{K} \exp((\widehat{\bW}_t)_{l,j})}.$$
As Lemma~\ref{lem:muon_struc} holds at time step $t$, $\widehat{p}_{t+1,i|j}$ satisfies similar equivalence properties as $\widehat{\bW}_t$:
\begin{itemize}
  \item \textbf{True Prediction:} For any $i,j$, if $i,j$ belong to the same group, we have $\widehat{p}_{t,i|i} = \widehat{p}_{t,j|j}$.
  \item \textbf{False Prediction:} For any $i_1,i_2,j_1,j_2$, if $i_1,i_2$ belong to a same group and $j_1,j_2$ belong to a same group, $i_1\neq j_1$, $i_2\neq j_2$, we have $\widehat{p}_{t,i_1|j_1} = \widehat{p}_{t,i_2|j_2}$.
\end{itemize}
Thus, $(\bP-\widehat{\bP}_t)$ can be decomposed into the difference between a diagonal matrix and a block matrix:
$$\bP-\widehat{\bP}_t=\bR_t^{+}-\bR_t^{-},\quad \text{ where}$$

\begin{equation}\bR_t^{+}=\begin{pmatrix}p_C(1-\whp_{t,C|C}+\whp_{t,C-1|C})\bI_C & \mathbf{0} & \cdots & \mathbf{0}\\
\mathbf{0} & p_{2C}(1-\whp_{t,2C|2C}+\whp_{t,2C-1|2C})\bI_C & \cdots & \vdots \\
\vdots & \vdots & \ddots & \vdots \\
\mathbf{0} & \mathbf{0} & \cdots & p_{K}(1-\whp_{t,K|K}+\whp_{t,K-1|K})\bI_C
\end{pmatrix},\end{equation}

\begin{equation}\bR_t^{-}=\begin{pmatrix}
p_C\whp_{t,C-1|C}\bJ_C & p_C\whp_{t,2C|C}\bJ_C & \ldots & p_C\whp_{t,K|C}\bJ_C \\
p_{2C}\whp_{t,C|2C}\bJ_C & p_{2C}\whp_{t,2C-1|2C}\bJ_C & \ldots & p_{2C}\whp_{t,K|2C}\bJ_C \\
\vdots & \vdots & \ddots & \vdots \\
p_{K}\whp_{t,C|K}\bJ_C & p_{K}\whp_{t,2C|K}\bJ_C & \ldots & p_{K}\whp_{t,K-1|K}\bJ_C \\
\end{pmatrix}.
\end{equation}

Then the structure of $\widehat{\bW}_{t+1}$ is suitable for setting in Proposition~\ref{pro:svd}. Using Proposition~\ref{pro:svd},
we have $\msgn(\bP-\widehat{\bP}_t)$ can be expressed as sum of an identity matrix and a block matrix sharing the same structure as $\bB'_t$. Thus, each block $\mathbf{B}_{i,j}$ in $\widehat{\bW}_{t+1}$ satisfies the structure in Lemma~\ref{lem:muon_struc}.
\end{proof}

\section{Proof for the Noisy Case}
\subsection{Analysis of Gradient Descent (GD)}
\label{sec:proof_gd_noisy}
We prove Lemma~\ref{lem:gd_struc_n}, Lemma~\ref{lem:ode_noisy} before completing the total proof of Theorem~\ref{thm: GD_lower_bound}.
\begin{lemma}
\label{lem:gd_struc_n}
(General Lemma~\ref{lem:gd_struc}) Assume the weight matrix is initialized as $\mathbf{W}_0 = \mathbf{0}$ and optimized via GD. The weight matrix $\widehat{\bW}_t$ satisfies:
\begin{equation*}
  (\widehat{\bW}_t)_{i_1,j} = (\widehat{\bW}_t)_{i_2,j}, \quad \forall i_1,i_2 \neq j, \quad \forall j \in [K].
\end{equation*}
\end{lemma}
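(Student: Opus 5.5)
The plan is to mirror the proof of Lemma~\ref{lem:gd_struc} and argue by induction on $t$, working in the task-representation space with $\widehat{\bW}_t=\widetilde{\bE}^\top\bW_t\bE$. Because the embeddings are orthogonal, the GD update~\eqref{eq:gd_update} reads $\widehat{\bW}_{t+1}=\widehat{\bW}_t-\eta\bG_t$. By the noisy gradient formula~\eqref{eq:grad_noisy}, we have $\bG_t=-(\bP'-\widehat{\bP'}_t)$, so entrywise
\[
(\widehat{\bW}_{t+1})_{i,j}=(\widehat{\bW}_t)_{i,j}+\eta\, p_j\bigl(p_{i|j}-\widehat{p}_{t,i|j}\bigr).
\]
The key observation is that each column $j$ evolves on its own: the $j$-th column update depends only on $p_j$, on the $j$-th column of $\widehat{\bW}_t$ through the softmax, and on the target column $p_{\cdot|j}$.

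\textbf{Base case.} Since $\bW_0=\mathbf{0}$, we have $\widehat{\bW}_0=\mathbf{0}$, and all off-diagonal entries of every column equal $0$.

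\textbf{Induction step.} Suppose that at time $t$ the entries $(\widehat{\bW}_t)_{i,j}$ agree for all $i\neq j$. Then the softmax gives $\widehat{p}_{t,i_1|j}=\widehat{p}_{t,i_2|j}$ for all $i_1,i_2\neq j$, because the logits coincide and the normalizer is shared. The label-noise model~\eqref{eq:label:noise} gives $p_{i_1|j}=p_{i_2|j}=\alpha/K$ for both indices. Hence the increments $\eta p_j(\alpha/K-\widehat{p}_{t,i|j})$ are identical for $i=i_1$ and $i=i_2$. Adding equal increments to equal entries preserves the equality at $t+1$, which closes the induction.

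There is no real obstacle here. The only point needing care, compared with the noiseless Lemma~\ref{lem:gd_struc}, is checking that the noise term does not break the symmetry. It does not, because label noise is uniform over all $K$ answers, so the target $p_{i|j}$ is the same constant $\alpha/K$ for every incorrect $i$. It is also worth stating explicitly that this uses only the orthogonality of the embeddings and zero initialization, not the hierarchical frequency assumption. This is consistent with the paper's remark that the GD analysis does not rely on that assumption.
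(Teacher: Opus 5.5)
Your proposal is correct and follows the paper's route. The paper proves this lemma by the same induction as Lemma~\ref{lem:gd_struc}. Your added check, that uniform label noise gives the same target $p_{i|j}=\alpha/K$ for every incorrect $i$ so the increments $\eta p_j(\alpha/K-\widehat{p}_{t,i|j})$ coincide, is exactly the one extra ingredient that adaptation requires.
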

\begin{proof}[Proof of Lemma~\ref{lem:gd_struc_n}]
The proof is similar to that of Lemma~\ref{lem:gd_struc}.
\end{proof}
\begin{lemma}
\label{lem:ode_noisy}
Consider the following discrete update:
\begin{equation*}
\begin{cases}
x_{t+1}-x_{t}=\frac{A}{e^{x_t}+B}-D, \quad A>0, B>0, D>0. \\
x_0=0, \quad A-BD > D, \quad A < 4B.
\end{cases}
\end{equation*}
Denote the fixed point of the update as $x^*$, satisfying $x^*=\log\big(\frac{A-BD}{D}\big)$. 
Assume $x_t\leq x^*$ for all $t$ throughout the iterations, we have: $$x^*-x_t\geq x^*\cdot\Big(1-\frac{A}{4B}\Big)^t.$$
\end{lemma}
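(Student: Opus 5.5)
The plan is to linearize the recursion around its fixed point and run a one-step contraction argument. Write $f(x) := \frac{A}{e^{x}+B} - D$, so the update reads $x_{t+1} = x_t + f(x_t)$. The fixed point satisfies $f(x^*)=0$, i.e.\ $e^{x^*} = (A-BD)/D$. The hypothesis $A - BD > D$ gives $(A-BD)/D > 1$, hence $x^* > 0$. Define the gap $y_t := x^* - x_t$. By the standing assumption $x_t \le x^*$ we have $y_t \ge 0$ for all $t$, and $y_0 = x^* > 0$.

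\textbf{Step 1: one-step recursion for the gap.} Since $f(x^*)=0$,
\begin{equation*}
y_{t+1} = y_t - f(x_t) = y_t - \bigl(f(x_t) - f(x^*)\bigr).
\end{equation*}
By the mean value theorem there is $\xi_t$ between $x_t$ and $x^*$ with
\begin{equation*}
f(x_t) - f(x^*) = f'(\xi_t)(x_t - x^*) = -f'(\xi_t)\, y_t,
\qquad
f'(x) = -\frac{A e^{x}}{(e^{x}+B)^2}.
\end{equation*}
This yields
\begin{equation*}
y_{t+1} = \bigl(1 - |f'(\xi_t)|\bigr) y_t .
\end{equation*}

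\textbf{Step 2: uniform bound on the slope.} By AM--GM, $(e^{x}+B)^2 \ge 4Be^{x}$ for every real $x$. Hence
\begin{equation*}
|f'(x)| = \frac{A e^{x}}{(e^{x}+B)^2} \le \frac{A}{4B}
\end{equation*}
uniformly in $x$, with no need to localize $\xi_t$. The assumption $A < 4B$ then gives
\begin{equation*}
1 - |f'(\xi_t)| \ \ge\ 1 - \frac{A}{4B} \ >\ 0 .
\end{equation*}

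\textbf{Step 3: induction.} Since $y_t \ge 0$, multiplying the inequality from Step 2 by $y_t$ preserves its direction, so $y_{t+1} \ge (1 - \frac{A}{4B})\, y_t$. Iterating from $y_0 = x^*$ gives
\begin{equation*}
x^* - x_t = y_t \ \ge\ x^*\Bigl(1-\frac{A}{4B}\Bigr)^t,
\end{equation*}
which is the claim.

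The only delicate point is the sign bookkeeping: we must check that the contraction factor $1-|f'(\xi_t)|$ is nonnegative, and that $y_t\ge 0$ so the lower bound propagates. The first is exactly what $A<4B$ together with the AM--GM bound provides. The second is the hypothesis $x_t\le x^*$, which is assumed rather than proved here.
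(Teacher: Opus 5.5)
Your proof is correct and follows the paper's argument exactly: the mean value theorem around $x^*$, the uniform slope bound $|f'(x)|\le A/(4B)$ (the paper writes the denominator as $e^{x}+2B+B^2e^{-x}\ge 4B$, which is your AM--GM step), and then iteration from $x^*-x_0=x^*$. A side remark: since $1+f'(x)\ge 1-A/(4B)>0$, the map $x\mapsto x+f(x)$ is increasing and fixes $x^*$, so the hypothesis $x_t\le x^*$ actually follows from $x_0=0<x^*$ and need not be assumed.
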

\begin{proof}[Proof of Lemma~\ref{lem:ode_noisy}]
  Define an auxiliary function $f:\RR \rightarrow \RR$ as $f(x_t)=\frac{A}{e^{x_t}+B}-D$.
  According to Mean Value Theorem, we have:
  \begin{equation*}
    f(x^*)-f(x_t)=f'(\xi_t)(x^*-x_t), \quad \text{ for some } \xi_t \in (x_t,x^*).
  \end{equation*}
  Substituting $f(x_t)=x_{t+1}-x_t$, we have:
  \begin{equation*}
    x^*-x_{t+1}=(1 + f'(\xi_t))(x^*-x_t).
  \end{equation*}
  We analyze $f'(x)$ as follows:
  \begin{align*}
    f'(x) &= -\frac{A e^x}{(e^x + B)^2} = -\frac{A}{e^{x} + 2B + B^2 e^{-x}}\geq -\frac{A}{4B}.
  \end{align*}
  Thus, we have:
  \begin{equation*}
    x^*-x_{t+1} \geq \Big(1 - \frac{A}{4B}\Big)\cdot(x^*-x_t)\quad \text{ as }A<4B.
  \end{equation*}
  Combining the above inequality recursively, we have:
  \begin{equation*}
    x^*-x_t\geq (x^*-x_0)\cdot\Big(1 - \frac{A}{4B}\Big)^t=x^*\Big(1 - \frac{A}{4B}\Big)^t.
  \end{equation*}
\end{proof}
\begin{proof}[Proof of Theorem~\ref{thm: GD_lower_bound}]
  According to the update rule of GD in ~\eqref{eq:gd_update} and loss function in ~\eqref{eq:grad_noisy}, each column of the weight matrix $\widehat{\bW}_t$ is updated independently.
  Similar to the noiseless case, we analyze the update of the first column of $\widehat{\bW}_t$ and the loss related to the first task $\cL_1(\bW_t)$
  as an example.
  
  With Lemma~\ref{lem:gd_struc_n}, we have $(\widehat{\bW}_t)_{i,1} = (\widehat{\bW}_t)_{j,1}$ for all $i,j \neq 1$. 
  Similar to the noiseless case, the predicted probability can be expressed as:
  \begin{align*}
    \widehat{p}_{t,1|1} &= \frac{\exp((\widehat{\bW}_t)_{1,1})}{\exp((\widehat{\bW}_t)_{1,1}) + (K-1)\exp((\widehat{\bW}_t)_{2,1})},\\
    \widehat{p}_{t,i|1} &= \frac{\exp((\widehat{\bW}_t)_{2,1})}{\exp((\widehat{\bW}_t)_{1,1}) + (K-1)\exp((\widehat{\bW}_t)_{2,1})}, \quad \forall i \neq 1.
  \end{align*}
  The update rule of GD can be expressed as:
  \begin{align*}
    (\widehat{\bW}_{t+1})_{1,1} &= (\widehat{\bW}_t)_{1,1} + \eta \cdot p_1 \Big(1 - \alpha + \frac{\alpha}{K} - \widehat{p}_{t,1|1}\Big),\\
    (\widehat{\bW}_{t+1})_{i,1} &= (\widehat{\bW}_t)_{i,1} + \eta \cdot p_1 \Big(\frac{\alpha}{K}-\widehat{p}_{t, i|1}\Big), \quad \forall i \neq 1.
  \end{align*}

  We use the same notation $\Delta_t$ in Appendix~\ref{sec:proof_gd_noiseless}, define $\Delta_t=(\widehat{\bW}_t)_{1,1}-(\widehat{\bW}_t)_{2,1}$.
  The update rule of $\Delta_t$ is:
  \begin{equation*}
    \Delta_{t+1} = \Delta_t + \eta \cdot p_1 \big(1 - \alpha - \widehat{p}_{t,1|1}+\widehat{p}_{t,2|1}\big).
  \end{equation*}  
  As $\widehat{p}_{t,1|1}+(K-1)\widehat{p}_{t,2|1} = 1$, we have:
  \begin{align*}
    \Delta_{t+1} &= \Delta_t + \eta p_1K\cdot \widehat{p}_{t,2|1} - \alpha\eta p_1\\
    &= \Delta_t + \frac{\eta p_1 K}{e^{\Delta_t} + (K-1)} - \alpha\eta p_1.
  \end{align*}
  Solving the fixed-point equation $\frac{K}{e^{\Delta^*}+K-1}=\alpha$ gives
  \begin{equation*}
    \Delta^* = \log\Bigg(K\cdot \frac{\big(1-\alpha+\frac{\alpha}{K}\big)}{\alpha}\Bigg).
  \end{equation*}
  By Proposition~\ref{prop: stability}, linear stability requires $\eta p_1 \lesssim 1$.
  We assume $\eta p_1 \leq 1$ in the following analysis. 

  Define $f:\RR \to \RR$, as $f(\Delta_t) = \frac{\eta p_1 K}{e^{\Delta_t} + (K-1)} - \alpha \eta p_1$.\\
  We analyze $f'(x)$ as follows:
  \begin{align*}
    f'(x) = -\frac{\eta p_1 K e^x}{(e^x + (K-1))^2} \geq -\frac{p_1 K}{4(K-1)}.
  \end{align*}
  
  As $\widehat{\bW}_t$ is initialized as $\mathbf{0}$, we have the initial condition $\Delta_0 = 0$. Then using Lemma~\ref{lem:ode_noisy}, we have:
  \begin{align*}
    \Delta^* - \Delta_t &\geq \Delta^* \cdot \left(1 - \frac{K\eta p_1}{4(K-1)}\right)^t, \quad \text{as } \eta p_1\leq 1,\\
    &\eqsim \Delta^*\Big(1-\frac{1}{4}\eta p_1\Big)^t, \quad \text{as } K\text{ is large enough}.
  \end{align*}

  For $x \leq 1/4$. we have $(1-x) \geq e^{-2x}$. Then $\Delta^*-\Delta_t$ can be further bounded as:
  \begin{equation*}
    \Delta^* - \Delta_t \geq \Delta^* \cdot e^{-\frac{1}{2}\eta p_1 t}.
  \end{equation*}
  Predicted probability can be expressed using $\Delta_t$ as:
  \begin{align*}
    \widehat{p}_{t,1|1} = \frac{e^{\Delta_t}}{e^{\Delta_t} + (K-1)},\qquad \widehat{p}_{t,i|1} = \frac{1}{e^{\Delta_t} + (K-1)}, \quad \forall i \neq 1.
  \end{align*}
  Then $\cL_1(\bW_T)$ can be expressed using $\Delta_T$ as:
  \begin{align*}
    \cL_1(\bW_T) &= -\Big(1-\alpha+\frac{\alpha}{K}\Big)\log \widehat{p}_{T,1|1} - \frac{\alpha}{K}\sum_{l=2}^{K}\log \widehat{p}_{T,l|1}\\
    &= -\Big(1-\alpha+\frac{\alpha}{K}\Big)\log \frac{e^{\Delta_T}}{e^{\Delta_T} + (K-1)} - \frac{\alpha}{K}\sum_{l=2}^{K}\log \frac{1}{e^{\Delta_T}+ (K-1)}\\
    &= -\Big(1-\alpha+\frac{\alpha}{K}\Big)\Delta_T + \log(e^{\Delta_T} + (K-1)).
  \end{align*}

  According to Proposition~\ref{prop: noisy}, the optimal 
  $\cL^*$ is the entropy of data distribution. 
  It is achieved when the predicted distribution matches the data distribution, that: 
  \begin{align*}
  \widehat{p}_{1|1}&=\Big(1-\alpha+\frac{\alpha}{K}\Big), \qquad \widehat{p}_{i|1}=\frac{\alpha}{K}, \quad \forall i \neq 1.
  \end{align*}
  When $\cL(\bW_t)$ achieved its optimal value, $\Delta_t$ reaches its fixed point $\Delta^*$.
  We analyze the convergence rate of $\cL_1(\bW_T)$ to $\cL_1^*$ as follows:
  \begin{align*}
    \cL_1(\bW_T) - \cL_1^* &= \Big(1-\alpha+\frac{\alpha}{K}\Big)(\Delta^*-\Delta_T)+\log\left(\frac{e^{\Delta_T}+(K-1)}{e^{\Delta^*}+(K-1)}\right)\\
    &= \Big(1-\alpha+\frac{\alpha}{K}\Big)(\Delta^*-\Delta_T)+\log\left(1+\frac{e^{\Delta^*}(e^{\Delta_T-\Delta^*}-1)}{e^{\Delta^*}+(K-1)}\right).
  \end{align*}
  Consider an auxiliary function $g:\RR \to \RR$, defined as $g(x) = \Big(1-\alpha+\frac{\alpha}{K}\Big) x + \log\left(1+\frac{e^{\Delta^*}(e^x-1)}{e^{\Delta^*}+(K-1)}\right)$.
  The derivative of $g(x)$ is:
  \begin{align*}
    g'(x)&= \Big(1-\alpha+\frac{\alpha}{K}\Big) + \frac{e^{\Delta^*+x}}{e^{\Delta^*}+(K-1)+e^{\Delta^*}(e^x-1)}\geq 0.
  \end{align*}
  We have $g(x)$ is monotonically increasing. As $\Delta^*-\Delta_T \geq \Delta^* \cdot e^{-\frac{1}{2}\eta p_1 T}$, we have:
  \begin{align}
    \cL_1(\bW_T)-\cL_1^* &\geq g(\Delta^* \cdot e^{-\frac{1}{2}\eta p_1T})\notag\\
    &= \Big(1-\alpha+\frac{\alpha}{K}\Big)\Delta^* \cdot e^{-\frac{1}{2}\eta p_1T} + \log\left(1+\frac{e^{\Delta^*}(\exp(-e^{-\frac{1}{2}\eta p_1T}\Delta^*)-1)}{e^{\Delta^*}+(K-1)}\right)\notag\\
    \label{eq:gd_loss_gap}
    &= \Big(1-\alpha+\frac{\alpha}{K}\Big)\Delta^* \cdot e^{-\frac{1}{2}\eta p_1T} + \log\left(1+\Big(1-\alpha+\frac{\alpha}{K}\Big)(\exp(-e^{-\frac{1}{2}\eta p_1T}\Delta^*)-1)\right).
  \end{align}
  In Equation ~\eqref{eq:gd_loss_gap}, we substitute $e^{\Delta^*} = K\cdot \frac{(1-\alpha+\frac{\alpha}{K})}{\alpha}$.
  As $T \to \infty$ and $\Delta^*=\log\left(K\cdot \frac{1-\alpha+\frac{\alpha}{K}}{\alpha}\right)$, we have:
  \begin{equation*}
    e^{-\frac{1}{2}\eta p_1T}\Delta^* \to 0.
  \end{equation*}
  With Taylor expansion of $\exp(x)$ and $\log(1+x)$ at $x=0$, the log term can be estimated as:
  \begin{align*}
    &\log\left(1+\Big(1-\alpha+\frac{\alpha}{K}\Big)\cdot (\exp(-\Delta^* \cdot e^{-\frac{1}{2}\eta p_1T})-1)\right)\\
    =&\log\left(1 - \Big(1-\alpha+\frac{\alpha}{K}\Big)\cdot \big(\Delta^* \cdot e^{-\frac{1}{2}\eta p_1T} - \frac{(\Delta^*)^2 \cdot e^{-\eta p_1T}}{2} + o(e^{-\eta p_1T})\big)\right)\\
    =&-\Big(1-\alpha+\frac{\alpha}{K}\Big)\cdot \left(\Delta^* \cdot e^{-\frac{1}{2}\eta p_1T} - \frac{(\Delta^*)^2 \cdot e^{-\eta p_1T}}{2}\right) - \Big(1-\alpha+\frac{\alpha}{K}\Big)^2\frac{(\Delta^*)^2 \cdot e^{-\eta p_1T}}{2}+ o(e^{-\eta p_1T})\\
    =&-\Big(1-\alpha+\frac{\alpha}{K}\Big)\Delta^* \cdot e^{-\frac{1}{2}\eta p_1T} + \Big(1-\alpha+\frac{\alpha}{K}\Big)\left(\alpha-\frac{\alpha}{K}\right)\frac{(\Delta^*)^2 \cdot e^{-\eta p_1T}}{2}. 
  \end{align*}

  Then the convergence rate of $\cL_1(\bW_T)$ to $\cL_1^*$ can be estimated as:
  \begin{align*}
    &\cL_1(\bW_T)-\cL_1^*\\
    \geq& \Big(1-\alpha+\frac{\alpha}{K}\Big)\Delta^* \cdot e^{-\frac{1}{2}\eta p_1T}-\Big(1-\alpha+\frac{\alpha}{K}\Big)\Delta^* \cdot e^{-\frac{1}{2}\eta p_1T} + \Big(1-\alpha+\frac{\alpha}{K}\Big)\left(\alpha-\frac{\alpha}{K}\right)\frac{(\Delta^*)^2 \cdot e^{-\eta p_1T}}{2}\\
    =& \Big(1-\alpha+\frac{\alpha}{K}\Big)\Big(\alpha-\frac{\alpha}{K}\Big)\frac{(\Delta^*)^2 \cdot e^{-\eta p_1T}}{2}\\
    =& \Omega \left((\log K)^2 \cdot e^{-\eta p_1T}\right).
  \end{align*}

  For other columns of the weight matrix $\widehat{\bW}_t$, as $p_1 \geq p_2 \geq \cdots \geq p_M$, we have $\eta p_i$ also satisfies $\eta p_i \leq 1$ when $\eta p_1 \leq 1$. 
  Then the proof of first column can be directly applied to other columns. We have:
  \begin{equation*}
    \cL_i(\bW_T)-\cL_i^* = \left(1-\alpha+\frac{\alpha}{K}\right)\left(\alpha-\frac{\alpha}{K}\right)\frac{(\Delta^*)^2 \cdot e^{-\eta p_i T}}{2},\quad \text{for all } i \in [K].
  \end{equation*}
  Combining the loss related to all tasks, we have:
  \begin{align}
    \mathcal{L}(\bW_T) - \mathcal{L}^* &= \sum_{i=1}^{K} p_i (\mathcal{L}_i(\bW_T) - \mathcal{L}_i^*)\notag\\
    &\geq \sum_{i=1}^{K} p_i \left(1-\alpha+\frac{\alpha}{K}\right)\cdot\left(\alpha-\frac{\alpha}{K}\right)\frac{(\Delta^*)^2 \cdot e^{-\eta p_i T}}{2}\notag\\
    &= \left(1-\alpha+\frac{\alpha}{K}\right)\cdot\left(\alpha-\frac{\alpha}{K}\right)\frac{(\Delta^*)^2}{2}\sum_{i=1}^{K} p_i e^{-\eta p_i T}\label{eq:gd_noisy_final}.
  \end{align}
\end{proof}

\subsection{Analysis of Muon}
\label{sec:proof_muon_noisy}
Proof of Theorem~\ref{thm: muon_upper_bound} and Theorem~\ref{thm: Muon_total_loss} is as follows. Theorem~\ref{thm: Muon_total_loss} can be viewed as an extension of Theorem~\ref{thm: muon_upper_bound}.
According to the gradient in the noisy case (Equation~\eqref{eq:grad_noisy}), we have:
\begin{equation*}
  -\bG_t = \bP'-\widehat{\bP'}_t.
\end{equation*}
It can be rewritten as:
\begin{equation*}
  -\bG_t= \bR_t^{+} - \bR_t^{-}, \quad \text{ where } 
\end{equation*}
$$\bR_t^{+}=\begin{pmatrix}p_1(1-\alpha + \whp_{t,2|1}-\whp_{t,1|1})& \cdots &\mathbf{0}\\\vdots& \ddots &\vdots\\\mathbf{0}& \cdots & p_K(1-\alpha + \whp_{t,K-1|K}-\whp_{t,K|K})\end{pmatrix},$$
with the diagonal entry $(\bR_t^{+})_{i,i}=p_i(1-\alpha+\whp_{t,j|i}-\whp_{t,i|i})$, where $j\neq i$ and $j$ is in the same group as $i$.
$\bR_t^{-}$ is defined as:
\begin{itemize}
  \item For the diagonal entry: $$(\bR_t^{-})_{i,i}=-p_i\left(\frac{\alpha}{K}-\whp_{t,j|i}\right),\quad j \text{ is the same index chosen for }(\bD_t)_{i,i}.$$
  \item For the off-diagonal entry: $$(\bR_t^{-})_{l,i}=-p_i\left(\frac{\alpha}{K}-\whp_{t,l|i}\right), \quad \forall l\neq i. $$
\end{itemize}
Since $\bW_t$ is initialized as $\mathbf{0}$, $\bR_t^{+}$ is a block-diagonal matrix and $\bR_t^{-}$ is a block-wise constant matrix.
According to Proposition~\ref{pro:svd_general}, $\msgn(\bG_0)$ shares the same block structure, remaining a block-wise constant matrix.
We can prove $\bR_t^{+}$ remains a block-diagonal matrix and $\bR_t^{-}$ remains a block-wise constant matrix for all $t$ using induction. The proof is similar to that of Lemma~\ref{lem:muon_struc}.
Thus, we have:
\begin{equation}\bR_t^{+}=\begin{pmatrix}p_C(1 - \alpha +\whp_{t,C-1|C}-\whp_{t,C|C})\bI_C & \cdots & \mathbf{0}\\
\vdots & \ddots & \vdots \\
\mathbf{0} & \cdots & p_{K}(1-\alpha + \whp_{t,K-1|K}-\whp_{t,K|K})\bI_C
\end{pmatrix},\end{equation}

\begin{equation}-\bR_t^{-}=\begin{pmatrix}
p_C(\frac{\alpha}{K}-\whp_{t,C-1|C})\bJ_C & \ldots & p_C(\frac{\alpha}{K}-\whp_{t,K|C})\bJ_C \\
\vdots & \ddots & \vdots \\
p_{K}(\frac{\alpha}{K}-\whp_{t,C|K})\bJ_C & \ldots & p_{K}(\frac{\alpha}{K}-\whp_{t,K-1|K})\bJ_C \\
\end{pmatrix}.
\end{equation}

Similar to the noiseless case, the weight matrix $\widehat{\bW}_t$ maintains the special structure throughout the training process.
According to Proposition~\ref{pro:svd_general}, $\msgn(\bG)$ can be estimated as:
\begin{equation*}\msgn(\bG_t)\eqsim\begin{pmatrix}\sgn(1-\alpha+\whp_{t,C-1|C}-\whp_{t,C|C})\bI_C&\cdots&\mathbf{0}\\
\vdots&\ddots &\vdots \\
\mathbf{0}&\cdots &\sgn(1-\alpha+\whp_{t,K-1|K}-\whp_{t,K|K})\bI_C\end{pmatrix}.
\end{equation*}

When $\whp_{t,iC|iC}-\whp_{t,iC-1|iC}<1-\alpha$ holds for all $i \in [M]$, $\msgn(\bG_t)$ can
be estimated as $\bI_K$. Otherwise, the sign matrix will have negative entries on the diagonal, 
which leads to some diagonal blocks of the weight matrix to decrease instead of increase.

For a block where $\whp_{t,iC|iC}-\whp_{t,iC-1|iC}>1-\alpha$, 
the corresponding entry in $\msgn(\bG_t)$ flips to $-1$.
The weight matrix $\widehat{\bW}_t$ will decrease at the corresponding block, 
which further decreases $\whp_{t,iC|iC}-\whp_{t,iC-1|iC}$.
The decrease can be viewed as approximately returning to the previous state.
At the next iteration, the predicted probability corresponding to this block will satisfy
 $\whp_{t+1,iC|iC}-\whp_{t+1,iC-1|iC}<1-\alpha$, then the weight matrix $\widehat{\bW}_{t+1}$ will increase at the corresponding block.
Focusing on this specific block, we observe an oscillation in the weight matrix $\widehat{\mathbf{W}}_t$, 
where the fluctuations are characterized by a magnitude of roughly $2\eta$. As a result, the predicted probability $\whp_{t}$ also oscillates around the optimal value, 
leading to oscillations in the loss $\cL(\bW_t)$. The rigorous analysis will be provided below.

Thus, when $\whp_{t,iC|iC}-\whp_{t,iC-1|iC}<1-\alpha$ does not hold for some $i \in [M]$, at some iterations,
parts of the weight matrix $\widehat{\bW}_t$ begin to oscillate instead of converging.
Therefore, the trajectory of Muon can be divided into three phases. Denote $T'$ as the time Phase 1 ends
and $T''$ as the time Phase 2 ends.
\begin{itemize}
  \item \textbf{Phase 1 ($1$ - $T'$):} For all $t$ in this phase, $\whp_{t,iC|iC}-\whp_{t,iC-1|iC}<1-\alpha$ holds for all $i \in [M]$. In this phase, the weight matrix $\widehat{\bW}_t$ increases at all blocks,
  and the loss $\cL(\bW_t)$ decreases monotonically. $T'$ is defined as the first timesuch that there exists some $i \in [M]$ satisfying $\whp_{T',iC|iC}-\whp_{T',iC-1|iC}\geq 1-\alpha$.
  \item \textbf{Phase 2 ($T'$ - $T''$):} The gap between $\whp_{t,iC|iC}$ and $\whp_{t,iC-1|iC}$ continues to widen over iterations until the threshold $1-\alpha$ is reached. After some time step $T'$, there exists some $i \in [M]$ such that $\whp_{T',iC|iC}-\whp_{T',iC-1|iC}\geq 1-\alpha$. 
  From time $T'$, some blocks of the weight matrix $\widehat{\bW}_t$ begin to oscillate, $\whp_{t}$ also oscillates around the optimal value, and the loss $\cL(\bW_t)$ begins to oscillate consequently.

  $T''$ is defined as the first time step such that $\whp_{t,iC|iC} - \whp_{t,iC-1|iC}\geq 1-\alpha$ has emerged at some time step $t<T''$ for all $i \in [M]$.
  \item \textbf{Phase 3 ($T''$ - $\infty$):} $\whp_{t,iC|iC}-\whp_{t,iC-1|iC} > 1-\alpha$ has been observed for all $i \in [M]$. In this phase, all blocks of the weight matrix $\widehat{\bW}_t$ oscillate, and the loss $\cL(\bW_t)$ oscillates around the optimal value.
\end{itemize}
\subsubsection{Phase Transition Analysis}
\textbf{Loss dynamics:}
The dynamics are similar to those of the noiseless case in Appendix
~\ref{sec:proof_muon_noiseless}. As the updates of different 
columns of the weight matrix $\widehat{\bW}_t$ are similar, 
we focus on the update of the first column of $\widehat{\bW}_t$ and 
the loss related to the first task $\cL_1(\bW_t)$
again, as an example. We use the same notation $s^+_{t,i|j}$, $s^-_{t,i|j}$
as defined in Appendix~\ref{sec:proof_muon_noiseless}.

We have:
\begin{align*}
  \begin{cases}
  (\widehat{\bW}_t)_{1,1} = (\widehat{\bW}_{t-1})_{1,1} + \eta\bigg(1 -\frac{1}{C} + \sum_{l=1}^{M}(\bu_{t,l} \bv_{t,l}^{\top})_{1,1}\bigg),\\[1.5ex]
  (\widehat{\bW}_t)_{C,1} = (\widehat{\bW}_{t-1})_{C,1} +\eta\bigg(- \frac{1}{C} + \sum_{l=1}^{M}(\bu_{t,l} \bv_{t,l}^{\top})_{1,1}\bigg), \\[1.5ex]
  (\widehat{\bW}_t)_{iC,1} = (\widehat{\bW}_{t-1})_{iC,1} +\eta \sum_{l=1}^{M}(\bu_{t,l} \bv_{t,l}^{\top})_{iC,1}, \quad  i \neq 1.
  \end{cases}
\end{align*}
As we did in the noiseless case, we can express all scores related to the first column using $s^+_{t,1|1}$:
\begin{align*}
 s^-_{t,1|1} &= s^+_{t,1|1} \cdot \exp(-\eta t)\\
 s^-_{t,i|1} &= s^+_{t,1|1} \cdot \exp\bigg(-\eta t+\frac{\eta t}{C}+\eta \sum_{t'=0}^{t-1}\tilde{s}_{t,i}\bigg), \quad i \neq 1.
\end{align*}
The loss related to the first task is:
$$\cL_1(\bW_t)=-\Big(1-\alpha+\frac{\alpha}{K}\Big)\log \whp_{t,1|1}-\frac{\alpha}{K}\sum_{i=1}^{K}\log \whp_{t,i|1}.$$
It can be expressed using $s^+_{t,1|1}$ as:
\begin{align}
  \cL_1(\bW_t)& = -\log \whp_{t,1|1} - \frac{\alpha}{K}\cdot(C-1)\log\bigg(\frac{s^-_{t,1|1}}{s^+_{t,1|1}}\bigg)-\frac{\alpha}{K}\cdot C\sum_{i=2}^{M}\log\bigg(\frac{s^-_{t,i|1}}{s^+_{t,1|1}}\bigg)\notag\\
  &=-\log \whp_{t,1|1} + \frac{\alpha(C-1)}{K}\eta t +\frac{\alpha C}{K}\sum_{i=2}^{M}\bigg(\eta t - \frac{\eta t}{C} - \eta \sum_{t'=0}^{t-1}\tilde{s}_{t,i}\bigg)\notag\\
  &\leq -\log \whp_{t,1|1} + \frac{\alpha(C-1)}{K}\eta t +\frac{\alpha C}{K}\sum_{i=2}^{M}\Big(\eta t - \frac{\eta t}{C}+\frac{2M\eta t}{C}\Big)\label{eq:m_n_1}\\
  &= -\log \whp_{t,1|1} + \frac{K+2M^2-3M}{K}\,\alpha \eta t, \quad \text{for } t < T'.\label{eq:loss_1}
\end{align}
Equation~\eqref{eq:m_n_1} uses the absolute value bound of $\tilde{s}_{t,i}$ that $\lvert \tilde{s}_{t,i} \rvert \leq \frac{2M}{C}$. 

Similar to the noiseless case, we analyze the dynamics of $\whp_{t,1|1}$ as follows:
\begin{align*}
  \whp_{t,1|1} &= \frac{s^+_{t,1|1}}{s^+_{t,1|1} + (C-1)s^-_{t,2|1}+C\cdot \sum_{i=2}^{M}s^-_{t,i|1}}\\
  &= \frac{1}{1 + (C-1)\exp(-\eta t) + C\cdot \sum_{i=2}^{M} \exp\big(-\eta t + \frac{\eta t}{C} + \eta \sum_{t'=0}^{t-1}\tilde{s}_{t,i}\big)}\\
  &\geq \frac{1}{1 + (C-1)\exp(-\eta t) + C\cdot \sum_{i=2}^{M} \exp\big(-\eta t + \frac{\eta t}{C} + \frac{2M\eta t}{C}\big)}\\
  &\geq \frac{1}{1 + (K-1) \exp\big(-\eta t + \frac{3M^2\eta t}{K}\big)}, \quad \text{for all } t < T'.
\end{align*}
Substituting the lower bound of $\whp_{t,1|1}$ into the loss expression Equation~\eqref{eq:loss_1}, we have:
\begin{align*}
  \cL_1(\bW_t) &\leq \log\left(1 + (K-1) \exp\Big(-\eta t + \frac{3M^2\eta t}{K}\Big)\right) + \frac{K+2M^2-3M}{K}\,\alpha \eta t,\\
  &\lesssim K \exp\Big(-\eta t + \frac{3M^2\eta t}{K}\Big) + \frac{K+2M^2-3M}{K}\,\alpha \eta t, \quad \text{for } t < T'.
\end{align*}
Combining the loss related to all tasks, we have:
\begin{align}
  \label{eq:muon_noisy_loss_upper}
  \cL(\bW_t)&= \sum_{i=1}^{K}p_i\cL_i(\bW_t)\notag\\
  &\leq K \exp\Big(-\eta t + \frac{3M^2\eta t}{K}\Big) + \frac{K+2M^2-3M}{K}\,\alpha \eta t, \quad \text{for } t < T'.
\end{align}

Using the absolute value bound of $\tilde{s}_{t,i}$ that $\lvert \tilde{s}_{t,i} \rvert \leq \frac{2M}{C}$, we can estimate the lower bound of $\cL_1(\bW_t)$ as:
\begin{align*}
  \cL_1(\bW_t) &\geq -\log \whp_{t,1|1} + \frac{\alpha(C-1)}{K}\eta t +\frac{\alpha C}{K}\sum_{i=2}^{M}\Big(\eta t - \frac{\eta t}{C}-\frac{2M\eta t}{C}\Big)\\
&= -\log \whp_{t,1|1} + \frac{K-2M^2+M}{K}\,\alpha \eta t, \quad \text{for } t < T'.
\end{align*}
We can also estimate the upper bound of $\whp_{t,1|1}$ as:
\begin{align*}
  \whp_{t,1|1} &\leq \frac{1}{1 + (C-1)\exp(-\eta t) + C\cdot \sum_{i=2}^{M} \exp\big(-\eta t + \frac{\eta t}{C} - \frac{2M\eta t}{C}\big)}\\
&\leq \frac{1}{1 + (K-1) \exp\big(-\eta t - \frac{2M^2\eta t}{K}\big)}, \quad \text{for all } t < T'.
\end{align*}
Thus, the lower bound of $\cL_1(\bW_t)$ is:
\begin{align*}
  \cL_1(\bW_t) &\geq \log\left(1 + (K-1) \exp\Big(-\eta t - \frac{2M^2\eta t}{K}\Big)\right) + \frac{K-2M^2+M}{K}\,\alpha \eta t,\\
  &\gtrsim (K-1) \exp\Big(-\eta t - \frac{2M^2\eta t}{K}\Big) + \frac{K-2M^2+M}{K}\,\alpha \eta t, \quad \text{for } t < T'.
\end{align*}
Combining the loss of all sub-tasks, we have:
\begin{align*}
  \cL(\bW_t)\gtrsim& (K-1) \exp\Big(-\eta t - \frac{2M^2\eta t}{K}\Big) + \frac{K-2M^2+M}{K}\,\alpha \eta t, \quad \text{for } t < T'.  
\end{align*}

\textbf{Analysis of $T'$ and $T''$:}
As $\lvert \tilde{s}_{t,i} \rvert \leq \frac{2M}{C}$, we have:
\begin{equation*}
  \exp\Big(-\eta t + \frac{\eta t}{C} - \frac{2M\eta t}{C}\Big) \leq \frac{s^-_{t,i|1}}{s^+_{t,1|1}}\leq \exp\Big(-\eta t + \frac{\eta t}{C} + \frac{2M\eta t}{C}\Big).
\end{equation*} 
The actual iteration for each task can be bounded by two auxiliary iterations. 
Denote the $s^-_{t,i|1}$ for these two auxiliary iterations as $\big(s^-_{t,i|1}\big)_1$ and $\big(s^-_{t,i|1}\big)_2$,
the number of steps that satisfies $\whp_{t,1|1}-\whp_{t,2|1}\leq 1-\alpha$ as $T_1$ and $T_2$ respectively.
The first auxiliary iteration is defined as:
\begin{equation*}
  \big(s^-_{t,i|1}\big)_1 = s^+_{t,1|1} \cdot \exp\Big(-\eta t + \frac{\eta t}{C} - \frac{2M\eta t}{C}\Big),
\end{equation*}
and the second auxiliary iteration is defined as:
\begin{equation*}
  \big(s^-_{t,i|1}\big)_2 = s^+_{t,1|1} \cdot \exp\Big(-\eta t + \frac{\eta t}{C} + \frac{2M\eta t}{C}\Big).
\end{equation*}
For $t<T_1$, predicted probability of task in all groups satisfies $\whp_{t,iC|iC}-\whp_{t,iC-1|iC}<1-\alpha$, which means phase 1 hasn't ended yet.
For $t>T_2$, $\whp_{t,iC|iC}-\whp_{t,iC-1|iC} > 1-\alpha$ has happened for all $i \in [M]$, at some time step before $t$, which means phase 2 must have ended. 
The first auxiliary iteration gives a lower bound of $T'$, and the second auxiliary iteration gives an upper bound of $T''$.

For the first auxiliary iteration, we have:
\begin{align*}
  \whp_{t,1|1}-\whp_{t,2|1}&=\frac{s^+_{t,1|1}-s^-_{t,2|1}}{s^+_{t,1|1} + (C-1)s^-_{t,2|1}+C\cdot \sum_{i=2}^{M}\big(s^-_{t,i|1}\big)_1}\\
  &=\frac{1-e^{-\eta T_1}}{1+(C-1)e^{-\eta T_1}+ (K-C)\exp(-\eta T_1 +\frac{1-2M}{C}\eta T_1)}\\
  &\leq \frac{1-e^{-\eta T_1}}{1+(K-1)\exp\big(-\eta T_1 +\frac{1-2M}{C}\eta T_1\big)}.
\end{align*}
Solving the equation $\frac{1-e^{-\eta T'_1}}{1+(K-1)\exp\big(-\eta T'_1 +\frac{1-2M}{C}\eta T'_1\big)}=1-\alpha$, we have:
\begin{align*}
  1-e^{-\eta T'_1} &= 1-\alpha+(1-\alpha)(K-1)\exp\big(-\eta T'_1 +\frac{1-2M}{C}\eta T'_1\big).\\
  \alpha e^{\eta T'_1} -1 &= (1-\alpha)(K-1)\exp\big(\frac{1-2M}{C}\eta T'_1\big).\\
\end{align*}
When $\alpha e^{\eta T'_1}\gg 1$, the equality can be approximated as:
\begin{equation}
  \label{eq:T1_approx}
  \alpha e^{\eta T'_1} = (1-\alpha)(K-1)\exp\big(\frac{1-2M}{C}\eta T'_1\big).
\end{equation}
Taking logarithm on both sides, we have:
\begin{equation*}
  T'_1 = \frac{1}{\eta}\cdot \frac{1}{1+\frac{2M-1}{C}}\cdot \log\left(\frac{(1-\alpha)(K-1)}{\alpha}\right).
\end{equation*}
As $T_1 > T'_1$, we have:
\begin{equation*}
  T_1 \geq \frac{1}{\eta}\cdot \frac{1}{1+\frac{2M-1}{C}}\cdot \log\left(\frac{(1-\alpha)(K-1)}{\alpha}\right).
\end{equation*}

For the second auxiliary iteration, the analysis is similar. We have:
\begin{align*}
  \whp_{t,1|1}-\whp_{t,2|1}&=\frac{s^+_{t,1|1}-s^-_{t,2|1}}{s^+_{t,1|1} + (C-1)s^-_{t,2|1}+C\cdot \sum_{i=2}^{M}\big(s^-_{t,i|1}\big)_2}\\
  &=\frac{1-e^{-\eta T_2}}{1+(C-1)e^{-\eta T_2}+ (K-C)\exp\big(-\eta T_2 +\frac{1+2M}{C}\eta T_2\big)}\\
  &\geq \frac{1-e^{-\eta T_2}}{1+(K-1)\exp\big(-\eta T_2 +\frac{1+2M}{C}\eta T_2\big)}.
\end{align*}
Solving the equation $\frac{1-e^{-\eta T'_2}}{1+(K-1)\exp\big(-\eta T'_2 +\frac{1+2M}{C}\eta T'_2\big)}=1-\alpha$, we have:
\begin{align*}
  1-e^{-\eta T'_2} &= 1-\alpha+(1-\alpha)(K-1)\exp\Big(-\eta T'_2 +\frac{1+2M}{C}\eta T'_2\Big).\\
  \alpha e^{\eta T'_2} -1 &= (1-\alpha)(K-1)\exp\Big(\frac{1+2M}{C}\eta T'_2\Big).
\end{align*}
When $\alpha e^{\eta T'_2}\gg 1$, the equality can be approximated as:
\begin{equation}
  \label{eq:T2_approx}
  \alpha e^{\eta T'_2} = (1-\alpha)(K-1)\exp\Big(\frac{1+2M}{C}\eta T'_2\Big).
\end{equation}
Taking the logarithm on both sides, we have:
\begin{equation*}
  T'_2 = \frac{1}{\eta}\cdot \frac{1}{1-\frac{2M+1}{C}}\cdot \log\left(\frac{(1-\alpha)(K-1)}{\alpha}\right).
\end{equation*}
As $T_2 < T'_2$, we have:
\begin{equation*}
  T_2 \leq \frac{1}{\eta}\cdot \frac{1}{1-\frac{2M+1}{C}}\cdot \log\left(\frac{(1-\alpha) (K-1)}{\alpha}\right).
\end{equation*}

As $T_1 < T' \leq T'' < T_2$, combining the above results, we have:
\begin{equation}
  \label{eq:phase1_steps} 
  \frac{1}{\eta}\cdot \frac{1}{1+\frac{2M-1}{C}}\cdot \log\left(\frac{(1-\alpha)(K-1)}{\alpha}\right) \leq T' \leq T'' \leq \frac{1}{\eta}\cdot \frac{1}{1-\frac{2M+1}{C}}\cdot \log\left(\frac{(1-\alpha)(K-1)}{\alpha}\right).
\end{equation}
As $K \gg 1$, $C \gg 1$, and $M^2 \ll K$, we have:
\begin{equation}
  \label{eq:phase1_steps_simplified}
  T' = \Theta\left(\frac{1}{\eta}\log\left(\frac{(1-\alpha)K}{\alpha}\right)\right), \quad T'' = \Theta\left(\frac{1}{\eta}\log\left(\frac{(1-\alpha)K}{\alpha}\right)\right).
\end{equation}

\subsubsection{The Excess Risk}
\label{sec:muon_excess_risk}
\textbf{Excess Risk at the End of Phase 1:}
Recall the auxiliary iteration equation solved in the step-count analysis.
According to Equations~\eqref{eq:T1_approx} and~\eqref{eq:T2_approx}, as $\alpha e^{\eta T'} \gg 1$
holds, we have:
\begin{equation}
  \alpha e^{\eta T'} = (1-\alpha)(K-1)\exp\big(\zeta \eta T'\big), \quad\text{for some } \zeta =\cO\left(\frac{M}{C}\right).
\end{equation}
Equivalently, we have:
\begin{equation}
  \label{eq:phase1_Tprime_eq}
  (K-1)e^{-\eta T'} = \frac{\alpha}{1-\alpha}\exp\big(-\zeta \eta T'\big).
\end{equation}
Using Equation~\eqref{eq:muon_noisy_loss_upper}, we obtain an upper bound for the loss at the end of Phase 1. 
The following lemma is used to analyze this bound.

\begin{lemma}
Consider a function $f:\RR^+ \to \RR^+$, $f(x)=\log(1+a e^x)$, where $a$ is a positive constant. For any $x>0$, we have:
\begin{equation*}
\label{lem:log_bound}
  f(x) \leq \log(1+a) + x.
\end{equation*}
\end{lemma}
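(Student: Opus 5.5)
The plan is to bound the argument of the logarithm directly, using only that $a>0$ and $x>0$. Since $x>0$, we have $e^x>1$. Therefore
\[
1+a e^x \;\le\; e^x + a e^x \;=\; (1+a)e^x .
\]
Taking logarithms, which is monotone increasing, gives
\[
f(x)=\log(1+ae^x)\le \log\bigl((1+a)e^x\bigr)=\log(1+a)+x ,
\]
which is the claim.

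As a cross-check, one can also argue via the derivative. First, $f(0)=\log(1+a)$. Second, $f'(x)=\frac{ae^x}{1+ae^x}\in(0,1)$. Hence $f(x)-x$ is nonincreasing on $[0,\infty)$, so $f(x)-x\le f(0)=\log(1+a)$. Either route is a one-line argument, and I would present the first since it needs no calculus.

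There is no real obstacle here. The only point worth noting is how the lemma is used afterwards. It will be applied with $a$ of the form $(K-1)e^{-\eta T'}\approx \frac{\alpha}{1-\alpha}e^{-\zeta\eta T'}$ (from Equation~\eqref{eq:phase1_Tprime_eq}) and with $x=3M^2\eta T'/K$ or a similar small exponent. This yields $\log(1+a)+x=\cO(1)+\cO(M^2\log K/K)$, which is how the excess risk at the end of Phase 1 gets controlled. For the lemma itself, however, nothing beyond the inequality $1\le e^x$ for $x\ge 0$ is required.
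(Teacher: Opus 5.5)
Your proof is correct. Your main argument, the pointwise bound $1+ae^x\le(1+a)e^x$ from $e^x\ge 1$, differs slightly from the paper's. The paper's proof is exactly your ``cross-check'': it sets $g(x)=f(x)-x$, asserts that $g$ is decreasing, and concludes $g(x)\le g(0)=\log(1+a)$.

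The two arguments are equivalent one-liners; writing $f(x)-x=\log(a+e^{-x})$ makes both transparent. Your algebraic version needs no calculus. The paper states that $g$ is decreasing without justification, and your derivative computation $f'(x)=\frac{ae^x}{1+ae^x}\in(0,1)$ supplies precisely that omitted step. Your remark on how the lemma is used for the Phase~1 excess-risk bound, with $a=(K-1)e^{-\eta T'}$ and $x=3M^2\eta T'/K$, also matches the paper.
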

\begin{proof}
Define an auxiliary function $g:\RR^+ \to \RR^+$, $g(x)=f(x)-x$. The function $g(x)$ is monotonically decreasing with respect to $x$.
As $g(0)=\log(1+a)$, we have $g(x) \leq g(0)$ for any $x>0$. Thus, we have:
\begin{equation*}
  f(x) \leq \log(1+a) + x,\quad \text{for any } x>0.
\end{equation*}
\end{proof}
The upper bound of the loss (Equation~\eqref{eq:muon_noisy_loss_upper}) at $t=T'$ can be bounded with the above lemma as:
\begin{align}
  \cL(\bW_{T'})&\leq \log\left(1 + (K-1) \exp\Big(-\eta T' + \frac{3M^2\eta T'}{K}\Big)\right) + \frac{K+2M^2-3M}{K}\,\alpha \eta T'\label{eq:1554}\\
  &\leq \log(1+(K-1)e^{-\eta T'}) + \frac{3M^2\eta T'}{K} + \frac{K+2M^2-3M}{K}\,\alpha \eta T'\label{eq:1555}\\
  &= \log\left(1+\frac{\alpha}{1-\alpha}\exp\big(-\zeta \eta T'\big)\right) + \frac{3M^2\eta T'}{K} + \frac{K+2M^2-3M}{K}\,\alpha \eta T'\label{eq:1556}\\
  &\leq \log\left(1+\frac{\alpha}{1-\alpha}\right) + \alpha \eta T' + \frac{5M^2}{K}\,\eta T'\label{eq:1557}.
\end{align}
In Equation~\eqref{eq:1554} uses the upper bound of $\cL$ at $t=T'$. 
In Equation~\eqref{eq:1555}, we use the lemma proved above, with $a=(K-1)e^{-\eta T'}$ and $x=\frac{3M^2\eta T'}{K}$.
We substitute $(K-1)e^{-\eta T'}$ in Equation~\eqref{eq:1556} using Equation~\eqref{eq:phase1_Tprime_eq} and relax it to $\frac{\alpha}{1-\alpha}$.

According to Equation~\eqref{eq:phase1_steps}, we have:
\begin{align*}
  \eta T' &\leq \Big(1+\frac{2M}{C-2M}\Big)\cdot \log\left(\frac{(1-\alpha)K}{\alpha}\right)\\
&\eqsim \Big(1+\frac{2M^2}{K}\Big)\cdot \log\left(\frac{(1-\alpha)K}{\alpha}\right),\quad \text{as }C\gg M.
\end{align*}
Substituting the above inequality into the loss upper bound at $t=T'$, we have:
\begin{align*}
  \cL(\bW_{T'}) &\leq -\log(1-\alpha) + \alpha \log\left(\frac{(1-\alpha)K}{\alpha}\right) + \cO\left(\frac{M^2\log K}{K}\right)\\
  &\leq -(1-\alpha)\log(1-\alpha) +\alpha \log\left(\frac{K}{\alpha}\right)+\cO\left(\frac{M^2\log K}{K}\right).
\end{align*}
The optimal loss is $\cL^*=-\left(1-\alpha+\frac{\alpha}{K}\right)\log\left(1-\alpha+\frac{\alpha}{K}\right)-\frac{(K-1)\alpha}{K}\log\left(\frac{\alpha}{K}\right)$.
As $K \gg 1$, it can be approximated as:
\begin{align}
  \label{eq:optimal_loss_approx}
  \cL^* &\eqsim - (1-\alpha)\log(1-\alpha) -\frac{\alpha}{K}\log(1-\alpha) - \alpha \log\left(\frac{\alpha}{K}\right)+\frac{\alpha}{K}\log\left(\frac{\alpha}{K}\right)\notag\\
  & = -(1-\alpha)\log(1-\alpha) - \frac{\alpha}{K} \log\left(\frac{(1-\alpha)K}{\alpha}\right)-\alpha\log\left(\frac{\alpha}{K}\right).
\end{align}

Then the excess risk at the end of Phase 1 can be bounded as:
\begin{align*}
  \cL(\bW_{T'}) - \cL^* &\leq \frac{\alpha}{K}\log\left(\frac{(1-\alpha)K}{\alpha}\right) + \cO\left(\frac{M^2\log K}{K}\right).
\end{align*}

\textbf{Excess Risk during Phase 3:}
After time step $T''$, all blocks of the weight matrix $\widehat{\bW}_t$ have entered the oscillation phase.
In this subsection, we analyze the upper bound of the excess risk during this oscillation and express it as a function of the learning rate $\eta$.

Since the analysis of excess risk for different sub-tasks is similar, we focus on the first sub-task as an example.
The excess risk for the first sub-task can be bounded by assuming that $\whp_{t}$ has already reached the optimal value but moves one step further in the $t$-th iteration.

When $(\widehat{\bW}_t)_{1,1}$ increases during the $t$-th iteration, the scores related to the first task satisfy:
\begin{align*}
  s^+_{t,1|1} &\leq s^+_{t-1,1|1} \cdot \exp\bigg(\eta\Big(1 -\frac{1}{C} + \frac{M}{C}\Big)\bigg),\\
  s^-_{t,1|1} &= s^+_{t,1|1} \cdot \exp(-\eta),\\
  s^-_{t,i|1} &\geq s^+_{t,1|1} \cdot \exp\bigg(-\eta + \frac{\eta}{C} - \frac{2M\eta}{C}\bigg), \quad i \neq 1.
\end{align*} 
To simplify the notation, we define:
\begin{align*}
  \delta_1&=\eta\Big(1-\frac{1}{C}+\frac{M}{C}\Big),\\
  \delta_i&=\eta\Big(-\frac{1}{C}+\frac{M}{C}\Big),\quad i = 2,\dots, C,\\
  \delta_i&=\eta\Big(-\frac{M}{C}\Big),\quad i = C+1,\dots, K.
\end{align*}
Assume $\whp_{t-1,1|1}=1-\alpha+\frac{\alpha}{K}$ and $\whp_{t-1,i|1}=\frac{\alpha}{K}$ for all $i \neq 1$.
This case provides an upper bound for the excess risk.
The predicted probability after the update is:
\begin{align*}
  \whp_{t,i|1}&=\whp_{t-1,i|1}\cdot \frac{e^{\delta_i}}{S} = p_{i|1}\cdot \frac{e^{\delta_i}}{S},
\end{align*}
where $S = \sum_{i=1}^{K}p_{i|1}\cdot e^{\delta_i}$ is the normalization term.

The excess risk for the first sub-task at the $t$-th iteration can be bounded as:
\begin{align*}
  \cL_1(\bW_t)-\cL^*&\leq -\sum_{i=1}^{K} p_{i|1}\log \whp_{t,i|1} + \sum_{i=1}^{K} p_{i|1}\log p_{i|1}\\
  &=\sum_{i=1}^{K} p_{i|1}\log\left(\frac{p_{i|1}}{\whp_{t,i|1}}\right)\\
  &=\sum_{i=1}^{K} p_{i|1}\log\left(\frac{S}{e^{\delta_i}}\right)\\
  &=\log S - \sum_{i=1}^{K} p_{i|1}\delta_i.
\end{align*}

As $\delta_i \ll 1$ for all $i \in [K]$, $p_{i|1}e^{\delta_i}$ can be approximated as follows:
\begin{align*}
  p_{1|1}e^{\delta_1} &\eqsim \Big(1-\alpha+\frac{\alpha}{K}\Big)\cdot \bigg(1+\eta\Big(1+\frac{M-1}{C}\Big)+\frac{\eta^2}{2}\Big(1+\frac{M-1}{C}\Big)^2+\cO(\eta^3)\bigg),\\
  p_{i|1}e^{\delta_i} &\eqsim \frac{\alpha}{K}\cdot \bigg(1+\eta\Big(-\frac{1}{C}+\frac{M}{C}\Big)+\frac{\eta^2}{2}\Big(-\frac{1}{C}+\frac{M}{C}\Big)^2+\cO(\eta^3)\bigg), \quad i = 2,\dots, C,\\
  p_{i|1}e^{\delta_i} &\eqsim \frac{\alpha}{K}\cdot \bigg(1+\eta\Big(-\frac{M}{C}\Big)+\frac{\eta^2}{2}\Big(-\frac{M}{C}\Big)^2+\cO(\eta^3)\bigg), \quad i = C+1,\dots, K.
\end{align*}
Summing all terms, $S$ can be approximated as:
\begin{align*}
  S &\eqsim 1 + \sum_{i=1}^{K}p_{i|1}\delta_i+\frac{\eta^2}{2}\left((1-\alpha)\Big(1+\frac{M}{C}\Big)^2 + \frac{\alpha C}{K}\Big(\frac{M}{C}\Big)^2 + \frac{\alpha(K-C)}{K}\Big(-\frac{M}{C}\Big)^2\right) + \cO(\eta^3)\\
  &\eqsim 1 + \sum_{i=1}^{K}p_{i|1}\delta_i+\frac{\eta^2}{2}\left((1-\alpha)+\cO\Big(\frac{M}{C}\Big)\right) + \cO(\eta^3)\\
  &\eqsim 1 + \sum_{i=1}^{K}p_{i|1}\delta_i+\frac{\eta^2}{2}\left((1-\alpha)+\cO\Big(\frac{M^2}{K}\Big)\right) + \cO(\eta^3).
\end{align*}
Using the approximation $\log(1+x)\lesssim x$ and the assumption $M^2\ll K$, the excess risk at the $t$-th iteration is bounded by:
\begin{align*}
  \cL_1(\bW_t)-\cL^* &\leq \log S - \sum_{i=1}^{K} p_{i|1}\delta_i\\
  &\lesssim \frac{\eta^2}{2}\left((1-\alpha)+\cO\Big(\frac{M^2}{K}\Big)\right) + \cO(\eta^3)\\
  &\lesssim \eta^2.
\end{align*}
Combining all sub-tasks, the excess risk at the $t$-th iteration can be bounded by:
\begin{equation*}
  \cL(\bW_t)-\cL^* \lesssim \eta^2.
\end{equation*}

When $(\widehat{\bW}_t)_{1,1}$ decreases at the $t$-th iteration, we can analyze similarly and obtain the same excess risk bound.

\textbf{Excess Risk during Phase 2:}
As discussed before, Phase 2 is a transitional phase between Phase 1 and Phase 3,
where some tasks have entered the oscillation phase while others are still in the descent phase.
The excess risk during this phase can be analyzed by combining the results from Phase 1 and Phase 3.
Define a set $O_t$ that contains the sub-tasks which have entered the oscillation phase at time step $t$.
For sub-tasks in $O_t$, the excess risk is bounded by $\cO(\eta^2)$ according to the analysis of Phase 3,
while for sub-tasks not in $O_t$, the excess risk follows the bound derived in Phase 1.
Thus, the overall excess risk during Phase 2 can be expressed as:
\begin{align*}
  \cL(\bW_t) - \cL^* = \sum_{i\in O_t} p_i \eta^2 +\sum_{i \in O_t^c} p_i \cdot \left( K \exp\Big(-\eta t + \frac{3M^2\eta t}{K}\Big) + \frac{K+2M^2-3M}{K}\,\alpha \eta t-\cL^*\right), T' \leq t < T''.
\end{align*}

\textbf{The Excess Risk of Muon in the Noisy Case:}\\
Combining the excess risk bounds from all three phases, the excess risk of the $j$-th sub-task can be summarized as:
\begin{align*}
  \cL_j(\bW_t) - \cL_j^* &\leq \begin{cases} K \exp\Big(-\eta t + \frac{3M^2\eta t}{K}\Big) + \frac{K+2M^2-3M}{K}\,\alpha \eta t-\cL_j^*, & t < T_j^*.\\
    \eta^2, & t \geq T_j^*,\\
    \end{cases}
\end{align*}
where $T_j^*$ is the time step when the $j$-th sub-task enters the oscillation phase. 
We have $T' \leq T_j^*\leq T''$, for all $j \in [K].$
As we mentioned in Equation~\ref{eq:phase1_steps_simplified}, $T' = \Theta\Big(\frac{1}{\eta}\log\big(\frac{(1-\alpha)K}{\alpha}\big)\Big)$ and $T'' = \Theta\Big(\frac{1}{\eta}\log\big(\frac{(1-\alpha)K}{\alpha}\big)\Big)$,
which means all sub-tasks enter the oscillation phase within a similar time frame.

Combining the excess risk bounds from all three phases, the excess risk of Muon in the noisy case can be summarized as:
\begin{align}
  \cL(\bW_t) - \cL^* &\leq \begin{cases} K \exp\Big(-\eta t + \frac{3M^2\eta t}{K}\Big) + \frac{K+2M^2-3M}{K}\,\alpha \eta t-\cL^*, & t < T',\\
    \sum_{i\in O_t} p_i \eta^2 +\sum_{i \in O_t^c} p_i \cdot \left( K \exp\Big(-\eta t + \frac{3M^2\eta t}{K}\Big) + \frac{K+2M^2-3M}{K}\,\alpha \eta t-\cL^*\right), & T' \leq t < T'',\\
    \eta^2, & t \geq T'',\label{eq:muon_noisy_excessive}\\
    \end{cases}
\end{align}
where $O_t$ is the set of tasks that have entered the oscillation phase at time step $t$.

\subsection{Comparison Between GD and Muon in the Noisy Case}
\label{appendix: comparison_label_noisy}
According to the discussion above, Muon achieves an excess
risk of $\cO\left(\frac{M^2\log K}{K}\right)$ at the end of Phase 1. With Equation~\eqref{eq:phase1_steps_simplified},
the number of steps required is 
\begin{equation*}
  T' = \Theta\left(\frac{1}{\eta}\log\left(\frac{(1-\alpha)K}{\alpha}\right)\right).
\end{equation*}

We analyze how long GD needs to achieve the same excess risk.
According to Equation~\eqref{eq:gd_noisy_final}, the excess risk of GD in the noisy case satisfies:
\begin{align*}
\cL(\bW_T)-\cL^*&\geq\left(1-\alpha+\frac{\alpha}{K}\right)\cdot\left(\alpha-\frac{\alpha}{K}\right)\frac{(\Delta^*)^2}{2}\sum_{i=1}^{K} p_i e^{-\eta p_i T}\\
&\geq \left(1-\alpha+\frac{\alpha}{K}\right)\cdot\left(\alpha-\frac{\alpha}{K}\right)\frac{(\Delta^*)^2}{2}\cdot e^{-\eta p_1 T}.
\end{align*}
To achieve an excess risk of $\cO\left(\frac{M^2\log K}{K}\right)$, $T$ must satisfy:
\begin{align*}
  T &\gtrsim \frac{1}{\eta p_1}\log \left(\frac{K\log K}{M^2}\right).
\end{align*}
As $p_1 \leq \frac{1}{C}$, we have:
\begin{align*}
  T &\gtrsim \frac{C}{\eta}\log \left(\frac{K\log K}{M^2}\right)\\
  &= \Omega\left(\frac{C}{\eta}\log \left(\frac{K}{M^2}\right)\right).
\end{align*}

Therefore, when the two optimizers use the same learning rate $\eta$,
to achieve an excess risk of $\cO\left(\frac{M^2\log K}{K}\right)$, 
the number of steps required by GD is at least $C$ times that of Muon, when $M$ is treated as a relatively fixed constant.
When the group size $C$ is large, the gap between the two optimizers extends further.

\section{Scaling Law Analysis}
\label{sec:scaling_law}
In this section, we introduce additional assumptions regarding the data distribution to analyze the scaling laws of Muon and Gradient Descent (GD) in the presence of noise. 
We consider an asymptotic regime where the number of groups $M$ goes to infinity, and the total number of classes $K$ approaches infinity consequently. 

Let $\tp_i$ denote the aggregate probability of the $i$-th group, defined as:
\begin{equation*}
    \tp_i = \sum_{j=(i-1)C+1}^{iC} p_j.
\end{equation*}
The full assumptions on the data distribution are as follows:
\begin{enumerate}
    \item \textbf{Normalization:} The total probability is normalized.
    \begin{equation*}\sum_{i=1}^{M} \tp_i = 1.\end{equation*}
    \item \textbf{Power-law Decay:} The group probabilities follow a power-law distribution.
    \begin{equation*}\tp_i \propto i^{-\beta}, \quad \beta > 1.\end{equation*}
    \item \textbf{Intra-group Uniformity:} Within each group, the classes are uniformly distributed:
    \begin{equation*}
        p_j = \frac{\tp_i}{C}, \quad j = (i-1)C+1, \dots, iC,\quad \forall i \in [M].
    \end{equation*}
\end{enumerate}
The restriction of $T$ is :
\begin{equation*}
    cM^{\beta} \leq T \leq  M^{\beta},\text{ where } c \text{ is a constant satisfying } 0 < c < 1.
\end{equation*}

We also introduce an additional assumption on the relationship between $M$ and $K$:
\begin{equation*}
  (\log K)^{\frac{1}{\beta}}\leq M\ll K^{\frac{1}{2}}.
\end{equation*}

\subsection{Analysis of GD}
The auxiliary functions used in the analysis of GD are presented in the following lemmas.
\begin{lemma}
Let $\beta>1$ be a constant. For integer $M>0$, we have:
\label{lem:gd_integration}
\begin{equation*}
\sum_{i=1}^{M}i^{-\beta}\cdot e^{-i^{-\beta}t}\gtrsim \frac{1}{t^{1-\frac{1}{\beta}}},\quad \text{for } 1 \ll t \leq M^{\beta}.
\end{equation*}
\end{lemma}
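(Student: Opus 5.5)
We need a lower bound of order $t^{-(1-1/\beta)}$ for $\sum_{i=1}^{M} i^{-\beta} e^{-i^{-\beta}t}$ when $1\ll t\le M^\beta$. The plan is to keep only the indices where the exponential factor is bounded below by a constant, i.e.\ where $i^{-\beta}t\le 1$, equivalently $i\ge t^{1/\beta}$. Set $i_0:=\lceil t^{1/\beta}\rceil$. Since $t\le M^\beta$, we have $t^{1/\beta}\le M$. To keep a block of length comparable to $t^{1/\beta}$ inside $[1,M]$, we use the window $i\in[\lceil t^{1/\beta}/2\rceil,\,\lfloor t^{1/\beta}\rfloor]$ rather than $[t^{1/\beta},2t^{1/\beta}]$, which might exceed $M$.

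On this window, $t^{1/\beta}/2\le i\le t^{1/\beta}$, so $i^{-\beta}t\in[1,2^\beta]$. Hence $e^{-i^{-\beta}t}\ge e^{-2^\beta}$, a constant depending only on $\beta$. Also $i^{-\beta}\ge t^{-1}$. Because $t\gg1$, the number of integers in the window is at least $t^{1/\beta}/2-1\ge t^{1/\beta}/4$. All terms are nonnegative, so dropping the other indices gives
\[
\sum_{i=1}^{M} i^{-\beta}e^{-i^{-\beta}t}\;\ge\;\frac{t^{1/\beta}}{4}\cdot\frac{1}{t}\cdot e^{-2^\beta}\;\gtrsim\;\frac{1}{t^{1-1/\beta}},
\]
with the hidden constant depending only on $\beta$.

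The only delicate step is making sure the chosen window actually lies in $[1,M]$ and contains $\Theta(t^{1/\beta})$ integers. The upper end is fixed by $t\le M^\beta$. The count is fixed by $t\gg1$, specifically $t^{1/\beta}\ge 8$, which absorbs the rounding. An alternative is the integral comparison $\int z^{-\beta}e^{-z^{-\beta}t}\,\dd z$ with the substitution $u=z^{-\beta}t$, as sketched in the main text. The direct window argument avoids monotonicity issues, since the summand is not monotone in $i$, so we would use it.
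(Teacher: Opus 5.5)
Your proof is correct, and it takes a different route from the paper.

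\textbf{The paper's route.} It shows that $f(x)=x^{-\beta}e^{-x^{-\beta}t}$ is unimodal with peak at $x^*=t^{1/\beta}$ and compares the sum with $\int f\,\dd x$ on each monotone piece. It evaluates $\int_0^\infty f\,\dd x=\frac{\Gamma(1-1/\beta)}{\beta}\,t^{-(1-1/\beta)}$ by the substitution $u=x^{-\beta}t$. It then subtracts the peak value $f(x^*)=e^{-1}/t$ and the tail $\int_{M+1}^\infty f\,\dd x$, which it bounds by $(M+1)^{1-\beta}/(\beta-1)$.

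\textbf{Your route.} You keep only the $\Theta(t^{1/\beta})$ indices in $[t^{1/\beta}/2,\,t^{1/\beta}]$, where each term is $\asymp 1/t$. Your checks all go through: the window lies in $[1,M]$ because $t\le M^\beta$, $i^{-\beta}t\in[1,2^\beta]$ on it, and it contains at least $t^{1/\beta}/4$ integers once $t^{1/\beta}\ge 4$.

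\textbf{What each buys.} The integral route gives the exact leading constant $\Gamma(1-1/\beta)/\beta$. Running the same comparison in the other direction gives a matching upper bound, so the sum is $\eqsim t^{-(1-1/\beta)}$. Your route gives only the lower bound, with the crude constant $e^{-2^\beta}/4$. However, it is uniform over the whole range $1\ll t\le M^\beta$, because it never uses indices beyond $t^{1/\beta}\le M$.

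This uniformity matters. The paper's tail bound is negligible only for $t\ll M^\beta$. At $t\asymp M^\beta$, which is exactly the regime $cM^\beta\le T\le M^\beta$ used for the GD scaling law, $(M+1)^{1-\beta}/(\beta-1)$ is of the same order as the main term. For large $M$ at $t=M^\beta$ it even exceeds $\frac{\Gamma(1-1/\beta)}{\beta}t^{-(1-1/\beta)}$, since $\Gamma(1+s)<1$ for $s=1-1/\beta\in(0,1)$. The integral argument can be repaired by writing the tail exactly as $\frac{1}{\beta}t^{-(1-1/\beta)}\int_0^{t(M+1)^{-\beta}}u^{-1/\beta}e^{-u}\,\dd u$ and using $t(M+1)^{-\beta}<1$. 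Your window argument needs no such repair.

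A cosmetic point: your opening sentence talks about keeping indices with $i^{-\beta}t\le 1$, whereas the window you actually use has $i^{-\beta}t\ge 1$. Also, $i_0$ is never used.
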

\begin{proof}[Proof of Lemma~\ref{lem:gd_integration}]
  Define a function $f:\RR \to \RR$, $f(x)=x^{-\beta} e^{-x^{-\beta}t}$.\\
  Calculate the derivative of $f(x)$:
  \begin{align*}
    f'(x) &= -\beta x^{-\beta-1} e^{-x^{-\beta}t} + x^{-\beta} \cdot (-\beta x^{-\beta-1} t) e^{-x^{-\beta}t}\\
    &= -\beta x^{-\beta-1} e^{-x^{-\beta}t}(1 - x^{-\beta} t).
  \end{align*}
  Let $x^*$ be the maximizer of the objective function. We have $x^*=t^{\frac{1}{\beta}}$. As $t \leq M^{\beta}$, we have $x^* \leq M$.
  Thus, $f(x)$ is increasing in the interval $[0,x^*]$, and decreasing in the interval $[x^*,M]$.
  Then we have: 
  \begin{align*}
    \sum_{i=1}^{M} i^{-\beta}\cdot e^{-i^{-\beta}t} &\geq \int_{0}^{\left \lfloor x^* \right \rfloor} x^{-\beta}\cdot e^{-x^{-\beta}t} \dd x + \int_{\left \lfloor x^* \right \rfloor+1}^{M+1} x^{-\beta}\cdot e^{-x^{-\beta}t} \dd x\\
    &\geq \underbrace{\int_{0}^{\infty} x^{-\beta}\cdot e^{-x^{-\beta}t} \dd x}_{I_{\infty}} - f(x^*)-\underbrace{\int_{M+1}^{\infty} x^{-\beta}\cdot e^{-x^{-\beta}t} \dd x}_{I_{\text{tail}}}.
  \end{align*}
  The analysis of $I_{\infty}$ is:
  \begin{align*}
    I_{\infty} &= \int_{0}^{\infty} x^{-\beta}\cdot e^{-x^{-\beta}t} \dd x\\
    &= \frac{1}{\beta t}\int_{0}^{\infty} x^{\beta+1}\cdot x^{-\beta}\cdot e^{-u} \dd u, \quad u=x^{-\beta}t\\
    &= \frac{1}{\beta t}\int_{0}^{\infty} u^{-\frac{\beta+1}{\beta}} e^{-u} \dd u = \frac{\Gamma(1-\frac{1}{\beta})}{\beta} \cdot \frac{1}{t^{1-\frac{1}{\beta}}},
  \end{align*}
  where $\Gamma(\cdot)$ is the Gamma function.
  
  The analysis of $f(x^*)$ is:
  \begin{align*}
    f(x^*) = (x^*)^{-\beta} \cdot e^{-(x^*)^{-\beta}t} = t^{-1} \cdot e^{-1} \ll \frac{1}{t^{1-\frac{1}{\beta}}}, \quad \text{as } t \gg 1.
  \end{align*}

  The analysis of $I_{\text{tail}}$ is:
  \begin{align*}
    I_{\text{tail}} = \int_{M+1}^{\infty} x^{-\beta}\cdot e^{-x^{-\beta}t} \dd x \leq \int_{M+1}^{\infty} x^{-\beta} \dd x = \frac{(M+1)^{1-\beta}}{\beta-1} \ll \frac{1}{t^{1-\frac{1}{\beta}}}, \quad \text{as } t \leq M^{\beta}.
  \end{align*}

  Combining the above results, we have:
  \begin{equation*}
    \sum_{i=1}^{M} i^{-\beta}\cdot e^{-i^{-\beta}t} \gtrsim \frac{1}{t^{1-\frac{1}{\beta}}}, \quad \text{for } 1 \ll t \leq M^{\beta}.
  \end{equation*}
\end{proof}
\begin{lemma}
Consider a function $g:\RR \to \RR$, 
$g(x)=Ax+\log(1+A(e^{-x}-1))$, where $0 < A < 1.$ 
Denote $\frac{g(x)}{x^2}$ as $f(x)$. 
$B$ is a nonnegative constant and $B>\log\left(\frac{A}{1-A}\right)$.
The following inequality holds:
  \label{lem:gd_sc_es}
  \begin{equation*}
  g(x) \geq \min\{\lim_{x \to 0}f(x),f(B)\}\cdot x^2,\quad \text{for }0 < x \leq B.
  \end{equation*}
\end{lemma}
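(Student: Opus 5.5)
The plan is to show that $f(x)=g(x)/x^2$ has no interior local minimum on $(0,B]$. More precisely, I will show that $f$ is either nonincreasing on $(0,\infty)$, or increases and then decreases. Once this quasi-concavity is in hand, $\inf_{(0,B]} f$ is attained as $x\to0^+$ or at $x=B$. Hence $g(x)=f(x)x^2\ge \min\{\lim_{x\to0}f(x),f(B)\}\,x^2$ for every $x\in(0,B]$, which is the claim.

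\textbf{Derivatives of $g$.} First I compute the derivatives of $g$. Write $q(x)=e^{-x}/(1-A+Ae^{-x})=1/((1-A)e^{x}+A)$. Then $g'(x)=A(1-q(x))$ and $g''(x)=A(1-A)e^{x}/((1-A)e^{x}+A)^2>0$, so $g(0)=g'(0)=0$ and $g$ is convex. L'H\^opital gives $\lim_{x\to0}f(x)=g''(0)/2=A(1-A)/2$. Substituting $y=x-\log\frac{A}{1-A}$ shows that $g''$ is a positive multiple of the logistic density $\sigma(y)(1-\sigma(y))$. Consequently $g'''>0$ for $x<x_0:=\log\frac{A}{1-A}$ and $g'''<0$ for $x>x_0$; this is the only place the specific form of $g$ enters.

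\textbf{Sign structure of $h$.} Next I study $h(x):=x g'(x)-2g(x)$, for which $f'(x)=h(x)/x^3$, so that $\operatorname{sgn} f'=\operatorname{sgn} h$. We have $h(0)=0$, $h'(x)=xg''(x)-g'(x)$ with $h'(0)=0$, and $h''(x)=xg'''(x)$. I split into two cases.
\begin{itemize}
\item If $x_0\le 0$, then $h''<0$ on $(0,\infty)$. Hence $h'<0$ and $h<0$ there, so $f$ is decreasing and the minimum over $(0,B]$ is $f(B)$.
\item If $x_0>0$, then $h$ is convex on $(0,x_0)$ and concave on $(x_0,\infty)$. Starting from $h(0)=h'(0)=0$, $h$ is positive on $(0,x_0]$. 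On the concave part, $h'$ is decreasing. Once $h'$ turns negative it stays negative, so $h$ can cross zero at most once, and only from $+$ to $-$.
\end{itemize}
Thus in both cases $f$ is increasing and then (possibly) decreasing, so it has no interior minimum on $(0,B]$.

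\textbf{Main obstacle and the role of $B$.} The delicate step is the concave-region argument: ruling out that $h$ returns to positive values after becoming negative. I will handle this via monotonicity of $h'$ on $(x_0,\infty)$, noting that a zero of $h$ with $h$ becoming negative forces $h'<0$ at that point. The hypothesis $B>\log\frac{A}{1-A}=x_0$ is not needed for quasi-concavity itself. It only guarantees that the interval reaches the concave regime where the endpoint $f(B)$ may be the smaller value. If the argument works as planned, the bound holds regardless.
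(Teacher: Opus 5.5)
Your proposal is correct and matches the paper's proof essentially step for step: the same auxiliary function $h(x)=xg'(x)-2g(x)$ with $h''=xg'''$, the same sign change of $g'''$ at $\log\frac{A}{1-A}$ (the paper's split into $A\le\frac12$ versus $A>\frac12$), and the same conclusion that $f$ is increasing and then decreasing on $(0,B]$. Your logistic-density description of $g''$ and your allowing the crossing points of $h'$ and $h$ to be infinite (instead of the paper's use of $\lim_{x\to\infty}h'=-A$ and $\lim_{x\to\infty}h=-\infty$) are minor streamlinings, and you are right that the hypothesis $B>\log\frac{A}{1-A}$ is not actually needed.
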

\begin{proof}[Proof of Lemma~\ref{lem:gd_sc_es}]
First, we prove the limits $\lim_{x \to 0}f(x)$ exists. 
Using L'Hôpital's rule, we have:
\begin{align*}
  \lim_{x \to 0} f(x) &= \lim_{x \to 0} \frac{g(x)}{x^2}= \frac{g'(x)}{2x}= \lim_{x \to 0} \frac{A - \frac{A e^{-x}}{1 + A(e^{-x}-1)}}{2x}\\
  &= \lim_{x \to 0} \frac{A(1-A)(1-e^{-x})}{2x(1 + A(e^{-x}-1))} = \frac{A(1-A)}{2}.
\end{align*}

Then, we analyze the monotonicity of $f(x)$. Calculate the derivative of $f(x)$:
\begin{align*}
  f'(x) &= \frac{g'(x)x^2 - 2x g(x)}{x^4}.
\end{align*}
Define an auxiliary function $h:(0,B] \to \RR$, $h(x)=xg'(x)-2g(x)$.
The sign of $h(x)$ is identical with that of $f'(x)$ when $x > 0$. 
To analyze $h(x)$, we examine the successive derivatives of it:
\begin{align*}
  h'(x) &= g'(x) + xg''(x) - 2g'(x) = xg''(x) - g'(x),\\
  h''(x) &= g''(x) + xg'''(x) - g''(x) = xg'''(x).
\end{align*}
Let $g'(x)$, $g''(x)$, and $g'''(x)$ denote the first, second, and third-order derivatives of $g$, respectively.
We calculate them as follows:
\begin{align*}
  g'(x)&=A - \frac{A e^{-x}}{1 + A(e^{-x}-1)}=\frac{A(1-A)(1-e^{-x})}{1 + A(e^{-x}-1)}.\\
  g''(x)&=\frac{Ae^{-x}(1+A(e^{-x}-1))-Ae^{-x}\cdot Ae^{-x}}{(1 + A(e^{-x}-1))^2}=\frac{A(1-A)e^{-x}}{(1 + A(e^{-x}-1))^2}.\\
  g'''(x)&=\frac{-e^{-x}(1+A(e^{-x}-1))^2+e^{-x}\cdot 2Ae^{-x}(1+A(e^{-x}-1))}{(1 + A(e^{-x}-1))^4}\cdot A(1-A)\\
  &=\frac{e^{-x}(1+A(e^{-x}-1))}{(1 + A(e^{-x}-1))^4}\cdot (Ae^{-x}-1+A)\cdot A(1-A).
\end{align*}
Define another auxiliary function $q:(0,B] \to \RR$, $q(x)=Ae^{-x}-1+A$.
As $e^{-x}-1 > -1$ and $0 < A < 1$, we have $1+A(e^{-x}-1)>0$ for any $x \in (0,B]$.
Thus, the sign of $g'''(x)$ is identical with that of $q(x)$.

The analysis of $q(x)$ is as follows.
\begin{itemize}
\item If $A \leq \frac{1}{2}$, we have:
\begin{equation*}
  q(x)< A -1 + A = 2A -1 \leq 0,\quad \text{for any } x \in (0,B].
\end{equation*}
\item If $A > \frac{1}{2}$, let $x_1=-\log\left(\frac{1-A}{A}\right)$ be the maximizer of $g(x)$. 
We have $q(x)$ is positive in the interval $(0,x_1)$ and negative in the interval $(x_1,B]$.
\end{itemize}
Before discussing $h(x)$ by cases, we first evaluate some preliminary limits.
The limits when $x$ approaches $0$ are as follows:
\begin{align*}
  \lim_{x \to 0} g''(x) =A(1-A),\quad
  \lim_{x \to 0} g'(x) =0,\quad
  \lim_{x \to 0} g(x) =0.\\
  \begin{cases*}
  \lim_{x \to 0} h'(x) =\lim_{x \to 0} xg''(x)-g'(x)=0,\\[1.5ex]
  \lim_{x \to 0} h(x) =\lim_{x \to 0} xg'(x)-2g(x)=0.
  \end{cases*}
\end{align*}
The limits when $x$ approaches $\infty$ are as follows:
\begin{align*}
  \lim_{x \to \infty} xg''(x) &=\lim_{x \to \infty} A(1-A)\frac{xe^{-x}}{(1 + A(e^{-x}-1))^2}=0,\\
  \lim_{x \to \infty} g'(x) &= \lim_{x \to \infty}(A-\frac{A e^{-x}}{1 + A(e^{-x}-1)})=A,\\
  \lim_{x \to \infty} h'(x) &= \lim_{x \to \infty} xg''(x)-g'(x)=-A,\\
  \lim_{x \to \infty} h(x) &= \lim_{x \to \infty} (xg'(x)-2g(x))\\
  &=\lim_{x \to \infty}(Ax - 2(Ax + \log(1 + A(e^{-x}-1))))\\
  &=\lim_{x\to \infty} -Ax =-\infty.
\end{align*}

Now we discuss $h(x)$ by cases:
\begin{itemize}
  \item \textbf{Case 1: }If $A \leq \frac{1}{2}$, $g'''(x) < 0$ for any $x \in (0,B]$.
   
  Thus, $h''(x) < 0$ for any $x \in (0,B]$. $h'(x)$ is monotonically decreasing in $(0,B]$.
  As $\lim_{x \to 0} h'(x) = 0$, we have $h'(x) < 0$ for any $x \in (0,B]$.
  Thus, $h(x)$ is monotonically decreasing in $(0,B]$. As $\lim_{x \to 0} h(x) = 0$, we have $h(x) < 0$ for any $x \in (0,B]$.
  
  Therefore, $f'(x)<0$ for any $x \in (0,B]$. $f(x)$ is monotonically decreasing in $(0,B]$.
  We have:
  \begin{align*}
    f(x)&\geq f(B) ,\quad \text{for } 0 < x \leq B.\\
    g(x)&\geq f(B)\cdot x^2 ,\quad \text{for } 0 < x \leq B.
  \end{align*}

  \item \textbf{Case 2: }If $A > \frac{1}{2}$, let $x_1=\log\left(\frac{A}{1-A}\right)$. $g'''(x)>0$ for any $x \in (0,x_1)$ and $g'''(x)<0$ for any $x \in (x_1,B]$.
  
  Thus, $h''(x)>0$ for any $x \in (0,x_1)$ and $h''(x)<0$ for any $x \in (x_1,B]$. $h'(x)$ is monotonically increasing in $(0,x_1)$ and decreasing in $(x_1,B]$.
  
  As $\lim_{x \to 0} h'(x) = 0$, $h'(x_1)>0$. As $\lim_{x \to \infty} h'(x) = -A < 0$, there exists a point $x_2 \in (x_1,\infty]$ such that $h'(x_2)=0$.
  Thus, $h(x)$ is monotonically increasing in $(0,x_2)$ and decreasing in $(x_2,B]$.
  
  As $\lim_{x \to 0} h(x) = 0$, we have $h(x) > 0$ for any $x \in (0,x_2)$.
  As $\lim_{x \to \infty} h(x) = -\infty$, there exists a point $x_3 \in (x_2,\infty)$ such that $h(x_3)=0$.
  Thus, $h(x)>0$ for any $x \in (0,x_3)$ and $h(x) < 0$ for any $x \in (x_3,\infty)$.

  Therefore, $f'(x)>0$ for any $x \in (0,x_3)$ and $f'(x)<0$ for any $x \in (x_3,\infty)$.
  $f(x)$ is monotonically increasing in $(0,x_3)$ and decreasing in $(x_3,\infty)$.
  We have:
  \begin{align*}
    f(x)&\geq \min\{\lim_{x \to 0}f(x),f(B)\},\quad \text{for } 0 < x \leq B.\\
    g(x)&\geq \min\{\lim_{x \to 0}f(x),f(B)\}\cdot x^2 ,\quad \text{for } 0 < x \leq B.
  \end{align*}
\end{itemize}
  Combining the above two cases, we have:
  \begin{equation*}
    g(x) \geq \min\{\lim_{x \to 0}f(x),f(B)\}\cdot x^2,\quad \text{for }0 < x \leq B.
  \end{equation*}
\end{proof}

Total proof of Theorem~\ref{thm: scaling_law_for_GD} is as follows:
\begin{proof}[Proof of Theorem~\ref{thm: scaling_law_for_GD}]
  Back to equation~\eqref{eq:gd_loss_gap}, in the previous estimation, we use $\eta p_i T$ goes to infinity for all $i \in [K]$.
  When the group number $M$ is a finite constant and $T$ goes to infinity, the claim holds.

  However, when the group number $M$ goes to infinity and speed $T$ goes to infinity need to satisfies
  the restriction $T \leq M^{\beta}$, the claim $\eta p_i T$ goes to infinity does not hold for all $i \in [M]$.
  In particular, when $i=M$, $\eta p_M T$ does not go to infinity but a constant multiple of $\eta$.
  Therefore, a new way to estimate the excess risk is needed under the new restriction.

  Equivalent to the analysis about GD in noisy case, a large learning rate $\eta$ is preferred to achieve a smaller excess risk.
  When analyzing the lower bound of excess risk, we need to consider a relatively large learning rate $\eta$.
  However, according to Proposition~\ref{prop: stability}, under the stability condition, $\eta$ cannot be chosen arbitrarily large, that $\eta p_1$ needs to be upper bounded by a constant.
  To simplify the analysis, we set $\eta p_1 = 1$.

  Thus, the following equation holds for all pairs:
  $$\eta p_i = j^{-\beta}, \quad i =(j-1)C+1, (j-1)C+2,\cdots ,jC,\quad \forall j \in [M].$$
    
  We can estimate equation~\eqref{eq:gd_loss_gap} with Lemma~\ref{lem:gd_sc_es}, substituting $A=1-\alpha+\frac{\alpha}{K}$ and $x=e^{-\frac{1}{2}\eta p_i T}\Delta^*$.
  As $c M^\beta \leq T\leq M^\beta$, we have $c \leq \eta p_M T\leq 1$. Thus $e^{-\frac{1}{2}\eta p_i T}$ satisfies $0 < e^{-\frac{1}{2}\eta p_i T}\leq e^{-\frac{c}{2}}$ for all $i \in [M]$.
  Denote $f(x)$ as in Lemma~\ref{lem:gd_sc_es}. To simplify the notation, we denote $e^{-\frac{c}{2}}$ as $c'$
  \begin{align*}
    f(0)&=\frac{(1-\alpha+\frac{\alpha}{K})(\alpha-\frac{\alpha}{K})}{2},\\
    f(c'\Delta^*) &=\frac{(1-\alpha+\frac{\alpha}{K})c'\Delta^*+\log(1+(1-\alpha+\frac{\alpha}{K})(e^{-c'\Delta^*}-1))}{(c'\Delta^*)^2}\\
    &\eqsim \frac{(1-\alpha+\frac{\alpha}{K})c'\Delta^*}{(c'\Delta^*)^2} =\frac{(1-\alpha+\frac{\alpha}{K})}{c'\Delta^*}.
  \end{align*}
  As $\Delta^*\eqsim \log(K)$ and $K$ goes to infinity, we have $f(0) > f(c'\Delta^*)$.
  
  Using Lemma~\ref{lem:gd_sc_es}, we have:
  \begin{align*}
    \cL_i(\bW_T)-\cL_i^* &\geq \frac{(1-\alpha+\frac{\alpha}{K})}{c'\Delta^*}\cdot \left(e^{-\frac{1}{2}\eta p_i T}\Delta^*\right)^2\\
    &= \Big(1-\alpha+\frac{\alpha}{K}\Big)\frac{\Delta^*}{c'} e^{-\eta p_i T}.
  \end{align*}
 Combining the excess risk related to all tasks, we have:
  \begin{align}
    \label{eq:gd_sc_gap}
    \cL(\bW_T)-\cL^* &\geq \Big(1-\alpha+\frac{\alpha}{K}\Big)\frac{\Delta^*}{c'}\sum_{i=1}^{K} p_i e^{-\eta p_i T}.
  \end{align}

  Then the loss excess risk can be estimated as:
  \begin{align*}
    \cL(\bW_T)-\cL^* &\geq \left(1-\alpha+\frac{\alpha}{K}\right)\frac{\Delta^*}{c'}\sum_{j=1}^{M} \sum_{i=(j-1)C+1}^{jC} \frac{\tp_j}{C} e^{-j^{-\beta} T}\\
    &= \left(1-\alpha+\frac{\alpha}{K}\right)\frac{\Delta^*}{c'}\sum_{j=1}^{M} \tp_j e^{-j^{-\beta} T}\\
    &= \left(1-\alpha+\frac{\alpha}{K}\right)\frac{\Delta^*}{c'}\cdot \frac{1}{\sum_{i=1}^{M} i^{-\beta}} \sum_{j=1}^{M} j^{-\beta} e^{-j^{-\beta} T}.
  \end{align*}

  Using Lemma~\ref{lem:gd_integration}, we have:
  \begin{align*}
    \cL(\bW_T)-\cL^* &\gtrsim \left(1-\alpha+\frac{\alpha}{K}\right)\frac{\Delta^*}{c'} \cdot \frac{1}{\sum_{i=1}^{M} i^{-\beta}} \cdot \frac{1}{T^{1-\frac{1}{\beta}}}.
  \end{align*}
  As $$\sum_{i=1}^{M} i^{-\beta} < 1 + \int_{1}^{\infty} x^{-\beta} \dd x <1+\frac{1}{\beta-1},$$
  we have:
  \begin{align*}
    \cL(\bW_T)-\cL^* &\gtrsim \left(1-\alpha+\frac{\alpha}{K}\right)\frac{\Delta^*}{c'(1+\frac{1}{\beta-1})}\cdot \frac{1}{T^{1-\frac{1}{\beta}}}\\
    &\eqsim \log K \cdot \frac{1}{T^{1-\frac{1}{\beta}}}.
  \end{align*}
  The last inequality uses the fact that $\Delta^*=\log\left(K\cdot \frac{1-\alpha+\frac{\alpha}{K}}{\alpha}\right)$.
\end{proof}

\subsection{Analysis of Muon}
\label{sec:muon_sc}
\begin{proof}[Proof of Theorem~\ref{thm: scaling_law_for_Muon}]
  According to Equation~\eqref{eq:muon_noisy_excessive}, the excess risk of Muon after $T$ iterations can be bounded.
  For a fixed $T$, when $\eta$ is not sufficiently large, leading to $T < T'$, the trajectory is still in Phase 1
  at the $T$-th iteration. In this case, increase $\eta$ will decrease the excess risk.

  When $\eta$ is large enough, leading the trajectory to enter Phase 2 at the $T$-th iteration, the excess risk is bounded by $\cO(\eta^2)$.
  In this case, decrease $\eta$ will decrease the excess risk.
  
  Thus, there is a trade-off in choosing $\eta$ to minimize the excess risk after $T$ iterations.
  The $\eta$ can be chosen as:
  \begin{equation*}
    \eta = \frac{1}{(1-\frac{2M+1}{C})T}\log\left(\frac{(1-\alpha)K}{\alpha}\right).
  \end{equation*} 
  According to Equation~\eqref{eq:phase1_steps}, the above choice of $\eta$ ensures $T\geq T''$. With 
  Equation~\eqref{eq:muon_noisy_excessive}, the excess risk after $T$ iterations can be bounded as:
  \begin{align}
    \cL(\bW_T)-\cL^* \lesssim \eta^2 \lesssim \frac{1}{T^2}\left(\log\left(\frac{(1-\alpha)K}{\alpha}\right)\right)^2 \lesssim \frac{(\log K)^2}{T^2}.\label{eq:muon_sc_gap}
  \end{align}
\end{proof}
\subsection{Comparison Between Excess Risk of GD and Muon}
Equation~\eqref{eq:gd_sc_gap} and Equation~\eqref{eq:muon_sc_gap} provide the excess risk bounds for GD and Muon, respectively.
We have:
\begin{align*}
  \begin{cases*}
  \cL^\text{GD}(\bW_T) - \cL^* \gtrsim \frac{\log K}{T^{1-\frac{1}{\beta}}},\\[1.5ex]
  \cL^\text{Muon}(\bW_T) - \cL^* \lesssim \frac{(\log K)^2}{T^2}.
  \end{cases*}
\end{align*}
Regardless of the differences in logarithmic terms, the polynomial decay rates of excess risk for GD and Muon are $T^{-(1-\frac{1}{\beta})}$ and $T^{-2}$, respectively.
As $T \geq cM^{\beta} \gg \log K$, we have:
\begin{equation*}
  \cL^\text{GD}(\bW_T) - \cL^* \gg \cL^\text{Muon}(\bW_T) - \cL^*.
\end{equation*}
Muon provides a speedup in the polynomial decay rate from $1-\frac{1}{\beta}$ to $2$ compared to GD.

\section{Task-Representation Aligned SignGD (TRA-SignGD)}
\label{appendix: TRA-SignGD}
To highlight the performance discrepancies among these optimizers under the specific structural 
conditions discussed in this paper, we propose an idealized optimizer called Task-Representation Aligned SignGD (TRA-SignGD). 

\subsection{Definition}
The Task-Representation Aligned SignGD (TRA-SignGD) is defined under the idealized assumption that the embedding matrices $\bE$ and $\widetilde{\bE}$ are known. 
It can be viewed as a variant of SignGD, where the sign operation is performed in the task-representation space.
The update rule is:
    \begin{equation*}
        \bW_{t+1} = \bW_t - \eta\, \widetilde{\bE} \, \sgn\Big(\widetilde{\bE}^{\top} \nabla_{\bW_t} \mathcal{L}(\bW_t) \bE\Big) \bE^{\top}.
    \end{equation*}
Denote $\bG_t = \widetilde{\bE}^{\top} \nabla_{\bW_t} \mathcal{L}(\bW_t) \bE$ as the gradient in the task-representation space, and $\widehat{\bW}_t = \widetilde{\bE}^{\top} \bW_t \bE$ as the weight matrix in the task-representation space.
The update rule of TRA-SignGD can be rewritten as:
    \begin{equation}
    \label{eq:tra_update}
        \widehat{\bW}_{t+1} = \widehat{\bW}_t - \eta \sgn(\bG_t).
    \end{equation}

\subsection{Analysis in Noiseless Case}
Assume the weight matrix is initialized as $\mathbf{W}_0 = \mathbf{0}$ and optimized via TRA-SignGD, consider the update dynamics of the weight matrix $\widehat{\bW}_t$ in optimization.
According to the update rule of TRA-SignGD~\eqref{eq:tra_update} and the gradient of loss~\eqref{eq:grad}, the update of the weight matrix $\widehat{\bW}_t$ is:
\begin{equation*}
  \widehat{\bW}_{t+1}=\widehat{\bW}_t - \eta \sgn(\bG_t) = \widehat{\bW}_t + \eta\, (2\bI_K-\bJ_{K}).
\end{equation*}
In a classification task with softmax, the update matrix $(2\bI_K-\bJ_{K})$ is equivalent to $2\bI_K$. 

As $\eta=1$, we have:
\begin{equation*}
  \widehat{\bW}_{t+1}=\widehat{\bW}_t + 2\bI_K-\bJ_{K}
\end{equation*}
The loss at time step $T$ can be expressed as:
\begin{align*}
  \mathcal{L}(\bW_T) &= \sum_{i=1}^{K} p_i\mathcal{L}_i(\bW_T)\\
  &= -\sum_{i=1}^{K} p_i \log \left(\frac{\exp((\widehat{\bW}_T)_{i,i})}{\exp((\widehat{\bW}_T)_{i,i})+\sum_{l\neq i} \exp((\widehat{\bW}_T)_{l,i})}\right)\\
  &= -\sum_{i=1}^{K} p_i \log \left(\frac{e^T}{e^T+(K-1)e^{-T}}\right)\\
  &= \sum_{i=1}^{K} p_i \log\bigg(1+(K-1)e^{-2T}\bigg)\\
  &\eqsim K\cdot e^{-2T}, \quad \text{as } T \to \infty, K\text{ is large enough}.
\end{align*}

Back to the update of Muon (Proposition~\ref{prop: approx_identity}), the update dynamics of Muon are essentially equivalent
to those of TRA-SignGD, with the only discrepancy being a factor of $2$ in the effective learning rate.
Despite having no prior knowledge of the embedding structure of queries and answers, Muon achieves a convergence rate comparable to the idealized TRA-SignGD, 
which is a significant acceleration in loss decay compared to GD.

\subsection{Analysis in Noisy Case}
\begin{theorem}
\label{thm: tra_upper_bound}
Under TRA-SignGD, the excess risk for the $j$-th knowledge learning sub-task satisfies:
    \begin{equation}
    \cL_j^{\text{TRA-SignGD}}(t) - \cL_j^* \lesssim
    \begin{cases}
        K e^{-2\eta t} + 2\alpha\eta t-\cL_j^*, &t\leq T_j^*;
        \\
        \eta^2, &t>T_j^*, \nonumber
    \end{cases}
    \end{equation}
where $T_j^* = \Theta\big(\tfrac{\log K}{\eta}\big)$ is the time step when the $j$-th subtask enters the oscillation phase
and $\cL_j^*$ is the optimal loss for the $j$-th knowledge learning sub-task, which is equivalent to $\cL^*$.
We have $T_j$ is the same for all $j \in [K]$.

The total excess risk satisfies:
    \begin{equation}
    \cL^{\text{TRA-SignGD}}(t) - \cL^* \lesssim
    \begin{cases}
        K e^{-2\eta t} + 2\alpha\eta t-\cL^*, &t\leq T';
        \\
        \eta^2, &t>T', \nonumber
    \end{cases}
    \end{equation}
where $T' = \Theta\big(\tfrac{\log K}{\eta}\big)$ is the time step when all tasks enter the oscillation phase, which is the same as $T_j^*$ for all $j \in [K]$.
\end{theorem}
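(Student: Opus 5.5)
The plan is to show that TRA-SignGD in the noisy case reduces, column by column, to an exact scalar recursion on a single logit gap. This is simpler than the Muon analysis, since there are no $\cO(M/C)$ block corrections. I will then reuse the three-phase bookkeeping from the Muon proof in Appendix~\ref{sec:proof_muon_noisy}.

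\textbf{Symmetry and exact scalar dynamics.} First, by induction from $\widehat{\bW}_0=\mathbf 0$, I will show that every column $j$ of $\widehat{\bW}_t$ has a common off-diagonal value, with $(\widehat{\bW}_t)_{i,j}$ equal for all $i\neq j$. The argument mirrors Lemma~\ref{lem:gd_struc_n}. If the off-diagonal entries of column $j$ agree, then all $\widehat p_{t,i|j}$ with $i\neq j$ agree. So $(\bG_t)_{i,j}=p_j(\widehat p_{t,i|j}-\tfrac{\alpha}{K})$ has a common sign over $i\neq j$, and the update preserves the structure. Moreover, the diagonal and off-diagonal evolutions are identical across $j$, because $p_j>0$ does not affect the sign. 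Hence $T_j^*$ is the same for every $j$, and $T'=T_j^*$.

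Let $\Delta_t=(\widehat{\bW}_t)_{j,j}-(\widehat{\bW}_t)_{i,j}$ for $i\ne j$. I will use the relations
\[
\widehat p_{t,j|j}=\frac{e^{\Delta_t}}{e^{\Delta_t}+K-1},\qquad \widehat p_{t,i|j}=\frac{1}{e^{\Delta_t}+K-1}.
\]
These give $\widehat p_{t,j|j}<p_{j|j}$ if and only if $\Delta_t<\Delta^*$, where $\Delta^*=\log\big(K(1-\alpha+\tfrac{\alpha}{K})/\alpha\big)$ is the fixed point found in the GD proof. The same condition is equivalent to $\widehat p_{t,i|j}>\alpha/K$. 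It follows that $\sgn(\bG_t)$ has $-1$ on the diagonal and $+1$ off the diagonal exactly when $\Delta_t<\Delta^*$, with the signs flipped when $\Delta_t>\Delta^*$. So the dynamics are exactly $\Delta_{t+1}=\Delta_t+2\eta\,\sgn(\Delta^*-\Delta_t)$.

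\textbf{Descent phase.} For $t\le T':=\lceil\Delta^*/(2\eta)\rceil=\Theta(\log K/\eta)$, we have $\Delta_t=2\eta t$. The sub-task loss is $\cL_j=-(1-\alpha+\tfrac{\alpha}{K})\Delta_t+\log(e^{\Delta_t}+K-1)$, as computed in the GD proof. Rewriting it as $\log(1+(K-1)e^{-\Delta_t})+\alpha(1-\tfrac1K)\Delta_t$ gives the bound $\lesssim Ke^{-2\eta t}+2\alpha\eta t$. Subtracting $\cL_j^*=\cL^*$ yields the first case.

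\textbf{Oscillation phase.} After $T'$, the iterate satisfies $|\Delta_t-\Delta^*|\le 2\eta$ for all $t$. This holds because once $\Delta_t$ crosses $\Delta^*$, each step moves it back toward $\Delta^*$ by $2\eta$. The excess loss is $g(\Delta^*-\Delta_t)$, with $g$ as in the GD proof; it is smooth and convex in $\Delta$ with minimum $0$ at $\Delta^*$. Its second derivative is $\widehat p_{j|j}(1-\widehat p_{j|j})\le 1/4$. A second-order Taylor expansion therefore gives an excess of at most $\tfrac18(2\eta)^2\lesssim\eta^2$. This is the same computation as the $\log S-\sum p_{i|1}\delta_i$ estimate in the Muon Phase~3 analysis, now without the $M/C$ terms.

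\textbf{Total loss and the main obstacle.} The total-loss statement follows by averaging with weights $p_j$, since all columns switch at the same $T'$; this is why no mixed phase appears. The main subtlety is the edge case $\Delta_t=\Delta^*$ exactly, where $\sgn$ vanishes. I would handle it by noting that the iterate then stays at the optimum, so the $\eta^2$ bound holds trivially. The comparison to Muon with rate $2\eta$ is immediate from Theorems~\ref{thm: muon_upper_bound}--\ref{thm: Muon_total_loss}. Setting $\eta=\Theta(\log K/T)$ then reproduces Theorem~\ref{thm: scaling_law_for_Muon}.
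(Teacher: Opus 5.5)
Your proposal is correct and follows the paper's route. You reduce each column by symmetry to the logit gap, so that $\Delta_t=2\eta t$ until the simultaneous flip at $T'\approx\frac{1}{2\eta}\log\big((1-\alpha)K/\alpha\big)$, and you use the descent bound $\log(1+(K-1)e^{-2\eta t})+2\alpha\eta t$ followed by a one-step ($2\eta$) displacement argument around the optimum for the oscillation phase. Your exact recursion $\Delta_{t+1}=\Delta_t+2\eta\,\sgn(\Delta^*-\Delta_t)$ and curvature bound $\widehat p_{j|j}(1-\widehat p_{j|j})\le 1/4$ rigorously justify the paper's ``worst-case'' step and its small-$\eta$ Taylor expansion of $\log S$, and they give the explicit constant $\eta^2/2$ without requiring $\eta\ll 1$.
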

\subsubsection{Dynamics}
  According to the gradient in noisy case (Equation~\eqref{eq:grad_noisy}), we have:
\begin{equation*}
  -\bG_t = (\bP'-\widehat{\bP'}_t).
\end{equation*}
Operate the sign operation on $\bG_t$, we have:
\begin{equation*}
  -\sgn(\bG_t)_{i,j}=\sgn(p_{i|j}-\whp_{t,i|j})=\begin{cases}
    1, & p_{i|j}>\whp_{t,i|j}\\
    -1, & p_{i|j}<\whp_{t,i|j}\\
    0, & p_{i|j}=\whp_{t,i|j}\end{cases}.
\end{equation*}
At the beginning of optimization, $\whp_{t,i|j}=\frac{1}{K}$ for all $i,j \in [K]$.
As $p_{i|i}=1-\alpha+\frac{\alpha}{K}>\frac{1}{K}$ and $p_{i|j}=\frac{\alpha}{K}<\frac{1}{K}$ for all $i \neq j$, 
the update of the weight matrix $\widehat{\bW}_t$ at the beginning of optimization is:
\begin{equation*}
  \widehat{\bW}_{t+1}=\widehat{\bW}_t + \eta\, (2\bI_K-\bJ_{K,K}).
\end{equation*}
Under a classification task with softmax, the update matrix $(2\bI_K-\bJ_{K,K})$ is equivalent to $2\bI_K$.
$\widehat{\bW}_{t+1}-\widehat{\bW}_t$ maintains $2\bI_K-\bJ_{K,K}$ until $\whp_{t,i|i}$ reaches $1-\alpha+\frac{\alpha}{K}$ for some $i$
or $\whp_{t,i|j}$ reaches $\frac{\alpha}{K}$ for some $i \neq j$.

Focusing on the first task, the predicted probabilities of all wrong classes decrease at the same speed, as
the update of $\widehat{\bW}_t$ is symmetric for all wrong classes.
Therefore, $\whp_{t,1|1} > 1-\alpha+\frac{\alpha}{K}$ and $\whp_{t,i|1} < \frac{\alpha}{K}$ for all $i \neq 1$ will be achieved simultaneously.
The update of the first column of $\widehat{\bW}_t$ transforms from $(1,-1,\dots,-1)^\top$ to $(-1,1,\dots,1)^\top$ at some step.
Notice that the update of all columns of $\widehat{\bW}_t$ is the same; thus, the flip times for all columns are identical.

Similar to the analysis in Muon, after the flip time, the predicted probability oscillates around the true distribution with a small margin.
The trajectory of TRA-SignGD can be divided into two phases. Denoting the flip time as $T'$, which is the end of Phase 1, we have:
\begin{itemize}
  \item For $t \leq T'$, the update of $\widehat{\bW}_t$ is:
  \begin{equation*}
    \widehat{\bW}_{t+1}=\widehat{\bW}_t + \eta\, (2\bI_K-\bJ_{K,K}).
  \end{equation*} 
  \item For $t > T'$, the update of $\widehat{\bW}_t$ oscillates between:
  \begin{equation*}
    \widehat{\bW}_{t+1}=\widehat{\bW}_t - \eta\, (2\bI_K-\bJ_{K,K})
  \end{equation*} 
  and
\begin{equation*}
  \widehat{\bW}_{t+1}=\widehat{\bW}_t + \eta\, (2\bI_K-\bJ_{K,K}).
\end{equation*} 
\end{itemize}

\subsubsection{Number of Steps to Enter Oscillation Phase}
As all sub-tasks enter the oscillation phase simultaneously, we analyze the number of steps to enter the oscillation phase $T'$ via the first sub-task.

At time step $t\leq T'$, the predicted probability of the first task for the correct class is:
\begin{align*}
  \whp_{1,1}&=\frac{\exp((\widehat{\bW}_t)_{1,1})}{\sum_{l=1}^{K} \exp((\widehat{\bW}_t)_{l,1})}\\
  &=\frac{\exp(\eta t)}{\exp(\eta t)+(K-1)\exp(-\eta t)}.
\end{align*}
The condition to enter the oscillation phase is $\whp_{1,1} \geq 1-\alpha+\frac{\alpha}{K}$.
Thus, we have:
\begin{align*} 
  \frac{\exp(\eta T')}{\exp(\eta T')+(K-1)\exp(-\eta T')} \geq 1-\alpha+\frac{\alpha}{K},
\end{align*}

The number of steps to enter the oscillation phase $T'$ satisfies:
\begin{equation*}
  T'\eqsim \frac{1}{2\eta}\log\left(\frac{(1-\alpha)K}{\alpha}\right).
\end{equation*}
Compared to Muon, we have $T_{\text{Muon}}$ is roughly a factor of two larger than $T_{\text{TRA-SignGD}}$.
\subsubsection{The Excess Risk}
\textbf{Risk at $T'$ Steps: }
The loss at time step $t\leq T'$ is:
\begin{align}
\label{eq:tra_loss_phase1}
  \cL(\bW_t) &= \sum_{i=1}^{K} p_i \cL_i(\bW_t)\notag\\
  &=-\Big(1-\alpha+\frac{\alpha}{K}\Big)\log\left(\frac{e^{\eta t}}{e^{\eta t}+(K-1)e^{-\eta t}}\right)-\frac{(K-1)\alpha}{K}\log\left(\frac{e^{-\eta t}}{e^{\eta t}+(K-1)e^{-\eta t}}\right)\notag\\
  &=-\log\left(\frac{e^{2\eta t}}{e^{2\eta t}+(K-1)}\right)+\frac{(K-1)\alpha}{K}\cdot 2\eta t\notag\\
  &\leq (K-1)e^{-2\eta t}+2\alpha \eta t.
\end{align}
At time step $t=T'$, the loss can be bounded as:
\begin{align*}
  \cL(\bW_{T'}) &\leq -\log(1-\alpha)+\frac{(K-1)\alpha}{K}\log\left(\frac{(1-\alpha)K}{\alpha}\right).
\end{align*}
According to Equation~\eqref{eq:optimal_loss_approx}, the optimal loss $\cL^*$ can be approximated as:
\begin{align*}
  \cL^* = -(1-\alpha)\log(1-\alpha) - \frac{\alpha}{K}\log\left(\frac{(1-\alpha)K}{\alpha}\right)-\alpha\log\left(\frac{\alpha}{K}\right).
\end{align*}
Thus, the excess risk at time step $T'$ can be bounded as:
\begin{align*}
  \cL(\bW_{T'}) - \cL^* =\cO\left(\frac{1}{K}\right).
\end{align*}

\textbf{Excess Risk After Entering Oscillation Phase: }
After entering the oscillation phase, 
the predicted probabilities oscillate around the true distribution.
Similar to the analysis of Muon in the noisy case,
we consider the worst-case where the predicted probability has achieved
the true distribution at some point, but the optimization still continues at time step $t$.

When the diagonal elements of $\widehat{\bW}_t$ still increase at time step $t$, the update of $\widehat{\bW}_t$ is:
\begin{align*}
  \widehat{\bW}_{t} &= \widehat{\bW}_{t-1} + \eta\, (2\bI_K - \bJ_{K,K}). 
\end{align*}
As we did to analyze the excess risk of Muon in the noisy case, we analyze the excess risk of the first task as an example.
The score related to the first task at time step $t+1$ is:
\begin{align*}
  s^+_{t,1|1} &= s^+_{t-1,1|1} \cdot e^{\eta},\\
  s^-_{t,i|1} &= s^-_{t-1,i|1} \cdot e^{-\eta},\quad \forall i \neq 1.
\end{align*}
The predicted probability at time step $t$ is:
\begin{align*}
  \whp_{t,1|1} &= \whp_{t-1,1|1}\cdot \frac{e^{\eta}}{S} =p_{1|1}\cdot \frac{e^{\eta}}{S},\\
  \whp_{t,i|1} &= \whp_{t-1,i|1}\cdot \frac{e^{-\eta}}{S} =p_{i|1}\frac{e^{-\eta}}{S},\quad \forall i \neq 1,
\end{align*}
where $S=\whp_{t-1,1|1} e^{\eta} + \sum_{l\neq 1} \whp_{t-1,l|1} e^{-\eta}$ is the normalization term.

The excess risk related to the first task is:
\begin{align*}
  \cL_1(\bW_t)-\cL^*&\leq -\sum_{i=1}^{K} p_{i|1}\log \whp_{t,i|1} + \sum_{i=1}^{K} p_{i|1}\log p_{i|1}\\
  &=\sum_{i=1}^{K} p_{i|1}\log\left(\frac{p_{i|1}}{\whp_{t,i|1}}\right)\\
  &=\log S -(1-\alpha+\frac{\alpha}{K})\eta + \frac{(K-1)\alpha}{K}\eta\\
  &=\log S - (1-2\alpha)\eta.
\end{align*}

For $\eta \ll 1$, $S$ can be approximated as:
\begin{align*}
  S &= \Big(1-\alpha+\frac{\alpha}{K}\Big) e^{\eta} + \frac{(K-1)\alpha}{K} e^{-\eta}\\
  &\eqsim  1 + \Big(1-\alpha+\frac{\alpha}{K}\Big)\eta + \Big(1-\alpha+\frac{\alpha}{K}\Big)\frac{\eta^2}{2} - \frac{(K-1)\alpha}{K}\eta +\frac{(K-1)\alpha}{K}\cdot \frac{\eta^2}{2}+\cO(\eta^3)\\
  &\eqsim 1 + (1-2\alpha)\eta + \frac{\eta^2}{2} + \cO(\eta^3).
\end{align*}

Thus, the excess risk related to the first task can be bounded as:
\begin{align*}
  \cL_1(\bW_t)-\cL^* &=\log S - (1-2\alpha)\eta \leq S-1 - (1-2\alpha)\eta\\
  &= \frac{\eta^2}{2} + \cO(\eta^3)\lesssim \eta^2.
\end{align*}
Combining the excess risk related to all tasks, we have:
\begin{align}
\label{eq:tra_oscillation_excessive}
  \cL(\bW_t)-\cL^* &\lesssim \eta^2.
\end{align}

When the diagonal elements of $\widehat{\bW}_t$ decrease at time step $t$, we can derive the same excess risk bound.

Compared to the excess risk analysis of Muon in the noisy case, the excess risk bound of TRA-SignGD after entering the oscillation phase has the same rate of $\cO(\eta^2)$.
If the constant is considered, the excess risk bound of TRA-SignGD is roughly four times as large as that of Muon.
\subsubsection{Scaling Law}
\begin{theorem}[Scaling law for TRA-SignGD] 
\label{thm: scaling_law_for_tra_signgd}
Let $\eta = \Theta\big(\tfrac{\log K}{T}\big)$, we have: $$\cL^{\text{TRA-SignGD}}(T) -\cL^* \lesssim \big( \frac{\log K}{T} \big)^2.$$
\end{theorem}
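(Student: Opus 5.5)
We follow the same route as the Muon scaling law (Theorem~\ref{thm: scaling_law_for_Muon}), using the two-phase description of TRA-SignGD just established. The proof has three steps: pin down the flip time, ensure $T$ lies past it, and then invoke the oscillation-phase bound.

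\textbf{Step 1 (flip time).} The dynamics analysis above shows that all $K$ sub-tasks flip at the same step $T'$. Before the flip, the update is exactly $\eta(2\bI_K-\bJ_K)$, so the flip condition $\whp_{t,1|1}\ge 1-\alpha+\frac{\alpha}{K}$ becomes an explicit inequality in $\eta t$. This gives
\[
T'\le \frac{1}{2\eta}\log\Big(\frac{(1-\alpha)K}{\alpha}\Big)+1 .
\]
Unlike Muon, there are no $\cO(M/C)$ perturbation terms here, so no sandwiching between auxiliary iterations is needed.

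\textbf{Step 2 (choice of $\eta$).} Choose
\[
\eta=\frac{1}{T}\log\Big(\frac{(1-\alpha)K}{\alpha}\Big)=\Theta\Big(\frac{\log K}{T}\Big),
\]
where $\alpha$ is a fixed constant. By Step 1, $T'\le T/2+1<T$ for large $T$, so at iteration $T$ every sub-task is in the oscillation phase. In the scaling regime, $T\ge cM^\beta\ge c\log K$, so $\eta\lesssim 1$. With a slightly smaller constant in the choice of $\eta$, we may take $\eta\ll 1$, which is the regime where the Taylor expansion of $S$ used in deriving \eqref{eq:tra_oscillation_excessive} is valid. Reducing the constant only multiplies $T'$ by a constant, so Step 1 must be checked with the adjusted constant; $T'<T$ still holds as long as the constant in $\eta$ exceeds $1/2$.

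\textbf{Step 3 (oscillation-phase bound).} Apply \eqref{eq:tra_oscillation_excessive}, which bounds the per-sub-task excess risk by $\eta^2/2+\cO(\eta^3)$ at every step $t>T'$. Averaging over sub-tasks with weights $p_j$ then gives
\[
\cL(\bW_T)-\cL^*\lesssim \eta^2\lesssim \Big(\frac{\log K}{T}\Big)^2 .
\]

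\textbf{Main obstacle.} The delicate point is justifying the worst-case step of the oscillation analysis over all times $t>T'$. We need that the predicted distribution always stays within one update of the target, so that each column moves back and forth across $p_{\cdot|j}$ with step $\pm\eta(2\bI-\bJ)$.

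To establish this, reduce each column to the one-dimensional logit gap $\Delta_t=(\widehat{\bW}_t)_{j,j}-(\widehat{\bW}_t)_{i,j}$, which is valid because all incorrect entries remain symmetric. Under sign updates, $\Delta_t$ changes by exactly $\pm2\eta$, and the sign of the change is determined by which side of $\Delta^*$ the gap $\Delta_t$ lies on. Hence, after first crossing, $\Delta_t$ stays in $[\Delta^*-2\eta,\Delta^*+2\eta]$. The excess loss is a smooth convex function of $\Delta$ with minimum at $\Delta^*$ and bounded curvature. Expanding to second order then gives excess risk $\cO(\eta^2)$ uniformly in $t$, which is the needed bound.
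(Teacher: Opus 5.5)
Your proposal is correct and follows the paper's argument. You pin down the common flip time $T'\approx\frac{1}{2\eta}\log\frac{(1-\alpha)K}{\alpha}$, choose $\eta=\Theta(\log K/T)$ so that $T\ge T'$, and apply the $\cO(\eta^2)$ oscillation-phase bound. Your constant $1$ (instead of the paper's $\frac12$) gives real margin past the flip, and your one-dimensional logit-gap reduction makes the paper's ``worst case one step past the optimum'' reasoning rigorous.

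One small remark: the step where you shrink the constant to force $\eta\ll1$ is unnecessary. Since $\partial_\Delta^2\cL_j=\whp_{j|j}(1-\whp_{j|j})\le\frac14$ for every $\Delta$, your reduction already gives excess risk at most $\frac18(2\eta)^2=\eta^2/2$ for any $\eta$. A constant rescaling could not achieve $\eta\ll1$ anyway when $T\eqsim\log K$.
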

\begin{proof}[Proof of Theorem~\ref{thm: scaling_law_for_tra_signgd}]
Under the same assumption mentioned in Section~\ref{sec:scaling_law},
we analyze the scaling behavior of TRA-SignGD in a large-
scale training setup.
Combining the excess risk analyzed above (Equation~\eqref{eq:tra_loss_phase1} and Equation~\eqref{eq:tra_oscillation_excessive} ), we have:
\begin{equation*}
  \cL(\bW_T)-\cL^* \leq \begin{cases}(K-1)e^{-2\eta T}+2\alpha \eta T-\cL^*,& T < T';\\
  \eta^2, & T \geq T'.\end{cases}
\end{equation*}
As we analyzed for Muon, there is a trade-off in choosing $\eta$ to minimize the excess risk after $T$ iterations.
The $\eta$ can be chosen as:
\begin{equation*}
  \eta = \frac{1}{2T}\log\left(\frac{(1-\alpha)K}{\alpha}\right).
\end{equation*}
According to the analysis of $T'$, the above choice of $\eta$ ensures $T\geq T'$.
The excess risk after $T$ iterations can be bounded as:
\begin{align*}
  \cL(\bW_T)-\cL^* &\lesssim \eta^2 \lesssim \frac{1}{T^2}\left(\log\left(\frac{(1-\alpha)K}{\alpha}\right)\right)^2 \lesssim \frac{(\log K)^2}{T^2}.
\end{align*}
\end{proof}
\subsection{Comparison Between Muon and TRA-SignGD}
Compared to TRA-SignGD, the update dynamics of Muon in the noisy case are approximately equivalent, with two primary differences:
\begin{itemize}
  \item The effective learning rate of Muon is roughly half that of TRA-SignGD. Consequently, the 
  number of steps to enter the oscillation phase ($T'$) of Muon is roughly twice that of TRA-SignGD,
  the excess risk after entering the oscillation phase of Muon is roughly one-fourth that of TRA-SignGD.
  \item In the noisy case, all tasks enter the oscillation phase at the same time when optimized via TRA-SignGD. When optimized via Muon, 
  different tasks may enter the oscillation phase at different times, but this temporal discrepancy is negligible.
\end{itemize}

However, TRA-SignGD is an idealized optimizer that assumes prior knowledge of the embedding matrices $\bE$ and $\widetilde{\bE}$, which is unavailable in practical scenarios. 
In contrast, Muon does not require any prior knowledge of the embedding structure, yet its performance is comparable to that of TRA-SignGD.

\section{Experiment Details}
\subsection{Numerical Simulation}
\label{appendix: numerical}
The results are shown in Figures~\ref{fig: linear_loss} and \ref{fig: linear_gap}.

\paragraph{Experiment Setup. }We generate two orthogonal matrices independently to represent $\bE$ and $\widetilde{\bE}$. 
We set the number of pairs $K=100$, partitioned into $M=10$ groups of size $C=10$,
with the group probability distribution $\tp=\{0.15, 0.1, 0.1, \dots, 0.1, 0.05\}$.
The noise level is set as $\alpha=0.1$. 
For all experiments, the weight matrix is initialized as $\bW_0=\mathbf{0}$ and the optimization is conducted for $T=50$ iterations.

\paragraph{Optimizers and Hyperparameters. }We evaluate the optimization performance of GD, Muon, TRA-SignGD, SignGD and AdamW ($\beta_1=0.9$, $\beta_2=0.999$, weight decay = $0.01$).
To ensure a fair comparison, all optimizers share an identical learning rate of $\eta=0.75$ when comparing the loss.
When it comes to Delta(the maximal probability gap, $\Delta_t = \max_{j\in[K]} \widehat{p}_{t,j\mid j} - \min_{j\in[K]} \widehat{p}_{t,j\mid j}$), 
GD employs a larger learning rate of $25\eta(18.75)$ to better visualize the optimization progress within the $50$-step budget,
while other optimizers maintain the learning rate of $\eta=0.75$.
Furthermore, we also include a case that applies the Muon optimizer with a doubled learning rate ($2\eta$, 1.5) to 
provide a direct comparison between Muon and TRA-SignGD.

\subsection{Synthetic Imbalanced Classification}
\label{appendix: synthetic}
The results are shown in Figures~\ref{fig: mnist_loss} and \ref{fig: mnist_gap}.

\paragraph{Dataset. }We construct an imbalanced dataset based on the MNIST. 
Initially, Class 0 is excluded from both the training and the test dataset. 
We partition the remaining classes into three groups and remove parts of its samples 
from the training set to simulate an imbalanced probability distribution.
\begin{itemize}
\setlength{\itemsep}{0pt}
\setlength{\parsep}{0pt}
\setlength{\parskip}{0pt}
\item The Many-shot group (Classes 1-3): retain all original samples in the training set (about 6k samples/class).
\item The Medium-shot group (Classes 4-6): retain 50\% of the original samples (about 3k samples/class).
\item The Few-shot group (Classes 7-9): retain 25\% of the original samples (about 1.5k samples/class).
\end{itemize}
The test set remains at its original size (excluding Class 0) to ensure an unbiased evaluation across all categories. 
\begin{figure*}[t]
\vskip 0.2in
\centering
\begin{subfigure}[t]{0.24\textwidth}
  \centering
  \includegraphics[width=\linewidth]{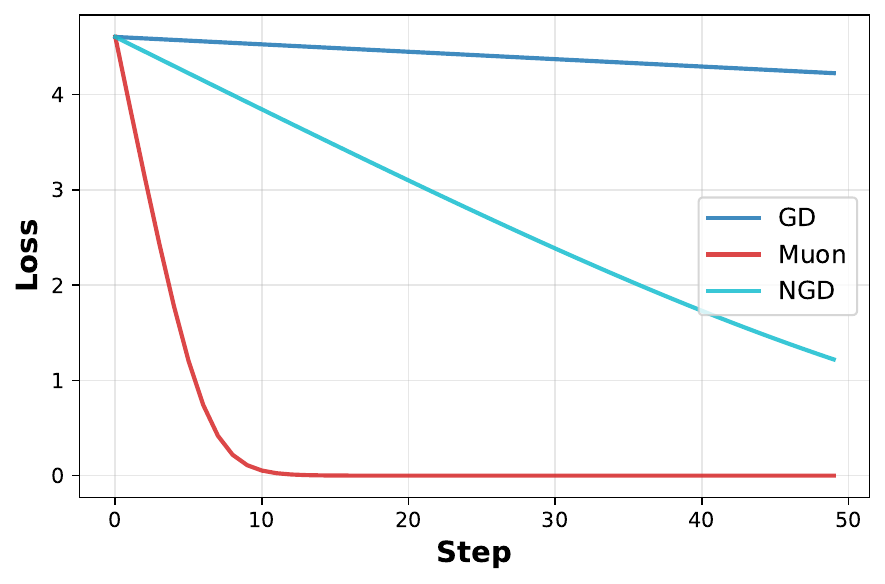}
  \caption{Loss dynamics}
\end{subfigure}
\hfill
\begin{subfigure}[t]{0.24\textwidth}
  \centering
  \includegraphics[width=\linewidth]{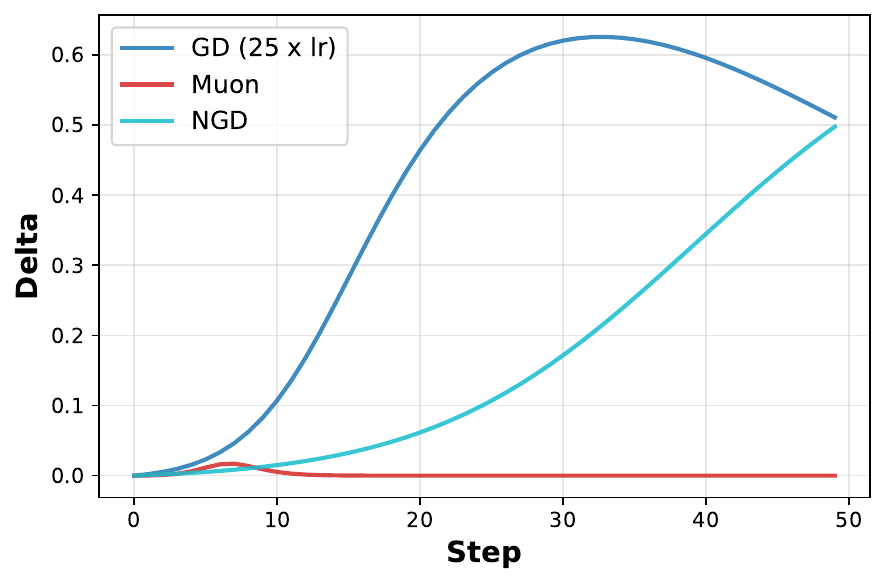}
  \caption{Disparity in predictions}
\end{subfigure}
\hfill
\begin{subfigure}[t]{0.24\textwidth}
  \centering
  \includegraphics[width=\linewidth]{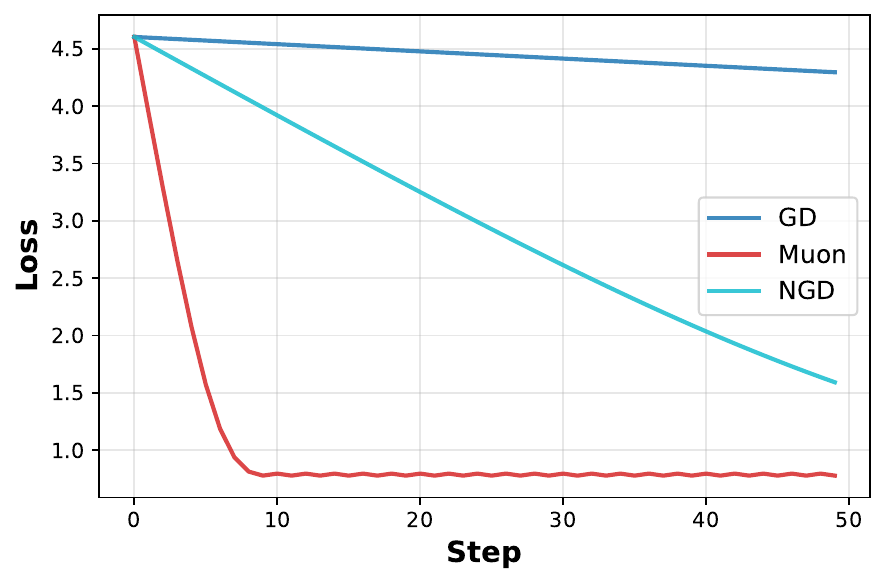}
  \caption{Loss dynamics}
\end{subfigure}
\hfill
\begin{subfigure}[t]{0.24\textwidth}
  \centering
  \includegraphics[width=\linewidth]{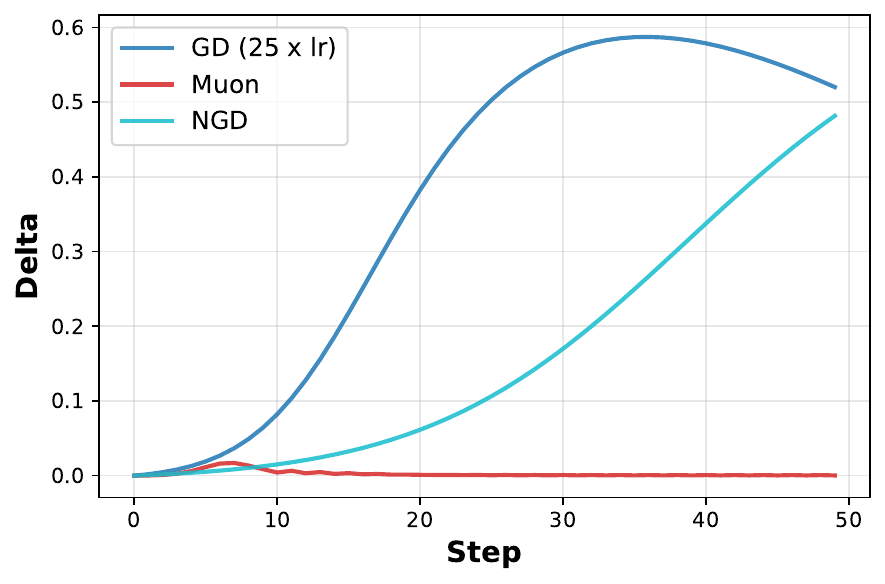}
  \caption{Disparity in predictions}
\end{subfigure}
\caption{(a)(b): The Noiseless case; (c)(d): The Noisy case.}
\label{fig:ngd_results}
\vskip -0.2in
\end{figure*}
\paragraph{Optimizers and Hyperparameters. }We compare the optimization performance of SGD, momentum-free Muon, full Muon ($\beta=0.9$), AdamW ($\beta_1=0.9$, $\beta_2=0.999$, weight decay = $0.01$).
Both optimization processes are conducted for $20$ epochs with a learning rate of $0.005$. All optimizers share an identical random initialization and a batch size of $128$.

\paragraph{Architecture. }We employ a two-layer MLP with $128$ hidden units and ReLU activation, optimized 
via Cross-Entropy Loss.

\paragraph{Evaluation. }We monitor the training loss per epoch.
To evaluate the imbalance of optimization, we define the maximum group accuracy gap as 
the difference between the maximum and minimum group average accuracies.
Specifically, we first compute the average accuracy across all classes within each group.
Then, we calculate the gap between the highest and lowest average accuracies among the three groups.

\subsection{Numerical Simulations with Normalized GD}
\label{app:ngd}
We additionally compare Muon with normalized GD (NGD), whose update is
$\bW_{t+1}=\bW_t-\eta\frac{\nabla \cL(\bW_t)}{\|\nabla \cL(\bW_t)\|_F}.$
\paragraph{Setting. }We use the same associative-memory setup as in Appendix~\ref{appendix: numerical}.
We consider both the noiseless case, with $\alpha=0$, and the noisy case, with $\alpha=0.1$.
For each setting, we compare the loss dynamics and the maximal probability gap defined at Section~\ref{sec: numerical}. All optimizers share the same learning rate of $\eta=0.75$ when comparing the loss, while GD employs a larger learning rate of $25\eta(18.75)$ when comparing the maximal probability gap.
\paragraph{Results. }The results are shown in Figures~\ref{fig:ngd_results}.
NGD improves over GD in optimization speed, but it shows a pronounced increase in the maximal probability gap ($\Delta_t$) in both settings, compared to Muon. 
In contrast, Muon decreases the loss more rapidly while keeping $\Delta_t$ close to zero, indicating that Muon's advantage is not solely due to update normalization.

\subsection{Random Initialization}
\label{app: random_init}
\paragraph{Setting.}
The updates in the task-representation space can be written as
$$\widehat{\bW}_{t+1} = \widehat{\bW}_t - \eta \bG_{\text{optimizer},t}$$,
where $\bG_{\text{optimizer},t}$ denotes the optimizer-induced update matrix in the task-representation space at step $t$.
We visualize $\bG_{\text{optimizer},t}$ across different optimizers, initializations, and time steps to provide insights into the optimization dynamics.
We use the same associative-memory setup as in Appendix~\ref{appendix: numerical}, with $K=100$, $M=10$, $C=10$, and $\alpha=0.1$, and visualize the update matrices at eight time steps during optimization.
We consider two random initializations with different scales, together with zero initialization as a baseline.
For the random initializations, we first sample each entry of $\bW_0$ from a Gaussian distribution with mean $0$ and standard deviation $1$.
For the small random initialization, we normalize $\|\bW_0\|_F$ to $1$; for the large random initialization, we multiply the normalized matrix by $1000$.
We use same learning rates for all optimizers.
\paragraph{Results.}
The loss comparison under different initializations is shown in Figure~\ref{fig:random_init_loss} and the visualization of update matrices is shown in Figure~\ref{fig:massive_visualization}.
With small random init, Muon remains close to the zero-init case; with larger random init, it is still largely diagonal and balanced but noisier. GD consistently shows clear frequency-dependent disparities.
\begin{figure}[t]
     \centering
     \begin{subfigure}[b]{0.48\textwidth}
         \centering
         \includegraphics[width=\textwidth]{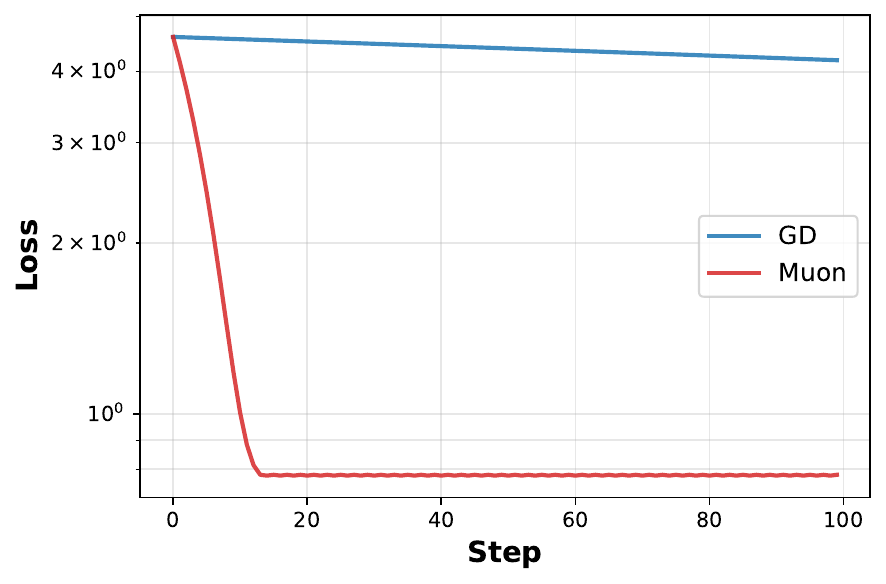}
         \caption{Loss dynamics with small random initialization}
     \end{subfigure}
     \hfill
     \begin{subfigure}[b]{0.48\textwidth}
         \centering
         \includegraphics[width=\textwidth]{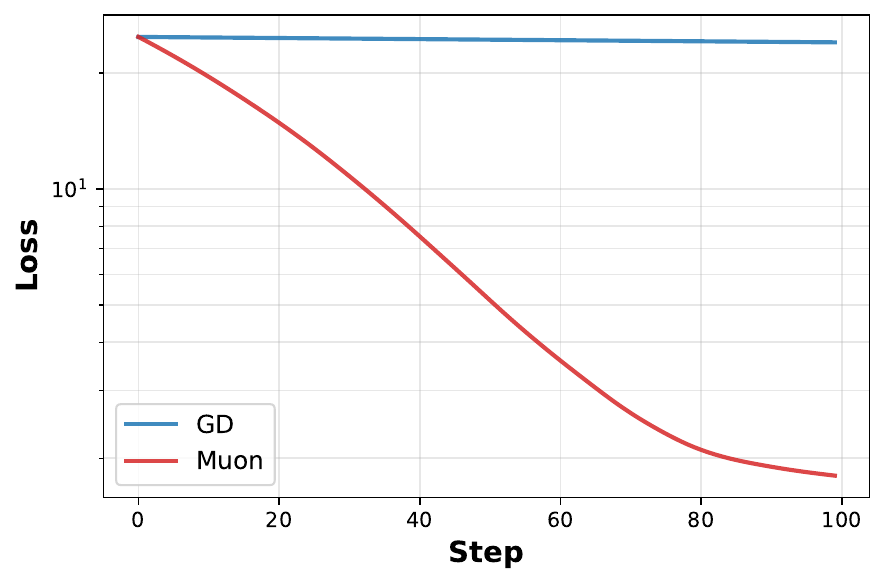}
         \caption{Loss dynamics with large random initialization}
     \end{subfigure}
     
     \caption{Loss comparison under different initializations.}
     \label{fig:random_init_loss}
\end{figure}

\begin{figure}[b]
    \centering
    \foreach \i in {00,01,02,03,04,05,06,07} {%
        \begin{minipage}{0.115\linewidth}%
            \centering
            \includegraphics[width=\linewidth]{figs/gd_small_random_init/step_\i.pdf}%
        \end{minipage}%
        \hfill
    }
    \par\vspace{2pt}
    {\small (a) GD, small random initialization}
    \par\vspace{10pt}

    \foreach \i in {00,01,02,03,04,05,06,07} {%
        \begin{minipage}{0.115\linewidth}%
            \centering
            \includegraphics[width=\linewidth]{figs/gd_larger_random_init/step_\i.pdf}%
        \end{minipage}%
        \hfill
    }
    \par\vspace{2pt}
    {\small (b) GD, larger random initialization}
    \par\vspace{10pt}

    \foreach \i in {00,01,02,03,04,05,06,07} {%
        \begin{minipage}{0.115\linewidth}%
            \centering
            \includegraphics[width=\linewidth]{figs/muon_small_random_init/step_\i.pdf}%
        \end{minipage}%
        \hfill
    }
    \par\vspace{2pt}
    {\small (c) Muon, small random initialization}
    \par\vspace{10pt}

    \foreach \i in {00,01,02,03,04,05,06,07} {%
        \begin{minipage}{0.115\linewidth}%
            \centering
            \includegraphics[width=\linewidth]{figs/muon_larger_random_init/step_\i.pdf}%
        \end{minipage}%
        \hfill
    }
    \par\vspace{2pt}
    {\small (d) Muon, larger random initialization}
    \par\vspace{10pt}

    \foreach \i in {00,01,02,03,04,05,06,07} {%
        \begin{minipage}{0.115\linewidth}%
            \centering
            \includegraphics[width=\linewidth]{figs/muon_zero_init/step_\i.pdf}%
        \end{minipage}%
        \hfill
    }
    \par\vspace{2pt}
    {\small (e) Muon, zero initialization}
    \par\vspace{4pt}

    \caption{Visualization of task-space update matrices across different initializations and steps. Each row corresponds to a different setting, showing the evolution at 8 time steps.}
    \label{fig:massive_visualization}
\end{figure}